\documentclass{article}

\usepackage[preprint]{neurips_2026}

\usepackage[utf8]{inputenc} 
\usepackage[T1]{fontenc}    
\usepackage{xcolor}         
\definecolor{darkgreen}{rgb}{0.1, 0.5, 0.1}
\definecolor{darkred}{rgb}{0.85, 0.1, 0.1}
\usepackage[colorlinks=true, citecolor=darkgreen, linkcolor=darkred, urlcolor=blue]{hyperref}       
\usepackage{url}            
\usepackage{booktabs}       
\usepackage{amsfonts}       
\usepackage{nicefrac}       
\usepackage{microtype}      
\usepackage{graphicx}       
\usepackage{amsmath}        
\usepackage{amsthm}         

\newtheorem{definition}{Definition}
\newtheorem{theorem}{Theorem}
\newtheorem{lemma}{Lemma}

\newtheorem{assumption}{Assumption}
\newtheorem{remark}{Remark}

\title{A Location-Invariant Estimator of Extremal Quantile Treatment Effects for Heavy-Tailed Distributions}

\author{%
  Xin Yu$^{1,2,*}$ \quad Shuwei Huang$^{1,2,*}$ \quad Jicheng Liu$^{1,\dagger}$ \\
  Jielin Tang$^{2}$ \quad Bolin Wang$^{2}$ \quad Yunxiao Zhang$^{2}$ \quad Tian Zhao$^{2}$ \\
  $^{1}$Huazhong University of Science and Technology \quad $^{2}$Tencent \\
  $^{*}$Equal contribution \quad $^{\dagger}$Corresponding author \\
  Contact: Jicheng Liu \texttt{\{jcliu@hust.edu.cn\}} \\
  Code: \url{https://github.com/Faze-Hsw/location_invariant_extremal_QTE_estimator} \\
}

\makeatletter
\renewcommand{\@maketitle}{%
  \vbox{%
    \hsize\textwidth
    \linewidth\hsize
    \vskip 0.1in
    \@toptitlebar
    \centering
    {\LARGE\bf \@title\par}
    \@bottomtitlebar
    \if@anonymous
      \begin{tabular}[t]{c}\rule{\z@}{24\p@}
        Anonymous Author(s) \\
        Affiliation \\
        Address \\
        \texttt{email} \\
      \end{tabular}%
    \else
      \def\And{%
        \end{tabular}\hfil\linebreak[0]\hfil%
        \begin{tabular}[t]{c}\rule{\z@}{24\p@}\ignorespaces%
      }
      \def\AND{%
        \end{tabular}\hfil\linebreak[4]\hfil%
        \begin{tabular}[t]{c}\rule{\z@}{24\p@}\ignorespaces%
      }
      \begin{tabular}[t]{c}\rule{\z@}{24\p@}\@author\end{tabular}%
    \fi
    \vskip 0.3in \@minus 0.1in
  }
}
\makeatother

\begin{document}

\maketitle

\begin{abstract}
  Quantile treatment effects (QTEs) measure the effect of a treatment on
  the distribution of an outcome, and their estimation at extreme
  quantile levels is of central interest in applications where the
  target quantiles lie far beyond the range of the data.  For
  heavy-tailed potential outcomes, existing extremal QTE estimators rely
  on extrapolation combined with a causal extreme value index (EVI)
  estimator, but the resulting estimator is not invariant under a common
  location shift of the potential outcome distributions, even though the
  population QTE is.  We address this issue in two steps.  First, we
  adapt the location-invariant Fraga estimator of the EVI to the causal
  setting using inverse propensity score weighting.  Second, we replace
  the original extrapolation formula with a difference-based scheme,
  under which the location parameter cancels when quantile differences
  are taken.  The resulting QTE estimator is therefore location
  invariant.  We establish the consistency and
  asymptotic normality of the proposed extremal QTE estimators, and
  provide a consistent variance estimator, leading to asymptotically
  valid inference.  A simulation study confirms the location invariance, the
  stability with respect to the threshold, and the coverage of the
  proposed methods.
\end{abstract}

\section{Introduction}


Quantile treatment effects (QTEs) describe the causal effect of a
treatment or policy intervention on the distribution of an outcome,
rather than on its mean, and they arise naturally in many real-world
applications.  For example, Abadie
et~al.~\cite{abadie2002quantiles} found that subsidized job training
raises the earnings of trainees only in the upper half of the earnings
distribution, Bitler et~al.~\cite{bitler2006mean} showed
that welfare reform experiments produce distributional effects on
family income that are missed by the average treatment effect, and
Autor et~al.~\cite{autor2016contribution} demonstrated
that minimum wage increases lift wages throughout the lower tail of the
wage distribution.  These examples show that an
intervention may affect different parts of the outcome distribution, in
particular its tails, very differently, so that focusing on averages
alone can be misleading.

To make this precise, we adopt the potential outcome framework.  Let
$D \in \{0, 1\}$ denote a binary treatment, and let $Y(j)$ be the
potential outcome of an individual under treatment $j \in \{0, 1\}$; the
observed outcome is $Y = D\,Y(1) + (1 - D)\,Y(0)$.  Denote by $F_j$ the
distribution function of $Y(j)$ and by
$q_j(\tau) = F_j^{\leftarrow}(\tau) = \inf\{y \in \mathbb{R} : F_j(y) \geq \tau\}$
the corresponding quantile function.  For $\tau \in (0, 1)$, the
\emph{quantile treatment effect} is defined as the difference
\begin{equation*}
  \Delta(\tau) = q_1(\tau) - q_0(\tau),
\end{equation*}
which measures the effect of the treatment on the $\tau$-quantile of the
outcome distribution.  As $\tau$ varies over $(0, 1)$, the curve
$\Delta(\tau)$ traces the effect of the treatment across the entire
outcome distribution, exposing heterogeneity that the average treatment
effect $\mathbb{E}\bigl[Y(1) - Y(0)\bigr]$, which collapses the
distribution into a single number, necessarily conceals.  Note that,
like the average treatment effect, the QTE is invariant under a common
location shift of the potential outcomes: if both $Y(1)$ and $Y(0)$ are
shifted by the same constant $c$, then the difference of the two
quantiles remains unchanged.

In practice, however, $\Delta(\tau)$ cannot be identified from
observational data without further assumptions, since for any unit only
one of the two potential outcomes $Y(0)$ and $Y(1)$ is observed, and
the treatment assignment $D$ is typically correlated with the
covariates $X \in \mathbb{R}^{p}$ that drive the outcomes.  Following
Rosenbaum and Rubin~\cite{rosenbaum1983central}, we invoke the
\emph{unconfoundedness} condition
$(Y(1), Y(0)) \perp\!\!\!\perp D \mid X$, which states that, conditional
on the covariates, the potential outcomes are independent of the
treatment.  Under this condition, the propensity score
$\pi(x) = \mathbb{P}(D = 1 \mid X = x)$ can be used to balance the
covariate distributions across treatment groups, and combining
unconfoundedness with the \emph{common support} requirement
$0 < \pi(x) < 1$ on the support of $X$ yields identification of
$\Delta(\tau)$ within the binary treatment setting.

Given this identification, an extensive literature has developed
estimation and inference methods for QTE.  For a fixed quantile level
$\tau \in (0, 1)$, Firpo~\cite{firpo2007efficient} proposed an
efficient semiparametric estimator of $\Delta(\tau)$ based on inverse
propensity score weighting (see Equation~\eqref{eq:qte_firpo} in
Section~\ref{sec:qte}).  However, it is confined to a fixed $\tau$:
deep in the tail, only very few observations exceed the level, so the
weighted empirical quantile becomes unreliable.  To move beyond fixed
levels, Zhang~\cite{zhang2018extremal} extended the framework to a
\emph{dynamic sequence} of quantile levels $\tau_n \to 0$ as
$n \to +\infty$.  Depending on the decay rate of $\tau_n$, two regimes
arise: for an \emph{intermediate level}, where $n\tau_n \to +\infty$,
the estimator remains asymptotically normal; for an \emph{extreme
level}, where $n\tau_n \to a \geq 0$, the expected number of
observations beyond the level is finite, so the empirical quantile is
based on only finitely many tail observations and asymptotic normality
breaks down.  Moreover, the QTE estimators of
Firpo~\cite{firpo2007efficient} and Zhang~\cite{zhang2018extremal} are
both invariant under a common location shift of the potential outcomes.

For many applications, however, the most relevant quantiles are
precisely those that go beyond the range of the data.  In climate
attribution studies, for instance, scientists investigate the causal
effect of anthropogenic influences on climate extremes such as heavy
precipitation; the quantiles of interest for such events typically go
far beyond the range of historical recordings and therefore require
extreme value extrapolation (e.g.,
Easterling et~al.~\cite{easterling2016detection};
van Oldenborgh et~al.~\cite{vanoldenborgh2017attribution}).
Formally, this corresponds to the \emph{extreme level} regime, where
$n\tau_n \to a \geq 0$, for which the fixed- and intermediate-level
estimators above are no longer applicable.

\begin{figure}[t]
  \centering
  \includegraphics[width=\linewidth]{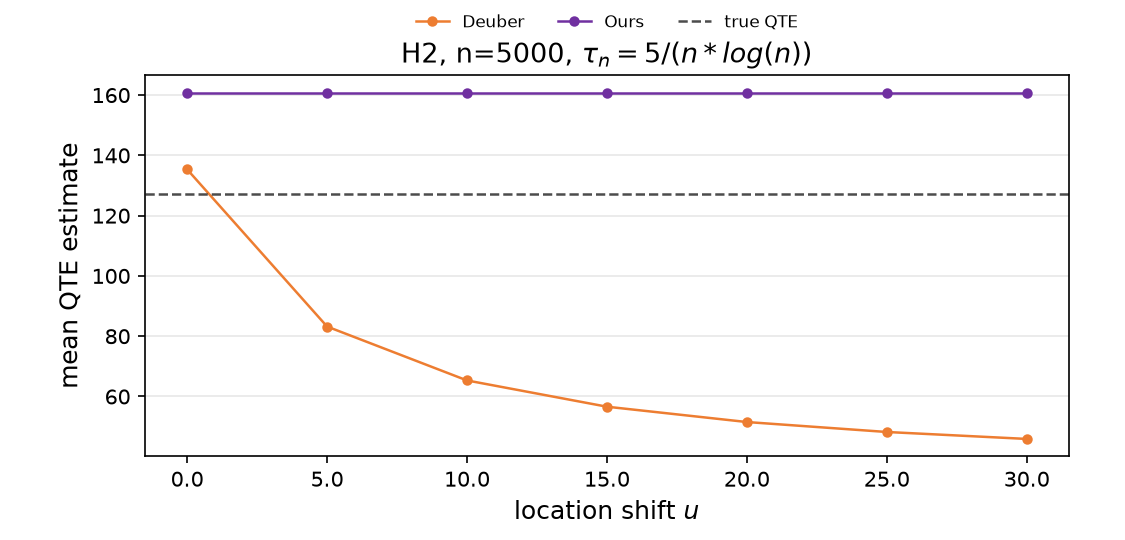}
  \caption{Mean of the quantile treatment effect estimates, averaged
  over $1000$ Monte Carlo replications, as a function of the location
  shift $u$ for model $H_2$ at $n = 5000$, at the extreme quantile
  level $\tau_n = 5/(n\log n)$, i.e.\ for the estimation of
  $\Delta(1-\tau_n)$.  The data generating process of model $H_2$ is
  described in Section~\ref{sec:simulations}.}
  \label{fig:intro_location_invariance}
\end{figure}

To estimate the QTE at such extreme levels,
Deuber et~al.~\cite{deuber2024estimation} focused on heavy-tailed
distributions, whose tail probabilities decay polynomially and are thus
heavier than Gaussian, and proposed an extrapolation-based estimator.
By the theory of regular variation of de Haan and
Ferreira~\cite{de2006extreme}, the extreme quantile of each
potential outcome can be extrapolated from an intermediate quantile as
\begin{equation*}
  q_j(1-\tau_n) \approx q_j(1-\alpha_n)
    \left( \frac{\alpha_n}{\tau_n} \right)^{\!\gamma_j},
    \qquad j \in \{0, 1\},
\end{equation*}
where $\gamma_j > 0$ is the extreme value index (EVI) of the potential
outcome $Y(j)$, $\alpha_n$ is an intermediate level with
$n\alpha_n \to +\infty$, and $\tau_n$ is an extreme level with
$n\tau_n \to a \geq 0$.  Their estimator plugs the inverse propensity
score weighted intermediate quantile estimator
of~Firpo~\cite{firpo2007efficient} and a newly proposed causal Hill
estimator of $\gamma_j$, based on the classical Hill
estimator~\cite{hill1975simple}, into this approximation, and they
established its asymptotic normality and provided a consistent variance
estimator.

Location invariance is a property that a QTE estimator should ideally
inherit from the QTE itself.  As noted above, $\Delta(\tau) =
q_1(\tau) - q_0(\tau)$ is unchanged when both potential outcomes are
shifted by the same constant; an estimator that instead depends on the
location of the data would therefore attribute to the treatment an
effect that is a mere artefact of the location.  In applications, where
the reference point or the measurement scale is often arbitrary, such
location dependence is undesirable.
However, the estimator of Deuber
et~al.~\cite{deuber2024estimation} lacks this invariance, and this for
two reasons that act jointly: first, its causal Hill estimator of the
extreme value index is not invariant to location shifts; second, even
with a location-invariant extreme value index estimator, the
multiplicative extrapolation formula itself remains location-dependent.
The present paper resolves both sources of location dependence.

To this end, two ingredients are required.  First, since the
extrapolation step hinges on the extreme value index, we need an
estimator of $\gamma_j$ that is both capable of causal identification
and invariant under location shifts.  Several classical extreme value
index estimators are location-invariant, including the Pickands
estimator~\cite{pickands1975statistical}, the moment
estimator of Dekkers et~al.~\cite{dekkers1989moment}, and
location-invariant Hill-type
constructions~\cite{fraga2001location,ling2012location,li2010asymptotic}.  In this paper
we build on the classical location-invariant estimator
of~Fraga~Alves~\cite{fraga2001location} and adapt it through inverse
propensity score weighting, so that the resulting estimator eliminates
confounding effects while inheriting the location invariance of its
classical counterpart; we refer to this estimator as the
\emph{causal Fraga} estimator.  We establish its consistency and
asymptotic normality under conditions that are weaker than those
required by the causal Hill estimator of Deuber
et~al.~\cite{deuber2024estimation}, whose validity relies on their
Assumptions~4, B1 and B2.

Second, even with a
location-invariant extreme value index estimator, the QTE estimator
obtained from the multiplicative extrapolation of Deuber
et~al.~\cite{deuber2024estimation} remains location-dependent, so the
extrapolation formula itself has to be modified.  From the theory of
regular variation of de Haan and Ferreira~\cite{de2006extreme},
however, an alternative form of extrapolation can be derived:
\begin{equation*}
  q_j(1-\tau_n)
  \approx q_j(1-\alpha_n)
    + \bigl[q_j(1-\beta_n) - q_j(1-\alpha_n)\bigr]
      \cdot
      \frac{(\alpha_n/\tau_n)^{\gamma_j} - 1}
           {(\alpha_n/\beta_n)^{\gamma_j} - 1},
\end{equation*}
where $\beta_n \to 0$ with $n\beta_n \to +\infty$, so that $\beta_n$ is
an intermediate level.  We call this the \emph{difference
extrapolation}, as the location parameter cancels in the differences of
the quantiles.  Combined with the location-invariant causal Fraga
estimator, this ensures that the resulting QTE estimator is
location-invariant.  Figure~\ref{fig:intro_location_invariance}
illustrates this point: while the QTE estimates of
Deuber et~al.~\cite{deuber2024estimation} drift away as the location
shift $u$ grows, the estimates based on our difference extrapolation
combined with the causal Fraga estimator remain stable across all
values of $u$.  Similarly, under weaker conditions than those of Deuber
et~al.~\cite{deuber2024estimation}, we establish the asymptotic
normality of the QTE estimators based on both the multiplicative and
the difference extrapolation schemes combined with the causal Fraga
estimator, whereas the estimator of Deuber et~al. requires additional
conditions (their Assumptions~4, B1, B2 and B3).  Moreover, in contrast
with the estimator of Deuber et~al., which has a non-zero
asymptotic bias, our QTE estimator is first-order unbiased.

The topic of causality for extreme events is receiving increasing
interest.  On the structural side,
Gissibl and~Kl{\"u}ppelberg~\cite{gissibl2018maxlinear} and
Gissibl et~al.~\cite{gissibl2018tail} introduced recursive max-linear
structural causal models that generate extreme observations, while
Mhalla et~al.~\cite{mhalla2020causal} and
Gnecco et~al.~\cite{gnecco2021extremal} developed causal structure
learning and conditional independence tests tailored to extremes.  In
climate science, a large literature on extreme event attribution
quantifies the causal effect of anthropogenic forcing on weather and
climate extremes, typically through model-based counterfactual
analyses~\cite{hannart2016causal,easterling2016detection,vanoldenborgh2017attribution,naveau2018revising,naveau2020statistical}.
On the more applied side, Bhuyan et~al.~\cite{bhuyan2021analysing}
quantified the causal effect of the London cycle superhighways on
extreme traffic congestion, albeit without theoretical guarantees.
Within this growing literature, our work adds a theoretically
justified estimator and inference procedure for the extremal QTE
$\Delta(1-\tau_n)$ in the presence of confounding, and in particular
supplements it with the important property that the estimator is
invariant under location shifts of the potential outcomes.

Our contributions are twofold.  First, we construct a
location-invariant \emph{causal Fraga estimator} of the extreme value
index via inverse propensity score weighting, and establish its
consistency and asymptotic normality under weaker conditions than those
required by the causal Hill estimator of Deuber
et~al.~\cite{deuber2024estimation}.  Second, building on this estimator,
we introduce a \emph{difference extrapolation} scheme in which the
location parameter cancels in the differences of the quantiles, thereby
obtaining a location-invariant estimator of the extremal QTE; we then
establish its asymptotic normality, provide a consistent variance
estimator, and construct asymptotically valid confidence intervals.  We
validate both contributions through a finite-sample simulation study,
confirming the location invariance, the stability with respect to the
threshold, and the coverage of the confidence intervals.

The remainder of the paper is organized as follows.  Section~\ref{sec:preliminaries}
recalls the necessary background on extreme value theory and
introduces the potential outcome framework together with the
identification conditions for the QTE.  Section~\ref{sec:causal_eviestimator}
constructs the causal Fraga estimator of the extreme value index and
establishes its consistency and asymptotic normality.  Section~\ref{sec:extremal_quantile}
develops the difference extrapolation estimator of the extremal QTE,
derives its asymptotic distribution, and provides a consistent
variance estimator for constructing asymptotically honest confidence
intervals.  Section~\ref{sec:simulations} reports the finite-sample
performance of the proposed methods through an extensive simulation
study.  In the appendix, Section~\ref{sec:proofs} collects all
technical proofs, Section~\ref{sec:mult_extrapolation} details the
corresponding multiplicative extrapolation estimator, and
Section~\ref{sec:additional_sim_results} presents additional
simulation results.

Throughout, and consistent with the literature on extreme value
theory, we focus on the extremal QTE in the upper tail, that is, on
$\Delta(1-\tau_n)$ with $\tau_n \to 0$.  The treatment effect on the
lower tail can be handled analogously by applying the same arguments to
the left tails of the potential outcome distributions.

\section{Preliminaries}
\label{sec:preliminaries}


\subsection{Extreme Value Theory}

We study extreme quantiles, namely quantiles at level $1-\tau_n$ with $\tau_n \to 0$ and $n\tau_n \to a \geq 0$ as $n \to +\infty$.
When $a = 0$, empirical estimates of extreme quantiles become severely biased and
classical asymptotic theory no longer applies.
Extreme value theory addresses this gap by providing methods for quantile
extrapolation that yield more accurate estimates of extreme quantiles; see
de Haan and Ferreira~\cite{de2006extreme}.
The theory rests on a mild distributional assumption that ensures the tail of
the distribution admits an accurate parametric approximation. Formally, let
$Y$ be a random variable with distribution $F$ and quantile function
$q(\cdot) = F^{\leftarrow}(\cdot)$, where
$f^{\leftarrow}(x) := \inf\{y \in \mathbb{R} : f(y) \geq x\}$ is the
left-continuous inverse of a nondecreasing function $f$.

\begin{definition}[Maximum domain of attraction and extreme value index (see De Haan and Ferreira~\cite{de2006extreme})]
\label{def:max_domain}
For $\gamma \in \mathbb{R}$, if there exist sequences $a_n > 0$ and
$b_n \in \mathbb{R}$, $n \in \mathbb{N}$, such that
\begin{equation*}
  \lim_{n \to +\infty} F^n(a_n x + b_n) = G_\gamma(x), \quad \forall x \in \mathbb{R},
\end{equation*}
where
\begin{equation*}
  G_\gamma(x) =
  \begin{cases}
    \exp\!\left(-(1 + \gamma x)^{-1/\gamma}\right), & 1 + \gamma x > 0,\ \gamma \neq 0, \\[4pt]
    \exp(-\exp(-x)),                               & x \in \mathbb{R},\ \gamma = 0,
  \end{cases}
\end{equation*}
then the distribution $F$ is said to belong to the maximum domain of
attraction of the generalized extreme value distribution $G_\gamma$, denoted
$F \in \mathcal{D}(G_\gamma)$, and the parameter $\gamma$ is called the
\emph{extreme value index}.
\end{definition}

This condition is not restrictive in practice: most commonly encountered
distributions---such as the Gaussian, Student-$t$, and beta families---fall
into the maximum domain of attraction of some $G_\gamma$. A systematic
treatment of the three regimes ($\gamma < 0$, $\gamma = 0$, $\gamma > 0$)
can be found in Resnick~\cite{resnick2007heavy} and Embrechts et al.~\cite{embrechts1997modelling}.

Let $Y_1, Y_2, \ldots, Y_n$ be independent and identically distributed
random variables drawn from a distribution $F \in \mathcal{D}(G_\gamma)$
and let $Y_{(1)} \leq Y_{(2)} \leq \cdots \leq Y_{(n)}$ denote the
corresponding order statistics. The integer $k = k_n \in \{1, \ldots, n-1\}$
is an intermediate sequence satisfying $k \to \infty$ and $k/n \to 0$ as
$n \to \infty$. We now recall four classical
estimators of the extreme value index $\gamma$.

When $\gamma > 0$, Hill~\cite{hill1975simple} proposed the celebrated Hill
estimator,
\begin{equation*}
  \hat{\gamma}^{H}(k) = \frac{1}{k} \sum_{i=0}^{k-1}
    \log \frac{Y_{(n-i)}}{Y_{(n-k)}}.
\end{equation*}
For the more general case $\gamma \in \mathbb{R}$, Dekkers
et~al.~\cite{dekkers1989moment} built on the Hill estimator and
proposed the classical moment estimator,
\begin{equation*}
  \hat{\gamma}^{M}(k) = M_n^{(1)}(k) + 1 - \frac{1}{2}
    \left(1 - \frac{\bigl(M_n^{(1)}(k)\bigr)^2}{M_n^{(2)}(k)}\right)^{-1},
\end{equation*}
where
\begin{equation*}
  M_n^{(j)}(k) = \frac{1}{k} \sum_{i=0}^{k-1}
    \left(\log \frac{Y_{(n-i)}}{Y_{(n-k)}}\right)^{\!j}, \quad j \in \{0, 1, 2\},
\end{equation*}
with $M_n^{(1)}(k) = \hat{\gamma}^{H}(k)$.
Pickands~\cite{pickands1975statistical} provided another estimator that
applies to any $\gamma \in \mathbb{R}$,
\begin{equation*}
  \hat{\gamma}^{P}(k) = \frac{1}{\log 2} \,
    \log \frac{Y_{(n-[k/4])}
                - Y_{(n-[k/2])}}
               {Y_{(n-[k/2])}
                - Y_{(n-[k])}},
\end{equation*}
where $[x]$ denotes the largest integer not exceeding $x$.
Inspired by Pickands's estimator, Fraga~Alves~\cite{fraga2001location}
proposed a location-invariant generalization of the Hill estimator via a
random shift:
\begin{equation*}
  \hat{\gamma}^{F}(k_0, k) = \frac{1}{k_0} \sum_{i=0}^{k_0 - 1}
    \log \frac{Y_{(n-i)} - Y_{(n-k)}}{Y_{(n-k_0)} - Y_{(n-k)}},
\end{equation*}
where $n \to +\infty$, $k_0 \to +\infty$, $k \to +\infty$,
$k_0 / k \to 0$ and $k / n \to 0$.

Since $\gamma$ is a shape parameter of the distribution $F$, it is
intrinsically unchanged by any location shift: for any constant $c \in
\mathbb{R}$, the extreme value index of $Y$ and that of $Y + c$ are
identical. However, the Hill and the moment estimators are not
location-invariant, since they are based on logarithms of ratios of the
observations themselves. By contrast, the Fraga estimator achieves
location invariance by subtracting the reference order statistic
$Y_{(n-k)}$, and the Pickands estimator is likewise location-invariant,
as it is built on differences of order statistics.
The extreme value index $\gamma$ controls the tail heaviness of the
distribution $F$: larger values of $\gamma$ correspond to heavier tails. In
this work, we focus on the heavy-tailed regime $\gamma > 0$. To make the
theoretical setting precise, we now give a formal definition of a
heavy-tailed distribution.

\begin{definition}[First-order regular variation (Deuber et al.~\cite{deuber2024estimation})]
\label{def:heavy_tailed}
For $\gamma > 0$, the distribution $F$ is called \emph{heavy-tailed} if
its tail $1 - F$ admits the first-order regular variation representation
\begin{equation}
  1 - F(x) = L(x) \, x^{-1/\gamma}, \quad \forall x > 0,
  \label{eq:heavy_tail}
\end{equation}
where $L(x)$ is a slowly varying function at infinity, that is,
\begin{equation*}
  \lim_{x \to +\infty} \frac{L(t x)}{L(x)} = 1, \quad \forall t > 0.
\end{equation*}
\end{definition}

We now present several equivalent characterizations of $F \in \mathcal{D}(G_\gamma)$
in the heavy-tailed regime $\gamma > 0$, following de Haan and
Ferreira~\cite{de2006extreme}.  According to de Haan and
Ferreira~\cite{de2006extreme}, $F \in \mathcal{D}(G_\gamma)$ holds if and only
if one of the following conditions is satisfied.
\begin{enumerate}
\def\theenumi{\roman{enumi}}
\def\labelenumi{(\theenumi)}
  \item For $\gamma > 0$,
  \begin{equation}
    \lim_{t \to +\infty} \frac{1 - F(t x)}{1 - F(t)} = x^{-1/\gamma},
    \quad \forall x > 0.
    \label{eq:equiv_first_order}
  \end{equation}

  \item For $\gamma > 0$,
  \begin{equation}
    \lim_{t \to +\infty} \frac{U(t x)}{U(t)} = x^{\gamma},
    \quad \forall x > 0,
    \label{eq:equiv_quantile}
  \end{equation}
  where
  \begin{equation*}
    U(t) =
    \begin{cases}
      0,                                     & 0 \leq t < 1, \\
      \left(\dfrac{1}{1 - F}\right)^{\!\leftarrow}\!(t), & t \geq 1,
    \end{cases}
  \end{equation*}
  with $\left(\dfrac{1}{1 - F}\right)^{\!\leftarrow}\!(t)
  = \inf\!\left\{y \,\middle|\, \dfrac{1}{1 - F(y)} \geq t\right\}$.

  \item For $\gamma > 0$, there exists a function $a(t) > 0$ such that
  \begin{equation}
    \lim_{t \to +\infty} \frac{U(t x) - U(t)}{a(t)}
      = \frac{x^{\gamma} - 1}{\gamma}, \quad \forall x > 0.
    \label{eq:equiv_second_order}
  \end{equation}
\end{enumerate}

Since Equation~\eqref{eq:heavy_tail} and Equation~\eqref{eq:equiv_first_order}
are equivalent, whenever $F \in \mathcal{D}(G_\gamma)$ and $\gamma > 0$,
the distribution $F$ is heavy-tailed and its tail $1 - F$ admits the
first-order regular variation property. By Equation~\eqref{eq:equiv_quantile},
this property can be equivalently expressed through the tail quantile
function $U(t)$, which provides the theoretical foundation for the
consistency of the extreme value index estimators introduced above. To
establish the asymptotic normality of these estimators, however, an
additional rate condition on $U(t)$ is required; this is captured by a
second-order regular variation property, which we state next.

\begin{definition}[Second-order regular variation (De Haan and Ferreira~\cite{de2006extreme})]
\label{def:second_order_regular_variation}
For $\gamma > 0$ and $\rho \leq 0$, the function $U(t)$ is said to
possess \emph{second-order regular variation} with parameters
$\langle \gamma, \rho \rangle$ if there exists a function $A(t)$ of
constant sign tending to zero as $t \to +\infty$ such that, for all
$x > 0$,
\begin{equation*}
  \lim_{t \to +\infty}
    \frac{U(t x) / U(t) - x^{\gamma}}{A(t)} =
    \begin{cases}
      x^{\gamma} \, \dfrac{x^{\rho} - 1}{\rho}, & \rho < 0, \\
      x^{\gamma} \log x,                         & \rho = 0.
    \end{cases}
\end{equation*}
Here $\gamma$ is the \emph{first-order} parameter,
$\rho$ is the \emph{second-order} parameter, and $A$ is the
\emph{second-order auxiliary function}.
\end{definition}

Definition~\ref{def:second_order_regular_variation} imposes a stronger condition than
the first-order regular variation in Definition~\ref{def:heavy_tailed}.
For most commonly encountered distributions, the auxiliary function $A(t)$
and the second-order parameter $\rho$ can be computed explicitly; see
Alves et al.~\cite{alves2007note} for a detailed treatment.

\subsection{Quantile Treatment Effect}
\label{sec:qte}

This subsection reviews classical methods for estimating
QTEs, which provide the analytical and methodological
foundation for the extremal QTEs studied in the
main text. We first introduce the QTE within the potential
outcome framework of Rosenbaum and Rubin~\cite{rosenbaum1983central}, making
explicit the identifying assumptions required for causal interpretation.
We then describe the classical fixed-quantile estimator
of~Firpo~\cite{firpo2007efficient}, and finally recall the additional
assumptions needed to extend the framework to dynamic sequences of
quantiles as developed by~Zhang~\cite{zhang2018extremal}. The detailed estimation procedure for
dynamic QTE is deferred to
Section~\ref{sec:extremal_quantile}. The assumptions
presented in this subsection will be used throughout the paper.

Within the potential outcome framework, let $D \in \{0, 1\}$ denote the
treatment variable and let $Y \in \mathbb{R}$ denote the outcome
variable. For $j \in \{0, 1\}$, the potential outcome of $Y$ when
$D = j$ is written as $Y(j)$.

\begin{definition}[QTE]
\label{def:qte}
Let $F_0(y)$ and $F_1(y)$ denote the distribution functions of the
potential outcomes $Y(0)$ and $Y(1)$, respectively. For
$\tau \in (0, 1)$, the \emph{$\tau$-QTE} is
defined as
\begin{equation*}
  \Delta(\tau) = q_1(\tau) - q_0(\tau),
\end{equation*}
where, for $j \in \{0, 1\}$, the \emph{quantile function}
$q_j(\tau)$ is given by
\begin{equation}
  q_j(\tau) = F_j^{\leftarrow}(\tau)
            = \inf\{y \in \mathbb{R} : F_j(y) \geq \tau\}.
  \label{eq:qte_quantile}
\end{equation}
For $j \in \{0, 1\}$, by the quantile function
definition, Equation~\eqref{eq:qte_quantile} and the tail quantile function
$U_j(\cdot)$, we have
\begin{equation*}
  q_j(1 - \tau) = \inf\{y \in \mathbb{R} : F_j(y) \geq 1 - \tau\}
               = U_j(1 / \tau).
\end{equation*}
\end{definition}

Causal inference from observational data faces two key challenges. First,
the problem of missing counterfactual outcomes: each sample unit is
observed under only one treatment status, and the observed outcome is
related to the potential outcomes through
$Y = D Y(1) + (1 - D) Y(0)$. Second, causal effects are easily
confounded: one or more covariates simultaneously affect both the
treatment and the outcome, blurring or distorting their relationship.
Concretely, confounding arises when the covariate distributions differ
across treatment groups, since the observed difference in outcomes then
reflects both the treatment effect and the compositional imbalance.
To address these issues and identify causal effects from observational
data, additional assumptions are required.

\begin{assumption}
\label{asm:causal_identification}
Let $X$ be the covariate vector and $\mathrm{supp}(X)$ its compact support.
\begin{enumerate}
\def\theenumi{\roman{enumi}}
\def\labelenumi{(\theenumi)}
  \item \emph{Unconfoundedness:}
        $\bigl(Y(1), Y(0)\bigr) \perp D \mid X$, i.e., conditional on $X$,
        the potential outcomes are independent of the treatment.
  \item \emph{Common support:}
        there exists a constant $c$ with $0 < c < 1$ such that, for every
        $x \in \mathrm{supp}(X)$, $c < \pi(x) < 1 - c$, where
        $\pi(x) = \mathbb{P}(D = 1 \mid X = x)$ is the propensity score.
\end{enumerate}
\end{assumption}

These are two standard assumptions in the causal inference
literature; see Rosenbaum and Rubin~\cite{rosenbaum1983central},
Zhang~\cite{zhang2018extremal} and Deuber et al.~\cite{deuber2024estimation}.
The unconfoundedness assumption means that, conditional on all observed
confounders, the treatment assignment is independent of the potential
outcomes; the common support assumption guarantees that, for any value
of the covariates, no sample unit is restricted to be assigned to only
the treated or only the control group.

For a fixed quantile level $\tau \in (0, 1)$,
Firpo~\cite{firpo2007efficient} proposed the following estimator under
Assumption~\ref{asm:causal_identification}. By inverse propensity score weighting to
adjust for confounding and minimizing the empirical quantile loss, the
$\tau$-quantile estimator of the potential outcome $Y(j)$ is given by
\begin{equation}
  \widehat{q}_j(\tau)
    = \mathop{\mathrm{argmin}}_{q \in \mathbb{R}}
      \sum_{i=1}^{n}
        \left(\frac{D_i}{\widehat{\pi}(X_i)}\right)^{\!j}
        \left(\frac{1 - D_i}{1 - \widehat{\pi}(X_i)}\right)^{\!1 - j}
        \,(Y_i - q)\,\bigl(\tau - \mathbf{1}_{\{Y_i \leq q\}}\bigr),
  \label{eq:qte_firpo}
\end{equation}
and the corresponding $\tau$-QTE estimator is
\begin{equation*}
  \widehat{\Delta}(\tau) = \widehat{q}_1(\tau) - \widehat{q}_0(\tau).
\end{equation*}
Here, $(D_i, Y_i, X_i)_{i=1}^{n}$ are independent observations drawn
from the population $(D, Y, X)$, and $\widehat{\pi}(X_i)$ is an
estimator of the propensity score
$\pi(X_i) = \mathbb{P}(D = 1 \mid X = X_i)$. In this work, we follow
the nonparametric sieve approach used in Firpo~\cite{firpo2007efficient},
Zhang~\cite{zhang2018extremal} and Deuber et al.~\cite{deuber2024estimation}
to estimate $\pi(X_i)$; the required assumptions are stated next
(see also Hirano et~al.~\cite{hirano2003efficient}).

\begin{assumption}
\label{asm:sieve_assumptions}
Let $X$ be the covariate vector and $\mathrm{supp}(X)$ its compact support.
\begin{enumerate}
\def\theenumi{\roman{enumi}}
\def\labelenumi{(\theenumi)}
  \item The covariate $X$ is an $r$-dimensional continuous random variable
        with density $f_X(x)$, and there exists $0 < d < 1$ such that
        $d < f_X(x) < 1/d$ for all $x \in \mathrm{supp}(X)$.
  \item The propensity score function $\pi(X)$ is $s$-times
        continuously differentiable with $s \geq 4r$, and derivatives of all orders are bounded.
  \item For $j \in \{0, 1\}$, the conditional expectation
        $\mathbb{E}\!\bigl(\tau_n - \mathbf{1}_{\{Y(j) > q_j(1 - \tau_n)\}}
          \mid x\bigr)$ is $t$-times continuously differentiable in $x$
        with $t \in \mathbb{Z}^+$, and all partial derivatives are
        uniformly bounded by $M_n$ for $x \in \mathrm{supp}(X)$.
  \item Let $h_n > 0$ be a sieve complexity parameter and
        let $\pi_{h_n}(x) = L(H_{h_n}(x)^T \pi_n)$ be the sieve
        approximation of $\pi(x)$, where $L$ is the sigmoid function
        $L(u) = 1/(1 + e^{-u})$ and
        $H_{h_n} = (H_{h_n,1}, \dots, H_{h_n,h_n})^T$ is the vector of
        sieve basis functions on $\mathbb{R}^r$.  Define
        $\zeta(h_n) = \sup_{x \in \mathrm{supp}(X)} \|H_{h_n}(x)\|$
        and assume the following hold:
        \begin{equation*}
          \frac{\zeta(h_n)^{2} \, h_n}{\sqrt{n}} \to 0, \quad
          \frac{\tau_n \, \zeta(h_n)^{10} \, h_n}{n} \to 0, \quad
          n \tau_n \zeta(h_n)^{6} \, h_n^{-s/r} \to 0, \quad
          \frac{n M_n}{\tau_n \, h_n^{t/r}} \to 0.
        \end{equation*}
\end{enumerate}
\end{assumption}

For more discussion and details on
Assumption~\ref{asm:sieve_assumptions}, we refer to~Zhang~\cite{zhang2018extremal}.

Zhang~\cite{zhang2018extremal}, building on the classical
Firpo~\cite{firpo2007efficient} estimator, extended the fixed-quantile
framework to a dynamic sequence of quantiles $\tau_n \to 0$ as
$n \to +\infty$, a procedure also adopted
by~Deuber et al.~\cite{deuber2024estimation} and this work. Zhang showed
that, for an intermediate level, i.e.\ $n\tau_n \to +\infty$, the
difference $\widehat{q}_1(1-\tau_n) - \widehat{q}_0(1-\tau_n)$ of the
estimated quantiles is asymptotically normal. For an extreme level, i.e.\
$n\tau_n \to a \geq 0$, however, asymptotic normality no longer holds.
Subsequently, Deuber et al.~\cite{deuber2024estimation} obtained an
extremal QTE estimator that is asymptotically
normal at extreme levels, by making use of an asymptotic tail
approximation. In theory, the QTE is
location-invariant: applying the same location shift to the potential
outcome distributions does not change the QTE.
Deuber's extremal QTE estimator, however, does
not possess this property. We now state the additional assumptions on the potential
outcome distributions imposed by~Zhang~\cite{zhang2018extremal} in this extension.

\begin{assumption}
\label{asm:potential_outcome_distributions}
For $j \in \{0, 1\}$,
\begin{enumerate}
\def\theenumi{\roman{enumi}}
\def\labelenumi{(\theenumi)}
  \item the potential outcome $Y(j)$ is a continuous random variable with
        density $f_j(y)$, and $f_j(y)$ is monotonically decreasing on its
        right tail;
  \item the cumulative distribution function of $Y(j)$ is $F_j(y)$, and
        $F_j(y) \in \mathcal{D}(G_\gamma)$;
  \item conditional on the covariate $X$, the potential outcome
        $Y(j) \mid X$ is a continuous random variable with conditional
        density $f_{j \mid X}(y)$.
\end{enumerate}
\end{assumption}

\section{Causal Extreme Value Index Estimator}
\label{sec:causal_eviestimator}

Building on the extreme value theory and the QTE identification
conditions reviewed in Section~\ref{sec:preliminaries}, this section
carries out the first step of our construction.  To endow the extremal
QTE estimator with location invariance, we first need to construct an
EVI estimator that is itself location-invariant and, at the same time,
capable of causal identification.  To this end, we build on the
classical Fraga estimator of
Fraga~Alves~\cite{fraga2001location} and adapt it through inverse
propensity score weighting, so that it eliminates confounding effects.
This section thus presents the main methodological contribution of the
paper: a \emph{causal Fraga estimator} of the extreme value index
$\gamma$.  We first construct the proposed estimator, then establish its
weak consistency and asymptotic normality.

\subsection{Definition of the Causal Fraga Estimator}
\label{sec:causal_fraga_form}

We now introduce the proposed causal Fraga estimator. Let
$\alpha_n \to 0$ with $k = n \alpha_n \to +\infty$, $\beta_n \to 0$
with $k_0 = n \beta_n \to +\infty$ and $\beta_n / \alpha_n \to 0$.
For $j \in \{0, 1\}$, the causal Fraga estimators of
$\gamma$ are
\begin{equation}
  \widehat{\gamma}_{1}^{F}(\beta_n, \alpha_n)
    = \frac{1}{n \beta_n} \sum_{i=1}^{n}
      \frac{D_i}{\widehat{\pi}(X_i)}
      \mathbf{1}_{\{Y_i > \widehat{q}_1(1 - \beta_n)\}}
      \log \frac{Y_i - \widehat{q}_1(1 - \alpha_n)}
                {\widehat{q}_1(1 - \beta_n) - \widehat{q}_1(1 - \alpha_n)},
  \label{eq:causal_fraga_treated}
\end{equation}
and
\begin{equation}
  \widehat{\gamma}_{0}^{F}(\beta_n, \alpha_n)
    = \frac{1}{n \beta_n} \sum_{i=1}^{n}
      \frac{1 - D_i}{1 - \widehat{\pi}(X_i)}
      \mathbf{1}_{\{Y_i > \widehat{q}_0(1 - \beta_n)\}}
      \log \frac{Y_i - \widehat{q}_0(1 - \alpha_n)}
                {\widehat{q}_0(1 - \beta_n) - \widehat{q}_0(1 - \alpha_n)}.
  \label{eq:causal_fraga_control}
\end{equation}
Here, the quantile estimators $\widehat{q}_j(1 - \beta_n)$ and
$\widehat{q}_j(1 - \alpha_n)$ are given
by Equation~\eqref{eq:qte_firpo}, and the propensity score
$\widehat{\pi}(X_i)$ appearing
in Equation~\eqref{eq:causal_fraga_treated}
and Equation~\eqref{eq:causal_fraga_control} is estimated by a nonparametric
sieve method (see~\cite{zhang2018extremal,deuber2024estimation} for details). Throughout the paper we refer to $\alpha_n$ (or $k$) as the
\emph{primary threshold parameter} and to $\beta_n$ (or $k_0$) as the
\emph{auxiliary threshold parameter}.

\begin{remark}
\label{rem:fraga_location_invariance}
The causal Fraga estimator defined in
Equations~\eqref{eq:causal_fraga_treated}
and~\eqref{eq:causal_fraga_control} is exactly location-invariant:
for any constant $c \in \mathbb{R}$, replacing each $Y_i$ by $Y_i + c$
leaves $\widehat{\gamma}_j^{F}(\beta_n, \alpha_n)$ unchanged. Indeed,
the quantile shift cancels in both the indicator
$\mathbf{1}_{\{Y_i > \widehat{q}_j(1-\beta_n)\}}$ and the log-ratio
$\log \frac{Y_i - \widehat{q}_j(1-\alpha_n)}
           {\widehat{q}_j(1-\beta_n) - \widehat{q}_j(1-\alpha_n)}$.
This property, inherited from the random-shift construction of
Fraga~\cite{fraga2001location} and Ling et al.~\cite{ling2012location},
distinguishes the causal Fraga estimator from the causal Hill estimator proposed in Deuber et al.~\cite{deuber2024estimation},
which is sensitive to the location of the threshold.
\end{remark}

\subsection{Asymptotic Properties of the Causal Fraga Estimator}
\label{sec:causal_fraga_asymptotic}

We first show that the proposed causal Fraga estimator is consistent under
the assumptions introduced in Section~\ref{sec:qte}. It is worth noting
that Deuber et~al.~\cite{deuber2024estimation} proved the weak
consistency of their causal Hill estimator using the same assumptions as
Theorem~3.1 of~Zhang~\cite{zhang2018extremal}, whereas the present work
establishes the weak consistency of the proposed causal Fraga estimator
under more relaxed conditions, thereby further broadening the
applicability of causal extreme value index estimation.

\begin{theorem}
\label{thm:causal_fraga_consistency}
Assume Assumptions~\ref{asm:causal_identification}--\ref{asm:potential_outcome_distributions} hold.
As $n \to +\infty$, suppose $\alpha_n \to 0$ with
$k = n \alpha_n \to +\infty$, and $\beta_n \to 0$ with
$k_0 = n \beta_n \to +\infty$ and $\beta_n / \alpha_n \to 0$. Then, for
each $j \in \{0, 1\}$, the extreme value index $\gamma_j > 0$ satisfies
\begin{equation*}
  \widehat{\gamma}_j^{F}(\beta_n, \alpha_n) \xrightarrow{p} \gamma_j,
\end{equation*}
where $\widehat{\gamma}_j^{F}(\beta_n, \alpha_n)$ is defined in
Equation~\eqref{eq:causal_fraga_treated}
and Equation~\eqref{eq:causal_fraga_control}.
\end{theorem}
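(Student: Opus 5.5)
Fix $j=1$; the case $j=0$ is symmetric, with $D_i/\widehat\pi(X_i)$ replaced by $(1-D_i)/(1-\widehat\pi(X_i))$. The plan has three steps: reduce to the true propensity score and deterministic thresholds, compute the mean of the resulting weighted sum, and show its variance vanishes.

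\textbf{Step 1 (intermediate quantiles).} We use the result of Zhang~\cite{zhang2018extremal} (Theorem~3.1 there), valid under Assumptions~\ref{asm:causal_identification}--\ref{asm:potential_outcome_distributions}. For any intermediate sequence $\tau_n$, it gives
\begin{equation*}
\sqrt{n\tau_n}\,\bigl(\widehat q_1(1-\tau_n)/q_1(1-\tau_n)-1\bigr)=O_p(1).
\end{equation*}
We also use that the sieve estimator $\widehat\pi$ is uniformly consistent.
\begin{itemize}
\item With $\tau_n=\beta_n$ this gives $\widehat q_1(1-\beta_n)=U_1(1/\beta_n)(1+o_p(1))$.
\item With $\tau_n=\alpha_n$ it gives $\widehat q_1(1-\alpha_n)=U_1(1/\alpha_n)(1+o_p(1))$.
\item Since $\beta_n/\alpha_n\to0$ and $U_1$ is regularly varying with index $\gamma_1>0$ by \eqref{eq:equiv_quantile}, we get $U_1(1/\alpha_n)/U_1(1/\beta_n)\to0$.
\end{itemize}
Hence
\begin{equation*}
\widehat q_1(1-\alpha_n)=o_p\bigl(U_1(1/\beta_n)\bigr),\qquad \widehat q_1(1-\beta_n)-\widehat q_1(1-\alpha_n)=U_1(1/\beta_n)(1+o_p(1)).
\end{equation*}
Put $u_n:=U_1(1/\beta_n)$. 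Inside the indicator, the log numerator can then be written as $Y_i-\widehat q_1(1-\alpha_n)=Y_i\,(1+o_p(1))$ uniformly over $Y_i>\widehat q_1(1-\beta_n)$, because there $Y_i\gtrsim u_n$. The log-ratio is therefore $\log(Y_i/u_n)+o_p(1)$ uniformly on the summation set. This is why only first-order conditions are needed: the shift by the $\alpha_n$-quantile is asymptotically negligible.

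\textbf{Step 2 (reduction to an oracle weighted Hill-type sum).} The estimator can now be written as
\begin{equation*}
\widehat\gamma_1^F=\frac{1}{n\beta_n}\sum_i \frac{D_i}{\widehat\pi(X_i)}\mathbf 1_{\{Y_i>\widehat q_1(1-\beta_n)\}}\log\frac{Y_i}{u_n}+o_p(1)\cdot\frac{1}{n\beta_n}\sum_i\frac{D_i}{\widehat\pi(X_i)}\mathbf 1_{\{Y_i>\widehat q_1(1-\beta_n)\}}.
\end{equation*}
The last factor is $1+o_p(1)$ by the defining first-order condition of \eqref{eq:qte_firpo}: the weighted tail fraction equals $\beta_n$ up to $O_p(1/n)$ per observation, divided by $n\beta_n$. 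Next we replace $\widehat\pi$ by $\pi$ using common support and uniform consistency, and $\widehat q_1(1-\beta_n)$ by $u_n$. The replacement of the threshold is justified by three facts:
\begin{itemize}
\item $\widehat q_1(1-\beta_n)/u_n\to_p1$;
\item monotonicity of the summand in the threshold;
\item a sandwich with deterministic thresholds $u_n(1\pm\varepsilon)$.
\end{itemize}

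\textbf{Step 3 (LLN for the oracle).} Let
\begin{equation*}
T_n=\frac{1}{n\beta_n}\sum_i\frac{D_i}{\pi(X_i)}\mathbf 1_{\{Y_i>cu_n\}}\log\frac{Y_i}{u_n}.
\end{equation*}
By unconfoundedness, $E\bigl[\tfrac{D}{\pi(X)}g(Y)\bigr]=E[g(Y(1))]$. Hence
\begin{equation*}
E T_n=\beta_n^{-1}E\bigl[\mathbf 1_{\{Y(1)>cu_n\}}\log(Y(1)/u_n)\bigr].
\end{equation*}
Write this as $\beta_n^{-1}\int_{c}^{\infty}\bar F_1(u_n s)\,ds/s+\bar F_1(cu_n)\log c/\beta_n$. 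By \eqref{eq:equiv_first_order} and Potter bounds for dominated convergence, it tends to $\int_c^\infty s^{-1/\gamma_1-1}ds+c^{-1/\gamma_1}\log c$, which is $\gamma_1 c^{-1/\gamma_1}$ plus terms that vanish as $c\to1$. The variance is bounded, using $\pi\ge c_0$, by $(n\beta_n)^{-1}\beta_n^{-1}E[\mathbf 1_{\{Y(1)>cu_n\}}\log^2(Y(1)/u_n)]=O(1/k_0)\to0$. Letting $\varepsilon\to0$ in the sandwich then gives $\widehat\gamma_1^F\to_p\gamma_1$.

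\textbf{Main obstacle.} The delicate part is Step 2: controlling the random threshold $\widehat q_1(1-\beta_n)$ together with the estimated weights $\widehat\pi$ inside a sum of unbounded logarithms. The first attempt is the monotone sandwich just described, combined with $\sup_x|\widehat\pi-\pi|=o_p(1)$. If that fails, the fallback is to treat the process $s\mapsto (n\beta_n)^{-1}\sum_i\frac{D_i}{\pi(X_i)}\mathbf 1_{\{Y_i>u_n s\}}$ as a weighted tail empirical process. One would show it converges uniformly on $[1/2,\infty)$ to $s^{-1/\gamma_1}$, via the same mean–variance computation plus monotonicity (Glivenko–Cantelli type), and then integrate.
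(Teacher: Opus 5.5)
Your proof is correct, but it is organised differently from the paper's.

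\emph{The paper's route.} It keeps the shifted log-ratio $\log\frac{Y_i-q_1(1-\alpha_n)}{q_1(1-\beta_n)-q_1(1-\alpha_n)}$ in its leading term $A_{n,1}^{1}$. It brackets this by $\epsilon^{\pm}+(\gamma_1\pm\epsilon_1)\log(Z_i(1)\beta_n)$, using a Potter-type inequality for $\frac{U_1(tx)-U_1(t)}{U_1(ty)-U_1(t)}$ at $t=\alpha_n^{-1}$ (de Haan and Ferreira, Prop.~B.1.9). It then uses that $Z_i(1)=1/(1-F_1(Y_i(1)))$ is standard Pareto, so the moments of $\mathbf 1\{Z_i(1)>\beta_n^{-1}\}\log(Z_i(1)\beta_n)$ are exact. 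The plug-in errors go into four further terms $A_{n,1}^{2},\dots,A_{n,1}^{5}$. The random threshold is controlled through the first-order condition of the weighted quantile (Lemma~\ref{lem:A.3}) and a per-observation sandwich.

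\emph{Your route.} You note at the outset that $\widehat q_1(1-\alpha_n)=o_p(u_n)$. So the shift is negligible uniformly on the summation set, and the estimator is, to first order, a weighted Hill statistic at level $\beta_n$. Its mean then follows from regular variation of $\bar F_1$ via Potter bounds and dominated convergence, and deterministic brackets remove the random threshold.

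\emph{Comparison.} Your argument is shorter and shows directly that consistency needs only first-order regular variation and $k_0/k\to0$: the shift plays no role, just as for Hill. It also avoids the delicate uniform inequality for differences of $U_1$ on a sliding window. The paper's route is heavier, but it builds the $Z$-representation and the five-term decomposition that are reused in the proof of Lemma~\ref{thm:causal_fraga_equation}. There, the very shift you discard produces the bias term $b_j(k_0/k)^{\gamma_j}$.

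\emph{One step to tighten.} The map $t\mapsto\mathbf 1\{Y_i>t\}\log(Y_i/u_n)$ is not monotone, because the logarithm is negative for $Y_i<u_n$. In fact the limit $c^{-1/\gamma_1}(\gamma_1+\log c)$ of your $T_n$ is maximised at $c=1$. So $T_n$ at $c=1\pm\varepsilon$ cannot bracket the random-threshold sum from both sides. There are two easy fixes:
\begin{enumerate}
\item Bracket the genuinely monotone quantity $(\log(Y_i/t))_+$ before replacing the denominator by $u_n$. This is the natural Fraga form, since after your Step~1 the summand is $(\log(Y_i/\widehat q_1(1-\beta_n)))_+$ up to a uniform $o_p(1)$. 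Its normalised mean at $t=cu_n$ tends to $\gamma_1c^{-1/\gamma_1}$.
\item Keep your $T_n$ and add the band $\{(1-\varepsilon)u_n<Y_i\le(1+\varepsilon)u_n\}$ as an explicit error term. Its weighted count converges to a quantity of order $\varepsilon$, and the log factor on it is $O(\varepsilon)$.
\end{enumerate}
Either fix closes the argument.
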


To establish the asymptotic normality of the causal Fraga estimator,
we further introduce Assumption~\ref{asm:second_order}, which is a
second-order regular variation assumption.  It is worth
noting that this condition is routinely satisfied by common
heavy-tailed distributions such as the Pareto and Cauchy families, which
ensures the applicability of our theoretical results to typical
heavy-tail scenarios including extreme climate analysis and risk
management.

\begin{assumption}
\label{asm:second_order}
For $j \in \{0, 1\}$, let
$U_j(x) = \bigl(\tfrac{1}{1 - F_j}\bigr)^{\leftarrow}(x)$
denote the tail quantile function of the potential outcome $Y(j)$.
We assume that $U_j(x)$ possesses second-order regular variation (see Definition~\ref{def:second_order_regular_variation})
with parameters $(\gamma_j, \rho_j)$, where the extreme value index
satisfies $\gamma_j > 0$, the second-order parameter satisfies
$\rho_j \le 0$, and the second-order auxiliary function is $A_j$.
\end{assumption}

With these assumptions in place, we now present the key decomposition
of the causal Fraga estimator, which underpins the asymptotic normality
established below.

\begin{lemma}
\label{thm:causal_fraga_equation}
Let Assumptions~\ref{asm:causal_identification}--\ref{asm:second_order} hold.
As $n \to +\infty$, suppose $\alpha_n \to 0$ with $k = n \alpha_n \to +\infty$,
$\beta_n \to 0$ with $k_0 = n \beta_n \to +\infty$, and $\beta_n / \alpha_n \to 0$.
Then, for each $j \in \{0, 1\}$,
\begin{equation}\label{eq:thm3_decomp}
  \widehat{\gamma}_{j}^{F}(\beta_n, \alpha_n)
  = \gamma_j
  + \gamma_j a_{n,j} \frac{P_{n,j}}{\sqrt{k_0}}
  + b_j \Bigl(\frac{k_0}{k}\Bigr)^{\!\gamma_j} (1 + o_p(1))
  + c_j A_j\Bigl(\frac{n}{k}\Bigr)
         \Bigl(\frac{k_0}{k}\Bigr)^{\!-\rho_j} (1 + o_p(1))
  + A_{n,j},
\end{equation}
where $P_{n,j} \xrightarrow{d} \mathcal{N}(0, 1)$,
$b_j = \gamma_j / (1 + \gamma_j)$, and
$c_j = 1 / (1 - \rho_j)$; the normalized second moment
$a_{n,j}^{2} = n\,\mathbb{E}(R_{n,j}^{2}) / k_0$, where
\begin{equation*}
  R_{n,j} = \left(\frac{D}{\pi(X)}\right)^{\!j}
    \left(\frac{1 - D}{1 - \pi(X)}\right)^{\!1-j}
    \mathbf{1}_{\{Z(j) > \beta_n^{-1}\}}
    \log(Z(j)\,\beta_n) - \beta_n
\end{equation*}
with $Z(j) = 1 / (1 - F_j(Y(j)))$; moreover,
$a_{n,j}^{2}$ lies in the interval
$[\,\frac{2}{1-c} - \beta_n,\; \frac{2}{c} - \beta_n\,]$, so that
$a_{n,j} = O(1)$ is guaranteed by the boundedness of the
propensity score (Assumption~\ref{asm:causal_identification}); and
$A_{n,j} = o_p(1)$ collects the
remainder terms of the five-component decomposition of
$\widehat{\gamma}_j^{F}$, whose explicit form is given in the proof of
this lemma (Section~\ref{sec:causal_fraga_equation_proof}).
\end{lemma}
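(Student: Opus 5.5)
We would follow the route of Deuber et al.\ for the causal Hill estimator and Ling et al.\ for the classical Fraga estimator. The estimator is first split into an oracle part, where the estimated quantiles and propensity score are replaced by their population counterparts, and a perturbation part. Write $\widehat{q}_{\alpha} = \widehat{q}_j(1-\alpha_n)$, $\widehat{q}_{\beta} = \widehat{q}_j(1-\beta_n)$, and let $W_i$ denote the IPW weight with $\widehat\pi$. Using $\log\frac{Y_i-\widehat q_\alpha}{\widehat q_\beta-\widehat q_\alpha} = \log\frac{Y_i-\widehat q_\alpha}{Y_i}+\log\frac{Y_i}{q_j(1-\beta_n)}+\log\frac{q_j(1-\beta_n)}{\widehat q_\beta}-\log\bigl(1-\widehat q_\alpha/\widehat q_\beta\bigr)$, we obtain five components:
\begin{itemize}
\item[(I)] a causal Hill-type sum with threshold $q_j(1-\beta_n)$ and true $\pi$;
\item[(II)] the effect of replacing $\pi$ by $\widehat\pi$;
\item[(III)] the effect of replacing $\mathbf{1}_{\{Y_i>q_j(1-\beta_n)\}}$ by $\mathbf{1}_{\{Y_i>\widehat q_\beta\}}$, together with the third log term;
\item[(IV)] the random-shift term $\log(1-\widehat q_\alpha/Y_i)$;
\item[(V)] the term $-\log(1-\widehat q_\alpha/\widehat q_\beta)$ multiplied by the weighted exceedance proportion.
\end{itemize}
$A_{n,j}$ collects (II), (III) and the non-leading parts of (IV)–(V).

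\textbf{Component (I).} Set $Y(j)=U_j(Z(j))$, with $Z(j)$ standard Pareto independent of $X$ marginally, and use unconfoundedness to replace $Y_i$ by $Y_i(j)$ inside the weighted terms. Component (I) then equals $\frac{1}{n\beta_n}\sum_i w_i\mathbf{1}_{\{Z_i>\beta_n^{-1}\}}\log\frac{U_j(Z_i)}{U_j(1/\beta_n)}$. By the second-order condition in Assumption~\ref{asm:second_order}, applied through Drees-type uniform inequalities, this is
\[
\gamma_j\frac{1}{n\beta_n}\sum_i w_i\mathbf{1}_{\{Z_i>1/\beta_n\}}\log(Z_i\beta_n) + \frac{A_j(1/\beta_n)}{1-\rho_j}(1+o_p(1)).
\]
Since $E[\mathbf{1}\log(Z\beta_n)]=\beta_n$, the centered sum is $\frac{1}{n\beta_n}\sum_i R_{n,i}$. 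A Lindeberg CLT for the triangular array then gives $\gamma_j a_{n,j}P_{n,j}/\sqrt{k_0}$. Because $A_j$ is regularly varying with index $\rho_j$, we have $A_j(1/\beta_n)=A_j(n/k)(k_0/k)^{-\rho_j}(1+o(1))$, which produces the $c_j$ term.

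For the bounds on $a_{n,j}^2$, compute $E[w^2\mathbf{1}\log^2(Z\beta_n)] = E[1/\pi(X)]\cdot 2\beta_n$ (for $j=1$) and subtract $\beta_n^2$. Dividing by $\beta_n$ and applying $c<\pi<1-c$ gives the stated interval.

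\textbf{Component (IV), the $b_j$ term.} On exceedances we have $\widehat q_\alpha/Y_i\approx q_j(1-\alpha_n)/U_j(Z_i)\approx(\beta_n/\alpha_n)^{\gamma_j}(Z_i\beta_n)^{-\gamma_j}$, using first-order regular variation and the consistency $\widehat q_\alpha/q_j(1-\alpha_n)\to_p1$ from Zhang's intermediate-quantile results. Expanding $\log(1-x)\approx -x$ and averaging against the Pareto tail gives $\int_1^\infty z^{-\gamma_j}z^{-2}\,dz=1/(1+\gamma_j)$. Similarly, (V) contributes $+(k_0/k)^{\gamma_j}$, because the weighted exceedance proportion tends to 1. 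The net coefficient is $1-1/(1+\gamma_j)=\gamma_j/(1+\gamma_j)=b_j$, up to the factor $(1+o_p(1))$.

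\textbf{Component (III).} We use $\widehat q_\beta/q_j(1-\beta_n)=1+O_p(k_0^{-1/2})$, from the asymptotic normality of the intermediate Firpo–Zhang quantile at level $\beta_n$. A Taylor expansion shows that the indicator change and the log-ratio term cancel to first order: the Hill functional is insensitive to small threshold perturbations at the $\gamma\log$ scale, since the derivative of $t\mapsto E[\mathbf{1}_{\{Z>t\}}\log(Z/t)]$ times $t$ matches the derivative of $\log t$. This leaves $o_p(1)$. Because the lemma only claims $A_{n,j}=o_p(1)$, crude bounds suffice here.

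\textbf{Component (II).} We use the sieve rate $\sup_x|\widehat\pi-\pi|=o_p(1)$ together with the boundedness of $\pi$ away from 0 and 1 to show (II) $=o_p(1)$.

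\textbf{Main obstacle.} The hardest step is controlling (III) and (IV) uniformly, since they involve the random thresholds $\widehat q_\alpha$ and $\widehat q_\beta$ inside the indicator and the logarithm simultaneously. We would handle this by restricting to an event of probability tending to one on which $\widehat q_\alpha$ and $\widehat q_\beta$ lie within $(1\pm\epsilon_n)$ of their targets. On that event, each sum is sandwiched between deterministic-threshold versions, whose expectations are computed exactly via Pareto integrals and whose variances are $O(1/k_0)$.
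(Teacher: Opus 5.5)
Your plan proves the lemma as stated, but it uses a genuinely different decomposition from the paper's.

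\textbf{Comparison of routes.} The paper reuses the five-term split of its consistency proof. Its oracle term $A^{1}_{n,j}$ keeps the full Fraga log-ratio with the \emph{true} shift $q_j(1-\alpha_n)$, and expands it by second-order regular variation at $t=1/\alpha_n$ with $x=Z_i(j)\,\alpha_n$. So the CLT term, the $b_j$ term and the $c_j$ term all come from the oracle sum, and $A_{n,j}=A^{2}_{n,j}+\dots+A^{5}_{n,j}$ is $o_p(1)$ verbatim from the consistency proof.

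You instead split the log into two pieces. The first is a Hill-type part at threshold $q_j(1-\beta_n)$, expanded at $t=1/\beta_n$ on $x=Z_i\beta_n\ge 1$. The second is a random-shift correction (IV)+(V) built from $\widehat{q}_\alpha$, from which $b_j$ emerges as $1-\int_1^{\infty} z^{-\gamma_j-2}\,dz=\gamma_j/(1+\gamma_j)$.

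\textbf{What each route buys.} The paper's route is more economical: no new remainder analysis, and the bias terms are population quantities. However, it needs the second-order relation uniformly on the escaping range $x\ge\alpha_n/\beta_n\to\infty$. There the retained term $A_j(n/k)(k_0/k)^{-\rho_j}$ is itself $o(A_j(n/k))$ when $\rho_j<0$, so uniformity is delicate.

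Your route needs only Drees-type inequalities on $x\ge 1$, which is the standard setting, and it makes clear that $b_j$ is a random-shift effect. The same delicacy reappears in your conversion $A_j(n/k_0)=A_j(n/k)(k_0/k)^{-\rho_j}(1+o(1))$. Regular variation of $|A_j|$ does not give this when $k/k_0\to\infty$; Potter bounds only give exponents $-\rho_j\pm\epsilon$. Since the discrepancy is $o(1)$, you can move it into $A_{n,j}$. Your $A_{n,j}$ is therefore not the paper's; for instance, the error of $\widehat{q}_\alpha$ now sits inside the $(1+o_p(1))$ factor of the $b_j$ term. This is harmless, because the lemma only asserts $A_{n,j}=o_p(1)$.

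\textbf{Two steps need correcting.}

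First, the identity $\mathbb{E}\bigl[w^2\mathbf{1}_{\{Z(1)>\beta_n^{-1}\}}\log^2(Z(1)\beta_n)\bigr]=\mathbb{E}[1/\pi(X)]\cdot 2\beta_n$ treats $Z(j)$ as independent of $X$. Only its marginal law is standard Pareto, and $Y(j)$ depends on $X$ in general, as in every simulation design. Instead, write the expectation as $\mathbb{E}\bigl[\pi(X)^{-1}\,\mathbb{E}\bigl(\mathbf{1}_{\{Z(1)>\beta_n^{-1}\}}\log^2(Z(1)\beta_n)\mid X\bigr)\bigr]$. Then bound $1/\pi$ pointwise between $1/(1-c)$ and $1/c$, as in Lemma~\ref{lem:A.1}, and the stated interval follows.

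Second, your heuristic in (III) is false. The indicator swap contributes only $O_p(k_0^{-1})$: about $\sqrt{k_0}$ observations lie between $q_j(1-\beta_n)$ and $\widehat{q}_\beta$, and each carries a log of size $O_p(k_0^{-1/2})$. By contrast, $\log\bigl(q_j(1-\beta_n)/\widehat{q}_\beta\bigr)$ multiplies the weighted exceedance proportion, which by the first-order condition of the weighted quantile is $n^{-1}\sum_i W_i \xrightarrow{p} 1$ up to $O_p(k_0^{-1})$. That product is a non-degenerate term of exact order $k_0^{-1/2}$, so nothing cancels at first order. This is the same mechanism by which the classical Hill estimator with random threshold $Y_{(n-k)}$ has asymptotic variance $\gamma^2$, rather than the $2\gamma^2$ of its fixed-threshold version.

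Because the lemma only requires $o_p(1)$, your crude bound still suffices and the proof stands. Do not use the heuristic to argue that this part of the remainder is $o_p(k_0^{-1/2})$; the paper's $A^{2}_{n,j}$ carries essentially the same factor.
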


To obtain a well-defined asymptotic distribution, we additionally
require that the sequence $\{a_{n,j}\}$ admits a limit, as formalised
in Assumption~\ref{asm:an_convergence}.

\begin{assumption}
\label{asm:an_convergence}
For each $j \in \{0, 1\}$, there exists a constant $a_j > 0$ such that
$a_{n,j} \to a_j$ as $n \to +\infty$.
\end{assumption}

This assumption is in the same spirit as, but considerably less
demanding than, Assumption~B.3 in
Deuber et al.~\cite{deuber2024estimation} and Assumption~B.2 in
Zhang~\cite{zhang2018extremal}; our $a_{n,j}^{2}$ is an unconditional
second moment that is automatically bounded under
Assumption~\ref{asm:causal_identification}, so that
Assumption~\ref{asm:an_convergence} only imposes the existence of the
limit $a_j$.

Fraga~\cite{fraga2001location} and Ling et al.~\cite{ling2012location} discuss the
optimal choice of the auxiliary threshold parameter, which minimizes
the mean squared error, for the
location-invariant extreme value index estimator, but restrict
attention to the special case where the second-order auxiliary function
satisfies $A(t) = O(t^{\rho})$ as $t \to +\infty$, yielding different
optimal choices depending on the subcase.
In practice, however, it is difficult to determine which subcase
applies for a given data set, and moreover such an optimal choice is
still subject to an asymptotic bias.  In contrast, rather than pursuing the optimal choice of the auxiliary
threshold parameter $k_0$ (or~$\beta_n$), this paper restricts the
convergence rate of $k_0$, which suffices to establish the asymptotic
normality of $\widehat{\gamma}_j^{F}$ and is first-order unbiased.

Following Zhang~\cite{zhang2018extremal} and
Deuber et al.~\cite{deuber2024estimation}, in
analysing the distribution of the extreme value index estimator, a
common assumption is
$\sqrt{k}\,A_j(n/k) \to \lambda_j \in \mathbb{R}$
(notice that this condition implies
$\sqrt{k_0}\,A_j(n/k) \to 0$).  In addition, to make the asymptotic
bias negligible, we require $(k_0 / k)^{\gamma_j} = o(k_0^{-1/2})$,
and the residual term $A_{n,j}$ in
Equation~\eqref{eq:thm3_decomp} further satisfies $o_p(k_0^{-1/2})$.
The asymptotic distribution then follows, as summarised in the
following theorem.

\begin{theorem}
\label{thm:causal_fraga_clt}
Let Assumptions~\ref{asm:causal_identification}--\ref{asm:an_convergence} hold.
As $n \to +\infty$, suppose $\alpha_n \to 0$ with $k = n \alpha_n \to +\infty$,
$\beta_n \to 0$ with $k_0 = n \beta_n \to +\infty$, and
$\beta_n / \alpha_n \to 0$.
If for each $j \in \{0, 1\}$, $\sqrt{k}\,A_j(n/k) \to \lambda_j \in \mathbb{R}$,
$(k_0 / k)^{\gamma_j} = o(k_0^{-1/2})$, and
$A_{n,j} = o_p(k_0^{-1/2})$, then,
\begin{equation*}
  \sqrt{k_0}\,\bigl(\widehat{\gamma}_{j}^{F}(\beta_n, \alpha_n) - \gamma_j\bigr)
  \xrightarrow{d} \mathcal{N}\!\bigl(0,\; \gamma_j^{2} a_j^{2}\bigr).
\end{equation*}
\end{theorem}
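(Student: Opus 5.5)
The plan is to start from the decomposition in Lemma~\ref{thm:causal_fraga_equation}, multiply both sides by $\sqrt{k_0}$, and show that only the stochastic term survives. Rearranging \eqref{eq:thm3_decomp} gives
\begin{equation*}
  \sqrt{k_0}\bigl(\widehat{\gamma}_j^{F}(\beta_n,\alpha_n)-\gamma_j\bigr)
  = \gamma_j a_{n,j} P_{n,j}
  + b_j\sqrt{k_0}\Bigl(\frac{k_0}{k}\Bigr)^{\gamma_j}(1+o_p(1))
  + c_j\sqrt{k_0}A_j\Bigl(\frac{n}{k}\Bigr)\Bigl(\frac{k_0}{k}\Bigr)^{-\rho_j}(1+o_p(1))
  + \sqrt{k_0}A_{n,j}.
\end{equation*}
We then treat the four terms on the right-hand side one at a time.

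\emph{Leading term.} By the lemma, $P_{n,j}\xrightarrow{d}\mathcal N(0,1)$. By Assumption~\ref{asm:an_convergence}, $a_{n,j}\to a_j$ deterministically. Slutsky's lemma then gives $\gamma_j a_{n,j}P_{n,j}\xrightarrow{d}\mathcal N(0,\gamma_j^2a_j^2)$.

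\emph{First bias term.} The hypothesis $(k_0/k)^{\gamma_j}=o(k_0^{-1/2})$ says exactly that $\sqrt{k_0}(k_0/k)^{\gamma_j}\to0$. Since $b_j$ is a constant and the factor $(1+o_p(1))$ is $O_p(1)$, this term is $o_p(1)$.

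\emph{Second bias term.} Here $\rho_j\le0$ and $k_0/k\to0$, so $(k_0/k)^{-\rho_j}\le1$ for large $n$. It therefore suffices to show $\sqrt{k_0}A_j(n/k)\to0$. This follows from $\sqrt{k}A_j(n/k)\to\lambda_j$ together with $\sqrt{k_0}=\sqrt{k}\,\sqrt{k_0/k}$ and $\sqrt{k_0/k}\to0$: the product is bounded times vanishing. For $\rho_j=0$ the power equals one and the same argument applies.

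\emph{Remainder.} The hypothesis $A_{n,j}=o_p(k_0^{-1/2})$ gives $\sqrt{k_0}A_{n,j}=o_p(1)$ directly.

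Combining the four pieces with Slutsky's lemma yields the stated limit $\mathcal N(0,\gamma_j^2a_j^2)$. The only point that needs care is that the $(1+o_p(1))$ factors multiply deterministic sequences tending to zero, so their products are still $o_p(1)$. All of the substantive difficulty, namely the CLT for $P_{n,j}$ under estimated propensity scores and estimated random thresholds, is already contained in Lemma~\ref{thm:causal_fraga_equation}. Given that lemma, the main obstacle here is only checking that the assumed rate conditions line up with each bias term as above.
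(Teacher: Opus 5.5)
Your proposal is correct and is essentially the paper's own proof. You multiply the decomposition of Lemma~\ref{thm:causal_fraga_equation} by $\sqrt{k_0}$, remove the two bias terms and the remainder using the rate hypotheses (via $\sqrt{k_0}\,A_j(n/k)=\sqrt{k_0/k}\,\sqrt{k}\,A_j(n/k)\to 0$ and $(k_0/k)^{-\rho_j}\le 1$), and conclude by Slutsky with $a_{n,j}\to a_j$. One correction to your closing commentary only: in the paper $P_{n,j}$ is built from oracle quantities (true $\pi$ and true thresholds $q_j(1-\beta_n)$, $q_j(1-\alpha_n)$), so the effect of estimated propensity scores and random thresholds sits in $A_{n,j}$ and is handled by the hypothesis $A_{n,j}=o_p(k_0^{-1/2})$, not by the lemma's CLT.
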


\begin{remark}
\label{rem:causal_fraga_rate_conditions}
In practice, all three additional assumptions in
Theorem~\ref{thm:causal_fraga_clt} are achievable by choosing $k$ and
$k_0$ appropriately.  The condition
$\sqrt{k}\,A_j(n/k) \to \lambda_j$ is satisfied by taking $k$
smaller (see Deuber et al.~\cite{deuber2024estimation}).  For the two
remaining rate conditions, suppose $k_0 = O(k^{m})$ for some
$m \in (0, 1)$ (the upper bound follows from $k_0 / k \to 0$).  Then
\begin{align*}
  (k_0 / k)^{\gamma_j} = o(k_0^{-1/2})
  &\iff m < \frac{2 \gamma_j}{2 \gamma_j + 1},
\end{align*}
and the $o_p(1)$ rate of $A_{n,j}$ (from
Theorem~\ref{thm:causal_fraga_consistency}) makes
$A_{n,j} = o_p(k_0^{-1/2})$ once $k_0$ is taken smaller.  Thus
controlling the growth rates of both $k$ and $k_0$ (i.e., taking $k$
smaller and choosing $m$ in the above interval) yields all three
conditions.
\end{remark}

For the asymptotic analysis of the extremal QTE estimator in
Section~\ref{sec:asymptotic_properties}, we need the joint asymptotic
distribution of the causal Fraga estimators for the treated and control
arms, which we establish in the following theorem.

\begin{theorem}
\label{thm:causal_fraga_joint_clt}
Let Assumptions~\ref{asm:causal_identification}--\ref{asm:an_convergence} hold.
As $n \to +\infty$, suppose $\alpha_n \to 0$ with $k = n \alpha_n \to +\infty$,
$\beta_n \to 0$ with $k_0 = n \beta_n \to +\infty$, and
$\beta_n / \alpha_n \to 0$.  Assume additionally that
$\sqrt{k}\,A_j(n/k) \to \lambda_j \in \mathbb{R}$,
$(k_0 / k)^{\gamma_j} = o(k_0^{-1/2})$, and
$A_{n,j} = o_p(k_0^{-1/2})$ for $j \in \{0, 1\}$.
Then the joint asymptotic distribution
of the causal Fraga estimators for $j = 1$ and $j = 0$ is
\begin{equation}
  \sqrt{k_0}\,
  \begin{pmatrix}
    \widehat{\gamma}_1^{F}(\beta_n, \alpha_n) - \gamma_1 \\[4pt]
    \widehat{\gamma}_0^{F}(\beta_n, \alpha_n) - \gamma_0
  \end{pmatrix}
  \xrightarrow{d}
  \mathcal{N}\!\left(
    \mathbf{0},\;
    \begin{pmatrix}
      \gamma_1^{2} a_1^{2} & 0 \\[4pt]
      0 & \gamma_0^{2} a_0^{2}
    \end{pmatrix}
  \right),
  \label{eq:joint_clt}
\end{equation}
where $a_1^{2}$ and $a_0^{2}$ are defined as in
Lemma~\ref{thm:causal_fraga_equation}.
\end{theorem}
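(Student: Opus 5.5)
We start from the decomposition in Lemma~\ref{thm:causal_fraga_equation}, applied simultaneously to $j=1$ and $j=0$. Under the rate conditions of the statement, the bias terms are negligible after scaling by $\sqrt{k_0}$. Indeed, $\sqrt{k_0}\,b_j(k_0/k)^{\gamma_j}\to 0$ by assumption. Next, $\sqrt{k_0}\,A_j(n/k)(k_0/k)^{-\rho_j}\le \sqrt{k_0}\,|A_j(n/k)|\to 0$, since $\rho_j\le 0$, $k_0/k\to 0$, and $\sqrt{k}A_j(n/k)\to\lambda_j$. Finally, $\sqrt{k_0}A_{n,j}=o_p(1)$. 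Hence
\begin{equation*}
\sqrt{k_0}\bigl(\widehat{\gamma}_j^{F}-\gamma_j\bigr)=\gamma_j a_{n,j}P_{n,j}+o_p(1),\qquad j\in\{0,1\}.
\end{equation*}
Because $a_{n,j}\to a_j$ (Assumption~\ref{asm:an_convergence}), by Slutsky's lemma it suffices to show that $(P_{n,1},P_{n,0})$ converges jointly to a standard bivariate normal with \emph{independent} components.

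For the joint limit, we return to the construction of $P_{n,j}$ in the proof of the lemma. There it arises as the normalized sum $P_{n,j}=\sum_{i=1}^n R_{n,j,i}/\sqrt{n\mathbb{E}R_{n,j}^2}$ of i.i.d.\ centred copies of $R_{n,j}$, together with the linearization contributions from $\widehat\pi$ and the estimated quantiles, which are either absorbed into $A_{n,j}$ or already included in $P_{n,j}$. If extra influence terms enter $P_{n,j}$, we carry them along in the same way. We then apply the Cramér--Wold device. For fixed $(t_1,t_0)$ we consider the triangular array
\begin{equation*}
\xi_{n,i}=\frac{t_1R_{n,1,i}}{\sqrt{n\mathbb{E}R_{n,1}^2}}+\frac{t_0R_{n,0,i}}{\sqrt{n\mathbb{E}R_{n,0}^2}},
\end{equation*}
and check the Lindeberg condition. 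This reuses the moment bounds behind the marginal CLT: $R_{n,j}$ is bounded by $c^{-1}$ times a log-exceedance with finite $2+\delta$ moments of order $\beta_n$, so Lyapunov's ratio is $O(k_0^{-\delta/2})\to 0$.

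The key computation is the cross-covariance. The two indicators carry $D$ and $1-D$ respectively, so $D(1-D)=0$ gives
\begin{equation*}
\mathbb{E}[R_{n,1}R_{n,0}]=0-\beta_n\mathbb{E}[\cdot]-\beta_n\mathbb{E}[\cdot]+\beta_n^2=-\beta_n^2,
\end{equation*}
using $\mathbb{E}[\frac{D}{\pi(X)}\mathbf 1\{Z(1)>\beta_n^{-1}\}\log(Z(1)\beta_n)]=\beta_n$. This moment identity follows from unconfoundedness, since $Z(1)$ is standard Pareto and $\mathbb{E}[\mathbf 1\{Z>1/\beta\}\log(Z\beta)]=\beta$. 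After normalization, the covariance of the two components is $n\beta_n^2/(k_0a_{n,1}a_{n,0})=\beta_n/(a_{n,1}a_{n,0})\to 0$, while the variances tend to $1$. The limit of $\sum_i\xi_{n,i}$ is therefore $\mathcal N(0,t_1^2+t_0^2)$, which yields \eqref{eq:joint_clt} with the stated diagonal covariance.

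The main obstacle is making sure that no non-negligible cross-arm dependence is hidden in the estimation-error terms, namely $\widehat\pi$ (shared by both arms) and $\widehat q_j(1-\alpha_n)$, $\widehat q_j(1-\beta_n)$. I would argue that the quantile-estimation contributions at level $\beta_n$ are already part of $P_{n,j}$ via its influence function. Each such term is again of the form (arm-$j$ weight)$\times$(arm-$j$ tail indicator) minus a mean, so cross products still vanish up to $O(\beta_n)$ because $D(1-D)=0$. The propensity-score projection terms are of order $\sqrt{n}\,\beta_n\cdot O_p(n^{-1/2})$, and after division by $\sqrt{k_0}$ they are $o_p(1)$ under Assumption~\ref{asm:sieve_assumptions}. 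This is the same argument as in Zhang's joint intermediate-quantile result, which also gives a diagonal limit.
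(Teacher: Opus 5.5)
Your proposal is correct and is essentially the paper's proof. Both reduce via Lemma~\ref{thm:causal_fraga_equation} to the oracle sums $T_{n,j}=\sum_i R_{n,j,i}$, obtain $\mathbb{E}[R_{n,1,i}R_{n,0,i}]=-\beta_n^2$ from $D(1-D)=0$ so that the scaled cross-covariance $-\gamma_1\gamma_0\beta_n$ vanishes, and finish with Cram\'er--Wold plus a Lindeberg check (the paper's fourth-moment bound is your Lyapunov argument with $\delta=2$).

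Your last paragraph is unnecessary. In the paper $P_{n,j}$ is exactly the normalized oracle sum, and every effect of $\widehat\pi$ and $\widehat q_j$ sits in $A_{n,j}$, which the hypothesis $A_{n,j}=o_p(k_0^{-1/2})$ removes. You should also not place quantile-estimation terms inside $P_{n,j}$, since that would move the variance away from $\gamma_j^2a_j^2$ with $a_{n,j}^2=n\,\mathbb{E}(R_{n,j}^2)/k_0$.
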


\section{Extremal Quantile Treatment Effect Estimator}
\label{sec:extremal_quantile}

Building on the location-invariant causal Fraga estimator of
Section~\ref{sec:causal_eviestimator}, this section carries out the
second step of our construction.  The main goal of this
section is to study the estimation and statistical inference of the
extreme $(1-\tau_n)$-quantile treatment effect under heavy-tailed
distributions; the $\tau_n$ case follows by
analogy. Depending on the convergence rate of $\tau_n \to 0$ as
$n \to +\infty$, this section distinguishes two regimes: the
\emph{extreme level} defined by $n\tau_n \to a \geq 0$, and the
\emph{intermediate level} defined by $n\tau_n \to +\infty$. To endow
the extremal QTE estimator with location invariance, we modify the
multiplicative extrapolation used in~Deuber
et al.~\cite{deuber2024estimation} into a difference extrapolation
scheme, and construct a location-invariant extremal QTE estimator that
builds upon the causal Fraga estimator of
Section~\ref{sec:causal_eviestimator}.  The estimators constructed
from the two extrapolation schemes possess similar asymptotic
properties; details are given in the rest of this section and in
Section~\ref{sec:mult_extrapolation}. We then establish the asymptotic
normality of the proposed estimator.

\subsection{Definition of the Extremal QTE Estimator}
\label{sec:extremal_qte}

For the extreme-level regime where $n\tau_n \to a \geq 0$, the
empirical quantile estimator given by
Equation~\eqref{eq:qte_firpo} in Section~\ref{sec:qte} suffers from
severe bias. The underlying cause is that the expected number of
observations exceeding the target quantile level is $a < \infty$,
leaving the empirical quantile with insufficient effective sample
size and incurring a bias that cannot be neglected.

A remedy for this problem is to use extrapolation to obtain an accurate
extreme-quantile estimator (see Deuber et al.~\cite{deuber2024estimation}).
Specifically, for sufficiently large $n$, applying
Equation~\eqref{eq:equiv_quantile}
to the tail quantile function yields
\begin{equation}
  q(1-\tau_n) \approx q(1-\alpha_n) \left( \frac{\alpha_n}{\tau_n} \right)^{\gamma}.
  \label{eq:qte_extrapolation}
\end{equation}
Here $\alpha_n$ is the intermediate level
($n\alpha_n \to +\infty$) introduced in
Section~\ref{sec:causal_fraga_form}, while $\tau_n$ corresponds to
the extreme level ($n\tau_n \to a \geq 0$).

Substituting the intermediate-level empirical quantile estimator
(Equation~\eqref{eq:qte_firpo}) for
$\widehat{q}_j(1-\alpha_n)$ and the causal Fraga estimator
(Equations~\eqref{eq:causal_fraga_treated}
and~\eqref{eq:causal_fraga_control}) for
$\widehat{\gamma}_j^{F}(\beta_n, \alpha_n)$ into the extrapolation
relation~\eqref{eq:qte_extrapolation} yields the following
\emph{multiplicative extrapolation estimator} of the extreme quantile
under the potential outcome with treatment $j \in \{0, 1\}$:
\begin{equation}
  \widehat{Q}_j^{\mathrm{mult}}(1-\tau_n)
  = \widehat{q}_j(1-\alpha_n)
    \left( \frac{\alpha_n}{\tau_n} \right)^{\widehat{\gamma}_j^{F}(\beta_n, \alpha_n)}.
  \label{eq:extremal_qte_estimator}
\end{equation}

The conventional \emph{multiplicative extrapolation QTE estimator} is
the difference of the per-arm multiplicative estimators
\eqref{eq:extremal_qte_estimator}:
\begin{equation}
  \widehat{\Delta}^{\mathrm{mult}}(1-\tau_n)
  = \widehat{Q}_1^{\mathrm{mult}}(1-\tau_n) - \widehat{Q}_0^{\mathrm{mult}}(1-\tau_n).
  \label{eq:qte_multiplicative}
\end{equation}
Unlike the population QTE $\Delta(1-\tau_n) = q_1(1-\tau_n) - q_0(1-\tau_n)$,
which is invariant under a common location shift, the multiplicative
extrapolation QTE estimator $\widehat{\Delta}^{\mathrm{mult}}$ is \emph{not}
location-invariant.

To obtain a location-invariant estimator, we adapt the random-shift
construction underlying the causal Fraga estimator to the extrapolation step.
Since $q_j(1-\tau) = U_j(1/\tau)$ with $U_j$ regularly varying of index
$\gamma_j$,
\begin{equation*}
  \frac{q_j(1-\tau_n) - q_j(1-\alpha_n)}
       {q_j(1-\beta_n) - q_j(1-\alpha_n)}
  \approx \frac{(\alpha_n/\tau_n)^{\gamma_j} - 1}
                {(\alpha_n/\beta_n)^{\gamma_j} - 1},
\end{equation*}
where the location parameter cancels in the differences.  Substituting
the estimated quantities yields the \emph{difference extrapolation
estimator}
\begin{equation}
  \widehat{Q}_j(1-\tau_n)
  = \widehat{q}_j(1-\alpha_n)
    + \bigl[\widehat{q}_j(1-\beta_n) - \widehat{q}_j(1-\alpha_n)\bigr]
      \cdot
      \frac{(\alpha_n/\tau_n)^{\widehat{\gamma}_j^{F}(\beta_n,\alpha_n)} - 1}
           {(\alpha_n/\beta_n)^{\widehat{\gamma}_j^{F}(\beta_n,\alpha_n)} - 1}.
  \label{eq:extremal_qte_estimator_add}
\end{equation}
This difference-extrapolation construction parallels Dekkers and de~Haan~\cite{dekkers1989estimation},
but employs the causal extreme value index $\widehat{\gamma}_j^{F}$ at
the two arbitrary threshold levels $\alpha_n$ and $\beta_n$, rather
than a univariate moment estimator at fixed ratios.  This allows the
extrapolation step to inherit the location invariance of the underlying
causal Fraga estimator.
Then we propose the following \emph{location-invariant extremal quantile
treatment effect estimator}
\begin{equation}
  \widehat{\Delta}(1-\tau_n)
  = \widehat{Q}_1(1-\tau_n)
    - \widehat{Q}_0(1-\tau_n).
  \label{eq:extremal_qte}
\end{equation}

\subsection{Asymptotic Properties of the Extremal QTE Estimator}
\label{sec:asymptotic_properties}

We first give the following lemma which shows that the asymptotic
behavior of the difference extrapolation estimator
$\widehat{Q}_j(1-\tau_n)$ defined in
Equation~\eqref{eq:extremal_qte_estimator_add} only depends on the
asymptotic distribution of the EVI estimator.  The properties and the
proofs for the QTE estimator based on the multiplicative extrapolation
are deferred to Section~\ref{sec:mult_extrapolation}.

\begin{lemma}
\label{lem:extreme_qte_linearization}
Let Assumptions~\ref{asm:causal_identification}--\ref{asm:second_order}
hold. As $n \to +\infty$, suppose $\alpha_n \to 0$ with
$k = n\alpha_n \to +\infty$, $\beta_n \to 0$ with
$k_0 = n\beta_n \to +\infty$, and $\beta_n / \alpha_n \to 0$.
If for each $j \in \{0, 1\}$,
$\sqrt{k}\,A_j(n/k) \to \lambda_j \in \mathbb{R}$,
$(k_0 / k)^{\gamma_j} = o(k_0^{-1/2})$,
$A_{n,j} = o_p(k_0^{-1/2})$ and $\log(\beta_n / \tau_n) = o(k_0^{1/2})$, then
\begin{equation*}
  \frac{\sqrt{k_0}}{\log(\beta_n / \tau_n)}
  \left( \frac{\widehat{Q}_j(1-\tau_n)}{q_j(1-\tau_n)} - 1 \right)
  = \sqrt{k_0}\,
    \bigl( \widehat{\gamma}_j^{F}(\beta_n, \alpha_n) - \gamma_j \bigr)
    + o_p(1).
\end{equation*}
In particular, $\widehat{Q}_j(1-\tau_n) / q_j(1-\tau_n) \xrightarrow{p} 1$.
\end{lemma}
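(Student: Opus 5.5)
The plan is to reduce the difference extrapolation estimator to a multiplicative extrapolation anchored at the auxiliary level $\beta_n$, and then show that all remaining pieces are $o_p(k_0^{-1/2}\log(\beta_n/\tau_n))$. Write $\widehat\gamma=\widehat\gamma_j^F(\beta_n,\alpha_n)$ and $\gamma=\gamma_j$, and set $d_\tau=\alpha_n/\tau_n$, $d_\beta=\alpha_n/\beta_n$, so $d_\tau/d_\beta=\beta_n/\tau_n$. Factoring $d_\tau^{\widehat\gamma}/d_\beta^{\widehat\gamma}=(\beta_n/\tau_n)^{\widehat\gamma}$ out of the weight in \eqref{eq:extremal_qte_estimator_add} gives
\begin{equation*}
  \widehat Q_j(1-\tau_n)=\widehat q_j(1-\beta_n)\Bigl(\frac{\beta_n}{\tau_n}\Bigr)^{\widehat\gamma}\,
  \Bigl(1-\frac{\widehat q_j(1-\alpha_n)}{\widehat q_j(1-\beta_n)}\Bigr)
  \frac{1-d_\tau^{-\widehat\gamma}}{1-d_\beta^{-\widehat\gamma}}
  +\widehat q_j(1-\alpha_n).
\end{equation*}
I then divide by $q_j(1-\tau_n)$ and treat each factor separately.

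\textbf{Step 1: intermediate quantiles.} From Zhang's intermediate-level result (used already in Lemma~\ref{thm:causal_fraga_equation}) I take
\begin{equation*}
  \frac{\widehat q_j(1-\beta_n)}{q_j(1-\beta_n)}-1=O_p(k_0^{-1/2}),\qquad
  \frac{\widehat q_j(1-\alpha_n)}{q_j(1-\alpha_n)}-1=O_p(k^{-1/2}).
\end{equation*}
By \eqref{eq:equiv_quantile}, $q_j(1-\alpha_n)/q_j(1-\beta_n)\sim(\beta_n/\alpha_n)^{\gamma}=(k_0/k)^{\gamma}=o(k_0^{-1/2})$. Hence the factor $1-\widehat q_j(1-\alpha_n)/\widehat q_j(1-\beta_n)$ equals $1+o_p(k_0^{-1/2})$. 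The additive term $\widehat q_j(1-\alpha_n)/q_j(1-\tau_n)$ is even smaller, since it is $O_p\bigl((k_0/k)^\gamma(\tau_n/\beta_n)^{\gamma}\bigr)$.

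\textbf{Step 2: random-exponent factors.} By Theorem~\ref{thm:causal_fraga_clt}, $\widehat\gamma-\gamma=O_p(k_0^{-1/2})$, so $\widehat\gamma$ is bounded away from $0$ with probability tending to one. Because $d_\tau,d_\beta\to\infty$, this gives $d_\beta^{-\widehat\gamma}=O_p\bigl((k_0/k)^{\gamma}\bigr)\cdot(1+o_p(1))=o_p(k_0^{-1/2})$. Here I use $d_\beta^{-(\widehat\gamma-\gamma)}=\exp\bigl(O_p(k_0^{-1/2})\log(k/k_0)\bigr)$, and I need $\log(k/k_0)=o(k_0^{1/2})$, which I will check from $(k_0/k)^\gamma=o(k_0^{-1/2})$ together with $k\le n$ (if needed, $k$ grows polynomially in $k_0$ per Remark~\ref{rem:causal_fraga_rate_conditions}). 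The same bound holds for $d_\tau^{-\widehat\gamma}$. The central factor is
\begin{equation*}
  (\beta_n/\tau_n)^{\widehat\gamma-\gamma}=\exp\bigl((\widehat\gamma-\gamma)\log(\beta_n/\tau_n)\bigr)
  =1+(\widehat\gamma-\gamma)\log(\beta_n/\tau_n)\,(1+o_p(1)),
\end{equation*}
by a Taylor expansion of $\exp$, which is valid because $\log(\beta_n/\tau_n)=o(k_0^{1/2})$.

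\textbf{Step 3: deterministic bias.} It remains to show that
\begin{equation*}
  \frac{q_j(1-\beta_n)(\beta_n/\tau_n)^{\gamma}}{q_j(1-\tau_n)}-1=o\bigl(k_0^{-1/2}\log(\beta_n/\tau_n)\bigr).
\end{equation*}
Writing $t=1/\beta_n=n/k_0$ and $x=\beta_n/\tau_n\to\infty$, this quantity is $U_j(t)x^\gamma/U_j(tx)-1$. This is the step I expect to be the main obstacle, because $x$ diverges and the pointwise limit in Definition~\ref{def:second_order_regular_variation} is not enough. I would invoke the uniform (Drees-type) second-order inequalities of de Haan and Ferreira~\cite{de2006extreme}, Theorem~2.3.9. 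These bound $|U(tx)/(U(t)x^\gamma)-1|$ by $C|A_0(t)|\,x^{\rho+\epsilon}$ if $\rho<0$ and by $C|A_0(t)|\log x$ if $\rho=0$. Either way the bias is $O\bigl(|A_j(n/k_0)|\log(\beta_n/\tau_n)\bigr)$. Since $|A_j|$ is regularly varying with index $\rho_j\le0$ and $n/k_0\ge n/k$, Potter bounds give $\sqrt{k_0}|A_j(n/k_0)|\lesssim\sqrt{k_0}\,|A_j(n/k)|(k/k_0)^{\epsilon}$. With $\sqrt{k}A_j(n/k)\to\lambda_j$ this tends to $0$ for small $\epsilon$, since $k_0/k\to0$ polynomially.

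\textbf{Step 4: assembly.} Multiplying Steps 1--3 gives
\begin{equation*}
  \frac{\widehat Q_j}{q_j}-1=(\widehat\gamma-\gamma)\log(\beta_n/\tau_n)(1+o_p(1))+O_p(k_0^{-1/2})+o_p\bigl(k_0^{-1/2}\log(\beta_n/\tau_n)\bigr).
\end{equation*}
Because $\log(\beta_n/\tau_n)\to\infty$ (as $n\tau_n\to a<\infty$ while $k_0\to\infty$), the $O_p(k_0^{-1/2})$ quantile error becomes $o_p(1)$ after multiplying by $\sqrt{k_0}/\log(\beta_n/\tau_n)$. This yields the displayed linearization, and consistency follows since $\log(\beta_n/\tau_n)/\sqrt{k_0}\to0$.
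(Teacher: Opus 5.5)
Your argument is correct in outline and takes a genuinely different route from the paper. Write $q_\alpha,q_\beta,q_\tau$ for $q_j(1-\alpha_n),q_j(1-\beta_n),q_j(1-\tau_n)$, and $d_n=\alpha_n/\tau_n$, $r_n=\alpha_n/\beta_n$ (your $d_\tau,d_\beta$).

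\emph{The paper's route.} It centres $\widehat{Q}_j$ at the population difference extrapolation $q_\alpha+(q_\beta-q_\alpha)W(\gamma_j)$, with $W(x)=(d_n^{x}-1)/(r_n^{x}-1)$. The error is split into a quantile-error term, a core term $\frac{q_\beta-q_\alpha}{q_\tau}\bigl(W(\widehat{\gamma}_j^{F})-W(\gamma_j)\bigr)$, and a deterministic bias. All expansions are anchored at $t=n/k$. The bias is bounded by $|A_j(n/k)|\,\frac{d_n^{\rho_j}-r_n^{\rho_j}}{\rho_j}\le|A_j(n/k)|\log(\beta_n/\tau_n)$, so $\sqrt{k}A_j(n/k)\to\lambda_j$ is used as stated.

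\emph{Your route.} You factor the estimator exactly as a Weissman extrapolation from $\beta_n$ times two correction factors. You anchor the bias at $t=n/k_0$ and pay with a Potter transfer from $A_j(n/k_0)$ to $A_j(n/k)$, which you do correctly. The gain is that your factorisation forces you to show the correction factors are $1+o_p(k_0^{-1/2})$, which is exactly where $(k_0/k)^{\gamma_j}=o(k_0^{-1/2})$ enters. The paper only shows its analogous factor $\frac{1-d_n^{-\widehat\gamma}}{1-d_n^{-\gamma}}\cdot\frac{1-r_n^{-\gamma}}{1-r_n^{-\widehat\gamma}}$ is $1+o_p(1)$. It then writes $W(\widehat\gamma)-W(\gamma)=W(\gamma)(e^{u}-1)(1+o_p(1))$, which actually requires that factor to be $1+o_p\bigl(k_0^{-1/2}\log(\beta_n/\tau_n)\bigr)$. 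Your route makes this point explicit.

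\emph{A sub-step that fails as planned.} In Step~2, $\log(k/k_0)=o(k_0^{1/2})$ does not follow from the hypotheses. For example, $k=n^{1/2}$, $k_0=(\log n)^{3/2}$, $\tau_n=1/n$ satisfy $(k_0/k)^{\gamma_j}=o(k_0^{-1/2})$ and $\log(\beta_n/\tau_n)=o(k_0^{1/2})$, yet $\log(k/k_0)/\sqrt{k_0}\to\infty$. Remark~\ref{rem:causal_fraga_rate_conditions} gives $k_0=O(k^m)$, which bounds $k$ from below, not above. In such regimes your intermediate claim $d_\beta^{-\widehat\gamma}=O_p\bigl((k_0/k)^{\gamma}\bigr)$ is false, since $(k/k_0)^{\gamma-\widehat\gamma}$ is not tight.

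What you need is still true. Set $\eta_n:=\sqrt{k_0}\,(k_0/k)^{\gamma}\to0$; then
\begin{equation*}
  \sqrt{k_0}\,d_\beta^{-\widehat\gamma}
  =\eta_n^{\widehat\gamma/\gamma}\,k_0^{(\gamma-\widehat\gamma)/(2\gamma)} .
\end{equation*}
The second factor is $\exp\bigl(O_p(k_0^{-1/2}\log k_0)\bigr)\xrightarrow{p}1$. The first is at most $\eta_n^{1/2}$ on $\{\widehat\gamma\ge\gamma/2\}$ once $\eta_n<1$. Hence $d_\beta^{-\widehat\gamma}=o_p(k_0^{-1/2})$. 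Since $d_\tau>d_\beta$, the factor $(1-d_\tau^{-\widehat\gamma})/(1-d_\beta^{-\widehat\gamma})$ lies in $[1,(1-d_\beta^{-\widehat\gamma})^{-1}]$, so it is $1+o_p(k_0^{-1/2})$ with no separate treatment of $d_\tau^{-\widehat\gamma}$.

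\emph{Two smaller points, both shared with the paper.} First, in Step~1, $q_\alpha/q_\beta\sim(k_0/k)^{\gamma_j}$ does not follow from first-order regular variation, because $k/k_0\to\infty$ and Potter bounds lose a factor $(k/k_0)^{\epsilon}$. It does follow from the uniform second-order bound at $t=n/k$ together with $\sqrt{k}A_j(n/k)=O(1)$. Second, in Step~3 with $\rho_j=0$, Theorem~2.3.9 of de Haan and Ferreira gives $|A_0(t)|\bigl(\log x+\epsilon x^{\delta}\bigr)$, not $C|A_0(t)|\log x$. So for very fast-decaying $\tau_n$ an extra argument is needed. The paper's ``uniformly in the relevant range'' expansion has the same looseness.
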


Lemma~\ref{lem:extreme_qte_linearization} permits the extreme-level
regime $n\tau_n \to 0$, but the condition
$\log(\beta_n / \tau_n) = o(k_0^{1/2})$ imposes a rate restriction:
$\tau_n$ cannot decay arbitrarily fast relative to the auxiliary
threshold $\beta_n$. Equivalently, the effective number of extreme
observations cannot be too small; otherwise, the extrapolation from
the auxiliary level to the extreme level becomes unstable. This
reveals an inherent applicability boundary of the extrapolation
approach, a finding consistent with
Deuber et al.~\cite{deuber2024estimation}.

To establish the asymptotic normality of $\widehat{\Delta}(1-\tau_n)$,
we need to analyse the joint distribution of
$\widehat{Q}_0(1-\tau_n)$ and $\widehat{Q}_1(1-\tau_n)$. This analysis
faces a challenge: the normalising factors of
$\widehat{Q}_0(1-\tau_n)$ and $\widehat{Q}_1(1-\tau_n)$ may differ in
their convergence rates, leading to oscillations in their ratio.
Inspired by Chernozhukov and Fern\'andez-Val~\cite{chernozhukov2011inference}, Zhang~\cite{zhang2018extremal}, and Deuber et al.~\cite{deuber2024estimation}, we impose a constraint on
the convergence of the ratio of normalising factors. Building on
Lemma~\ref{lem:extreme_qte_linearization}, we propose the following
normalising factor for $\widehat{\Delta}(1-\tau_n)$
\begin{equation}
  \widehat{\phi}_n
  := \frac{\sqrt{k_0}}
         {\log(\beta_n/\tau_n) \cdot
          \max\bigl\{\widehat{Q}_1(1-\tau_n),\, \widehat{Q}_0(1-\tau_n)\bigr\}}.
  \label{eq:delta_normalizing_factor}
\end{equation}

Furthermore, akin to the analysis in
Deuber et al.~\cite{deuber2024estimation}, we require that the potential
outcome distributions are either comparable in their tails or one
distribution has a heavier tail than the other. This is a fairly standard assumption satisfied by many models. We formulate this condition as follows.

\begin{assumption}
\label{asm:tail_comparability}
The quantile functions $q_0$ and $q_1$ of the potential outcomes
$Y(0)$ and $Y(1)$ satisfy
\begin{equation*}
  \frac{q_1(1-\tau_n)}{q_0(1-\tau_n)} \to \kappa \in [0, +\infty].
\end{equation*}
\end{assumption} 

We are now ready to establish the asymptotic normality of the extremal
QTE estimator.

\begin{theorem}
\label{thm:extremal_qte_asymptotic}
Let Assumptions~\ref{asm:causal_identification}--\ref{asm:tail_comparability}
hold. As $n \to +\infty$, suppose $\alpha_n \to 0$ with
$k = n\alpha_n \to +\infty$, $\beta_n \to 0$ with
$k_0 = n\beta_n \to +\infty$, and $\beta_n / \alpha_n \to 0$.
If for each $j \in \{0, 1\}$,
$\sqrt{k}\,A_j(n/k) \to \lambda_j \in \mathbb{R}$,
$(k_0 / k)^{\gamma_j} = o(k_0^{-1/2})$,
$A_{n,j} = o_p(k_0^{-1/2})$ and $\log(\beta_n / \tau_n) = o(k_0^{1/2})$, then
\begin{equation*}
  \widehat{\phi}_n\bigl(\widehat{\Delta}(1-\tau_n) - \Delta(1-\tau_n)\bigr)
  \xrightarrow{d} \mathcal{N}(0, \sigma^2),
\end{equation*}
where
\begin{equation*}
  \sigma^2
  = \min\{1, \kappa\}^2 \gamma_1^2 a_1^2
    + \min\{1, 1/\kappa\}^2 \gamma_0^2 a_0^2.
\end{equation*}
\end{theorem}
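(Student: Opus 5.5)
We would derive the theorem from Lemma~\ref{lem:extreme_qte_linearization} and the joint limit in Theorem~\ref{thm:causal_fraga_joint_clt}, combined through a delta-method-type rescaling. Write $r_n := \sqrt{k_0}/\log(\beta_n/\tau_n)$ and $M_n := \max\{q_1(1-\tau_n), q_0(1-\tau_n)\}$. Lemma~\ref{lem:extreme_qte_linearization} gives, for each arm $j$,
\begin{equation*}
  r_n\bigl(\widehat{Q}_j(1-\tau_n) - q_j(1-\tau_n)\bigr)
  = q_j(1-\tau_n)\Bigl[\sqrt{k_0}\bigl(\widehat{\gamma}_j^{F}(\beta_n,\alpha_n)-\gamma_j\bigr) + o_p(1)\Bigr].
\end{equation*}
Dividing by $M_n$ and subtracting the two arms yields
\begin{equation*}
  \frac{r_n}{M_n}\bigl(\widehat{\Delta}(1-\tau_n)-\Delta(1-\tau_n)\bigr)
  = w_{1,n}\sqrt{k_0}\bigl(\widehat{\gamma}_1^{F}-\gamma_1\bigr)
  - w_{0,n}\sqrt{k_0}\bigl(\widehat{\gamma}_0^{F}-\gamma_0\bigr) + o_p(1),
\end{equation*}
where $w_{j,n} := q_j(1-\tau_n)/M_n \in [0,1]$. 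Because the weights are bounded, the $o_p(1)$ remainders stay $o_p(1)$ after multiplication by them.

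Next we identify the limits of the weights using Assumption~\ref{asm:tail_comparability}. If $\kappa \ge 1$, then eventually $q_1 \ge q_0$ up to vanishing error, so $w_{1,n} \to 1 = \min\{1,\kappa\}$ and $w_{0,n} = q_0/q_1 \to 1/\kappa = \min\{1,1/\kappa\}$, with the convention $1/\infty = 0$. The case $\kappa < 1$ is symmetric. The boundary cases $\kappa \in \{0, \infty\}$ need the convention that the dominated weight tends to $0$. At $\kappa = 1$ the max may oscillate between the arms, but both ratios tend to $1$, so this causes no trouble. Positivity of the $q_j(1-\tau_n)$ for large $n$ holds because $\gamma_j > 0$ makes $q_j(1-\tau_n) \to \infty$, which also makes the ratios well defined.

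We then apply Theorem~\ref{thm:causal_fraga_joint_clt}: the vector $\sqrt{k_0}(\widehat{\gamma}_1^{F}-\gamma_1, \widehat{\gamma}_0^{F}-\gamma_0)$ converges to a centred Gaussian with diagonal covariance $\mathrm{diag}(\gamma_1^2a_1^2, \gamma_0^2a_0^2)$. By Slutsky and the continuous mapping theorem applied to the linear form with deterministic weights converging to $(\min\{1,\kappa\}, -\min\{1,1/\kappa\})$, the left side above converges to $\mathcal{N}(0,\sigma^2)$ with $\sigma^2$ as stated.

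Finally we replace the deterministic normaliser $r_n/M_n$ with $\widehat{\phi}_n$. Note that $\widehat{\phi}_n / (r_n/M_n) = M_n/\max\{\widehat{Q}_1,\widehat{Q}_0\}$. By the consistency part of Lemma~\ref{lem:extreme_qte_linearization}, $\widehat{Q}_j = q_j(1-\tau_n)(1+o_p(1))$, so $\max_j \widehat{Q}_j = \max_j q_j(1-\tau_n)\,(1+o_p(1))$; this follows from the elementary bound $|\max(a(1+u),b(1+v)) - \max(a,b)| \le \max(a,b)\max(|u|,|v|)$ for $a,b>0$. Hence the ratio tends to $1$ in probability, and Slutsky completes the proof.

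The main obstacle is this last step together with the uniformity of the weights at boundary $\kappa$. In particular, the linearization must hold for the dominated arm even though its weight vanishes, which is fine because its remainder is still $o_p(1)$ before multiplication by $w_{j,n} \le 1$. The rest is bookkeeping.
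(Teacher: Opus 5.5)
Your proposal is correct and follows essentially the same route as the paper's proof: linearize each arm via Lemma~\ref{lem:extreme_qte_linearization}, normalize by $\max\{q_1(1-\tau_n),q_0(1-\tau_n)\}$ so that the weights tend to $\min\{1,\kappa\}$ and $\min\{1,1/\kappa\}$, apply the joint limit of Theorem~\ref{thm:causal_fraga_joint_clt}, and replace the population normaliser by $\widehat{\phi}_n$ using $\widehat{Q}_j(1-\tau_n)/q_j(1-\tau_n)\xrightarrow{p}1$ and Slutsky. The differences are only cosmetic: you take the weight limits before swapping the normaliser, and you spell out the boundary cases of $\kappa$ and the Lipschitz bound for the maximum, both of which the paper leaves implicit.
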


To conduct statistical inference based on
Theorem~\ref{thm:extremal_qte_asymptotic}, the asymptotic variance
$\sigma^2$ must be consistently estimated. The estimators
$\widehat{\gamma}_j^{F}$ for $\gamma_j$ are provided by
Equations~\eqref{eq:causal_fraga_treated}
and~\eqref{eq:causal_fraga_control}, and an estimator $\widehat{\kappa}$
for $\kappa$ follows naturally from the definition in
Assumption~\ref{asm:tail_comparability}, namely
\begin{equation}
  \widehat{\kappa}
  := \frac{\widehat{Q}_1(1-\tau_n)}{\widehat{Q}_0(1-\tau_n)}.
  \label{eq:kappa_estimator}
\end{equation}
In the following we propose an estimator $\widehat{a}_j$ for $a_j$.
Define the estimator of $a_j^2$ as
\begin{equation}
  \widehat{a}_{n,j}^2
  := \frac{1}{k_0} \sum_{i=1}^{n} \widehat{R}_{n,j,i}^{\,2},
  \qquad j \in \{0, 1\},
  \label{eq:an_estimator}
\end{equation}
where
\begin{equation*}
  \widehat{R}_{n,j,i}
  := \left(\frac{D_i}{\widehat{\pi}(X_i)}\right)^{\!j}
     \left(\frac{1 - D_i}{1 - \widehat{\pi}(X_i)}\right)^{\!1-j}
     \mathbf{1}_{\{Y_i > \widehat{q}_j(1-\beta_n)\}}
     \cdot \frac{1}{\widehat{\gamma}_j^{F}}\,
     \log \frac{Y_i - \widehat{q}_j(1-\alpha_n)}
               {\widehat{q}_j(1-\beta_n) - \widehat{q}_j(1-\alpha_n)}
     - \beta_n.
\end{equation*}

\begin{equation}
  \widehat{\sigma}^{2}
  := \min\{1, \widehat{\kappa}\}^{2}\,
     \bigl(\widehat{\gamma}_1^{F}\bigr)^{2}\,
     \widehat{a}_{n,1}^{2}
     + \min\{1, 1/\widehat{\kappa}\}^{2}\,
       \bigl(\widehat{\gamma}_0^{F}\bigr)^{2}\,
       \widehat{a}_{n,0}^{2}.
  \label{eq:sigma2_estimator}
\end{equation}

We next establish the consistency of the variance estimator.

\begin{theorem}
\label{thm:sigma2_consistency}
Let Assumptions~\ref{asm:causal_identification}--\ref{asm:tail_comparability} hold.
As $n \to +\infty$, suppose $\alpha_n \to 0$ with
$k = n\alpha_n \to +\infty$, $\beta_n \to 0$ with
$k_0 = n\beta_n \to +\infty$, and $\beta_n / \alpha_n \to 0$.
If for each $j \in \{0, 1\}$,
$\sqrt{k}\,A_j(n/k) \to \lambda_j \in \mathbb{R}$,
$(k_0 / k)^{\gamma_j} = o(k_0^{-1/2})$,
$A_{n,j} = o_p(k_0^{-1/2})$, and
$\log(\beta_n / \tau_n) = o(k_0^{1/2})$, then 
\begin{equation*}
  \widehat{\sigma}^{2} \xrightarrow{p} \sigma^{2},
\end{equation*}
where $\widehat{\sigma}^{2}$ is defined by
Equation~\eqref{eq:sigma2_estimator}.
\end{theorem}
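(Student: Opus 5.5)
By the continuous mapping theorem it suffices to show $\widehat{\gamma}_j^F \xrightarrow{p} \gamma_j$, $\widehat{a}_{n,j}^2 \xrightarrow{p} a_j^2$, and $\min\{1,\widehat\kappa\} \xrightarrow{p} \min\{1,\kappa\}$ together with $\min\{1,1/\widehat\kappa\} \xrightarrow{p} \min\{1,1/\kappa\}$. The map $\kappa\mapsto(\min\{1,\kappa\},\min\{1,1/\kappa\})$ is continuous on $[0,+\infty]$ (with $1/0=+\infty$), so these limits pass directly to $\widehat\sigma^2$.

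\textbf{Consistency of $\widehat\gamma_j^F$ and $\widehat\kappa$.} The first limit is Theorem~\ref{thm:causal_fraga_consistency}. For $\widehat\kappa$, write
\[
\widehat\kappa=\frac{\widehat Q_1(1-\tau_n)/q_1(1-\tau_n)}{\widehat Q_0(1-\tau_n)/q_0(1-\tau_n)}\cdot\frac{q_1(1-\tau_n)}{q_0(1-\tau_n)}.
\]
By Lemma~\ref{lem:extreme_qte_linearization}, both ratios $\widehat Q_j/q_j$ tend to $1$ in probability. By Assumption~\ref{asm:tail_comparability}, the last factor tends to $\kappa$. Hence $\widehat\kappa\xrightarrow{p}\kappa$ in the extended sense: when $\kappa=0$ or $\kappa=+\infty$, the random factor is $1+o_p(1)$, so $\widehat\kappa$ still tends to $0$ or to $+\infty$ in probability.

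\textbf{Consistency of $\widehat a_{n,j}^2$.} This is the main step. Set
\[
T_i=w_{j,i}\mathbf 1_{\{Y_i>\widehat q_j(1-\beta_n)\}}\,\widehat\gamma_j^{-1}\,\widehat L_i,\qquad \widehat L_i=\log\frac{Y_i-\widehat q_j(1-\alpha_n)}{\widehat q_j(1-\beta_n)-\widehat q_j(1-\alpha_n)},
\]
where $w_{j,i}$ is the IPW weight, so that $\widehat R_{n,j,i}=T_i-\beta_n$. Expanding the square gives
\[
\widehat a_{n,j}^2=\frac1{k_0}\sum_i T_i^2-\frac{2\beta_n}{k_0}\sum_i T_i+\frac{n\beta_n^2}{k_0}.
\]
The last term equals $\beta_n\to0$. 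The middle term equals $2\beta_n\cdot(\widehat\gamma_j^F/\widehat\gamma_j^F)(1+o_p(1))=o_p(1)$, because $k_0^{-1}\sum_i w_{j,i}\mathbf 1\{\cdot\}\widehat L_i$ is exactly $\widehat\gamma_j^F$ by definition. The same expansion applied to the oracle quantity gives $a_{n,j}^2=n\mathbb E(R_{n,j}^2)/k_0=\beta_n^{-1}\mathbb E[w_j^2\mathbf 1\{Z>\beta_n^{-1}\}\log^2(Z\beta_n)]-\beta_n$. It therefore remains to show
\[
\frac1{k_0}\sum_i w_{j,i}^2\mathbf 1_{\{Y_i>\widehat q_j(1-\beta_n)\}}\widehat L_i^2=\widehat\gamma_j^{2}\,\beta_n^{-1}\mathbb E\bigl[w_j^2\mathbf 1\{Z>\beta_n^{-1}\}\log^2(Z\beta_n)\bigr]+o_p(1).
\]

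I would carry out this reduction in three replacements, mirroring the five-component decomposition in the proof of Lemma~\ref{thm:causal_fraga_equation}, now applied to squared logs. First, replace $\widehat\pi$ by $\pi$, using the sieve rate in Assumption~\ref{asm:sieve_assumptions} and the bounds $c<\pi<1-c$. Second, replace $\widehat q_j(1-\beta_n),\widehat q_j(1-\alpha_n)$ by their population counterparts, using $\widehat q_j(1-\beta_n)/q_j(1-\beta_n)\to1$ at rate $k_0^{-1/2}$ (Zhang's intermediate-quantile result). Third, replace the shifted log by $\gamma_j\log(Z\beta_n)$, using first-order regular variation and $(k_0/k)^{\gamma_j}\to0$; this gives $\log\frac{U(Z)-U(n/k)}{U(n/k_0)-U(n/k)}=\gamma_j\log(Z\beta_n)+o_p(1)$ uniformly on the exceedance set, because the offset $U(n/k)/U(n/k_0)\to0$. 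After these replacements, what remains is an i.i.d. average $k_0^{-1}\sum_i V_{n,i}$ with $V_{n,i}=w_{j,i}^2\mathbf 1\{Z_i>\beta_n^{-1}\}\gamma_j^2\log^2(Z_i\beta_n)$. Its variance is $k_0^{-2}n\,\mathbb E V^2=O(1/k_0)\to0$, since $\mathbb E[\log^4(Z\beta_n)\mid Z>\beta_n^{-1}]=4!$ and the weights are bounded. Chebyshev then gives concentration around the mean. Combined with $\widehat\gamma_j\to\gamma_j$, which cancels the $\gamma_j^2$ factor, this yields $\widehat a_{n,j}^2-a_{n,j}^2=o_p(1)$, and Assumption~\ref{asm:an_convergence} gives $\widehat a_{n,j}^2\to a_j^2$.

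\textbf{Main obstacle.} I expect the hardest part to be the second and third replacements. The squared log is unbounded and the indicator threshold is random, so the error must be controlled uniformly over the exceedances. I would first try to bound the difference of squares by $|\widehat L_i-L_i|\cdot(|\widehat L_i|+|L_i|)$. The sup of $|\widehat L_i-L_i|$ over exceedances is $o_p(1)$ by the quantile ratios, and $k_0^{-1}\sum_i(|\widehat L_i|+|L_i|)\mathbf 1\{\cdot\}=O_p(1)$. Observations lying between the true and estimated threshold number $O_p(\sqrt{k_0})=o_p(k_0)$, and each has $L_i=o_p(1)$, so their contribution is negligible.
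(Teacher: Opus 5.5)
Your proposal is correct and follows essentially the paper's route. The shared steps are an oracle law of large numbers for $k_0^{-1}\sum_i R_{n,j,i}^2$, replacement of $\widehat\pi$, $\widehat q_j(1-\alpha_n)$, $\widehat q_j(1-\beta_n)$ and the shifted log-spacing by their population counterparts through a difference-of-squares bound, and then consistency of $\widehat\gamma_j^F$ and $\widehat\kappa$ together with continuous mapping.

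Your bookkeeping is tighter than the paper's. Normalizing by $k_0$ throughout and using the exact identity $k_0^{-1}\sum_i T_i=1$ avoids the paper's step of multiplying an $o_p(1)$ bound obtained at the $1/n$ scale by $1/\beta_n$. One point needs tightening: your uniform $o_p(1)$ expansion of the shifted log over all exceedances should be justified via the second-order condition of Assumption~\ref{asm:second_order}, or replaced by an averaged Potter-bound argument. First-order regular variation alone does not give uniformity out to $Z_i\beta_n$ of order $k_0$.
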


Combining the asymptotic normality of
Theorem~\ref{thm:extremal_qte_asymptotic} with the consistency of the
variance estimator established in
Theorem~\ref{thm:sigma2_consistency} yields
\begin{equation*}
  \frac{\widehat{\phi}_n\bigl(\widehat{\Delta}(1-\tau_n)
                             - \Delta(1-\tau_n)\bigr)}
       {\widehat{\sigma}}
  \xrightarrow{d} \mathcal{N}(0, 1).
\end{equation*}
Hence an asymptotic $(1-\alpha)$-level confidence interval for the
extremal quantile treatment effect $\Delta(1-\tau_n)$ is
\begin{equation}
  \bigl[\,\,
    \widehat{\Delta}(1-\tau_n)
    - z_{1-\alpha/2}\,\frac{\widehat{\sigma}}{\widehat{\phi}_n},
    \;
    \widehat{\Delta}(1-\tau_n)
    + z_{1-\alpha/2}\,\frac{\widehat{\sigma}}{\widehat{\phi}_n}
  \,\bigr],
  \label{eq:confidence_interval}
\end{equation}
where $z_{1-\alpha/2}$ denotes the $(1-\alpha/2)$-quantile of the
standard normal distribution.

Moreover, the linearization in
Lemma~\ref{lem:extreme_qte_linearization} only involves the EVI
estimator through the deviation
$\sqrt{k_\gamma}\,(\widehat{\gamma}_j - \gamma_j)$, where $k_\gamma$ is
the number of tail observations entering the estimator (equal to $k_0$
for the causal Fraga estimator).  Hence the difference extrapolation
construction \eqref{eq:extremal_qte_estimator_add} and the linearization
remain valid for any causal extreme value index estimator (e.g.\ the
causal Hill estimator of Deuber et al.~\cite{deuber2024estimation}) with
$\widehat{\gamma}_j \xrightarrow{p} \gamma_j$ and
$\sqrt{k_\gamma}\,(\widehat{\gamma}_j - \gamma_j) = O_p(1)$, upon
replacing $k_0$ by $k_\gamma$ in the normalising factor, provided the
second-order term is negligible, $\sqrt{k_\gamma}\,A_j(n/k) \to 0$, and
$\log(\beta_n/\tau_n) = o(k_\gamma^{1/2})$.  In the simulation study of
Section~\ref{sec:simulations}, we also combine the proposed difference
extrapolation with the causal Hill estimator to construct an extremal
QTE estimator and examine its properties.  Full location invariance of
the resulting QTE estimator, however, requires the EVI estimator itself
to be location-invariant.

In summary, the difference extrapolation estimator
$\widehat{\Delta}(1-\tau_n)$ combines the location-invariant causal
Fraga estimator of Section~\ref{sec:causal_eviestimator} with a
difference extrapolation scheme to yield a location-invariant,
first-order unbiased estimator of the extremal QTE, together with a
consistent variance estimator and asymptotically valid confidence
intervals; its finite-sample behaviour is examined in
Section~\ref{sec:simulations}.

\section{Simulations}
\label{sec:simulations}

We conducted simulations to examine the finite-sample behavior of
the proposed difference extrapolation extremal QTE estimator
\eqref{eq:extremal_qte} and the associated confidence interval
\eqref{eq:confidence_interval}, and to compare them with alternative
methods.  All simulations were carried out in Python; the code is
available on GitHub.

\subsection{Experimental Setup}
\label{sec:sim_setup}

\paragraph{Data generating process.}
We follow a setting similar to Zhang~\cite{zhang2018extremal} and
Deuber et al.~\cite{deuber2024estimation}.  Let
$X \sim \mathrm{Uniform}(0, 1)$ and $U \sim \mathrm{Uniform}(0, 1)$ be
independent uniform random variables; the treatment is assigned by
$D = \mathbf{1}\{U \le \pi(X)\}$ with the propensity score
$\pi(x) = 0.5\,x^{2} + 0.25$.  Conditional on $X$, the potential
outcomes, shifted by a common location offset $u$, are generated from
one of the following three heavy-tailed models:
\begin{equation*}
  \begin{cases}
    H_1:\ Y(1) = 5\,S\,(1 + X) + u,\quad Y(0) = S\,(1 + X) + u; \\
    H_2:\ Y(1) = C_2\,\exp(X) + u,\quad Y(0) = C_3\,\exp(X) + u; \\
    H_3:\ Y(1) = P_{1.75 + X,\, 2} + u,\quad Y(0) = P_{1.75 + 5X,\, 1} + u,
  \end{cases}
\end{equation*}
where $S$ follows a Student-$t$ distribution with $3$ degrees of
freedom, $C_s$ is Fréchet-distributed with shape parameter $s$,
location $0$ and scale $1$, and $P_{a,\, b}$ is Pareto distributed
with shape parameter $a$ and scale $b$.  The extreme value indices of
the three models are $\gamma_1 = \gamma_0 = 1/3$ under $H_1$,
$\gamma_1 = 1/2$, $\gamma_0 = 1/3$ under $H_2$, and
$\gamma_1 = \gamma_0 = 4/7$ under $H_3$.  We consider
two sample sizes $n \in \{1000, 5000\}$, three extreme levels
$\tau_n \in \{5/n,\; 1/n,\; 5/(n\log n)\}$, the location shift
$u \in \{0, 10, 20, 30\}$, and set the primary threshold
parameter to $k = n^{0.65}$, in line with the choice of Deuber
et al.~\cite{deuber2024estimation} to facilitate comparison.  In addition,
we analyse the sensitivity of the methods to the choice of $k$.  Each
setting is replicated over $1000$ Monte Carlo runs, and all experiments
involving bootstrap methods use $1000$ bootstrap replications.

\paragraph{Methods under comparison.}
For the point estimation of the extreme value index and the extremal
quantile treatment effect, we mainly compare two methods and their
corresponding extrapolation variants.

\paragraph{Causal Hill estimator.}
The causal Hill estimator of~\cite{deuber2024estimation} is
\begin{equation*}
  \widehat{\gamma}_j^{H}(\alpha_n)
  = \frac{1}{k}\sum_{i=1}^{n}
    \left(\frac{D_i}{\widehat{\pi}(X_i)}\right)^{\!j}
    \left(\frac{1 - D_i}{1 - \widehat{\pi}(X_i)}\right)^{\!1-j}
    \mathbf{1}_{\{Y_i > \widehat{q}_j(1 - \alpha_n)\}}\,
    \bigl(\log Y_i - \log \widehat{q}_j(1 - \alpha_n)\bigr),
\end{equation*}
with $k = n\alpha_n$.  Substituting $\widehat{\gamma}_j^{H}$ for
$\widehat{\gamma}_j^{F}$ in
Equations~\eqref{eq:extremal_qte_estimator}
and~\eqref{eq:extremal_qte_estimator_add} yields the causal Hill
multiplicative and difference extrapolation QTE estimators,
respectively; the causal Hill estimator combined with the multiplicative
extrapolation is precisely the estimator used by
Deuber et al.~\cite{deuber2024estimation}.

\paragraph{Causal Fraga estimator.}
The causal Fraga estimator, our proposed location-invariant causal
extreme value index estimator, is defined in
Equations~\eqref{eq:causal_fraga_treated}
and~\eqref{eq:causal_fraga_control}; the corresponding difference
extrapolation QTE estimator is given by
Equations~\eqref{eq:extremal_qte_estimator_add}
and~\eqref{eq:extremal_qte}.

For the confidence intervals of the extremal QTE, both extrapolation
methods based on the causal Hill estimator use the bootstrap to
estimate the standard error $\widehat{\mathrm{se}}_{\mathrm{boot}}$ together
with the full-sample point estimate, yielding intervals of the form
\begin{equation*}
  \bigl[\,\widehat{\Delta}(1-\tau_n)
    - z_{1-\alpha/2}\,\widehat{\mathrm{se}}_{\mathrm{boot}},\;
    \widehat{\Delta}(1-\tau_n)
    + z_{1-\alpha/2}\,\widehat{\mathrm{se}}_{\mathrm{boot}}\,\bigr],
\end{equation*}
where $\widehat{\Delta}(1-\tau_n)$ denotes the full-sample point
estimate of the extremal QTE.  Although Deuber
et~al.~\cite{deuber2024estimation} proposed an analytical variance
estimator for their extremal QTE estimator, their simulation results
show that confidence intervals based on the bootstrap are equally
valid.  We therefore also rely on the bootstrap, which additionally
allows a direct comparison with the difference extrapolation variant of
their estimator.
The extrapolation methods based on the causal Fraga estimator construct
confidence intervals both by this bootstrap rule and by the analytical
rule of Equation~\eqref{eq:confidence_interval}.  Since both the
conditions and the derivation of the confidence intervals are
essentially the same for the multiplicative and the difference
extrapolation schemes (see Section~\ref{sec:mult_extrapolation} for
details), we only verify the analytical rule for the difference
extrapolation estimator.

\paragraph{Experimental procedure.}
The main focus of the simulation study is twofold: to investigate the
effect of the location shift $u$ on the QTE estimators of all
methods, and to analyse the sensitivity of the QTE estimators
to the choice of $k$.  In addition, we report the empirical coverage of the confidence
intervals to assess the asymptotic properties of the QTE estimators.

For the causal Fraga estimator, the auxiliary threshold $k_0$ is
chosen in an adaptive manner based on the extremal value index
estimates from both arms: we set $k_0 = k^{m}$.
Unbiasedness requires $m < 2\,\gamma_j / (1 + 2\,\gamma_j)$, whereas
the variance of the estimator increases as $k_0$ decreases.  To obtain
a suitable $k_0$, we adapt the automatic Fraga
procedure~\cite{fraga2001location} with the following modifications:
\begin{enumerate}
  \item set the initial $k_0^\star = 2\,k^{2/3}$;
  \item with $\beta_n = k_0^\star / n$, estimate the causal Fraga EVI
        $\widehat{\gamma}_j^{F}$ from
        Equations~\eqref{eq:causal_fraga_treated}
        and~\eqref{eq:causal_fraga_control} for $j \in \{0, 1\}$;
  \item compute $m_j^\star = 2\,\widehat{\gamma}_j^{F} / (1 + 2\,\widehat{\gamma}_j^{F})$
        for $j \in \{0, 1\}$;
  \item set $m = \max\bigl(m_{\min},\; \min_{j \in \{0,1\}} (m_j^\star - \eta)\bigr)$;
  \item set $k_0 = k^{m}$ and $\beta_n = k_0 / n$, shared by both arms.
\end{enumerate}
In the experiments we set $\eta = 0.05$ as a small safety margin:
since $m_j^\star$ is based on the estimated $\widehat{\gamma}_j^{F}$,
subtracting a positive $\eta$ guards against overestimation and keeps
the unbiasedness requirement $m < 2\,\gamma_j/(1+2\,\gamma_j)$ satisfied
for both arms.  By Theorem~\ref{thm:causal_fraga_clt},
$\sqrt{k_0}\,(\widehat{\gamma}_j^{F} - \gamma_j) = O_p(1)$, so the sampling
fluctuation of $\widehat{\gamma}_j^{F}$ is of order $k_0^{-1/2}$, which at
$n = 1000$ (yielding $k_0 \approx 40$) is about $0.03$; the choice
$\eta = 0.05$ comfortably covers this fluctuation.  Since $m$ cannot
vanish, we impose the lower bound $m_{\min} = 0.1$.  The initial value $k_0^\star$
follows the choice of the original Fraga
procedure~\cite{fraga2001location}; the modification consists in
subtracting a constant $\eta$ from $\min_j m_j^\star$.  For the causal
Hill based \emph{difference} extrapolation QTE estimator, which
requires the auxiliary threshold $\beta_n$, we by contrast fix
$k_0 = 2\,k^{2/3}$, i.e.\
$\beta_n = 2\,k^{2/3} / n$, without the adaptive step.  For the size
$h_n$ of the sieve basis functions in the propensity score estimation,
we use $h_n = \lfloor 2\,n^{1/11} \rfloor$, in line with the choice of
Deuber et~al.~\cite{deuber2024estimation}.

\subsection{Simulation Results}
\label{sec:sim_results}

\begin{figure}[t]
  \centering
  \includegraphics[width=\linewidth]{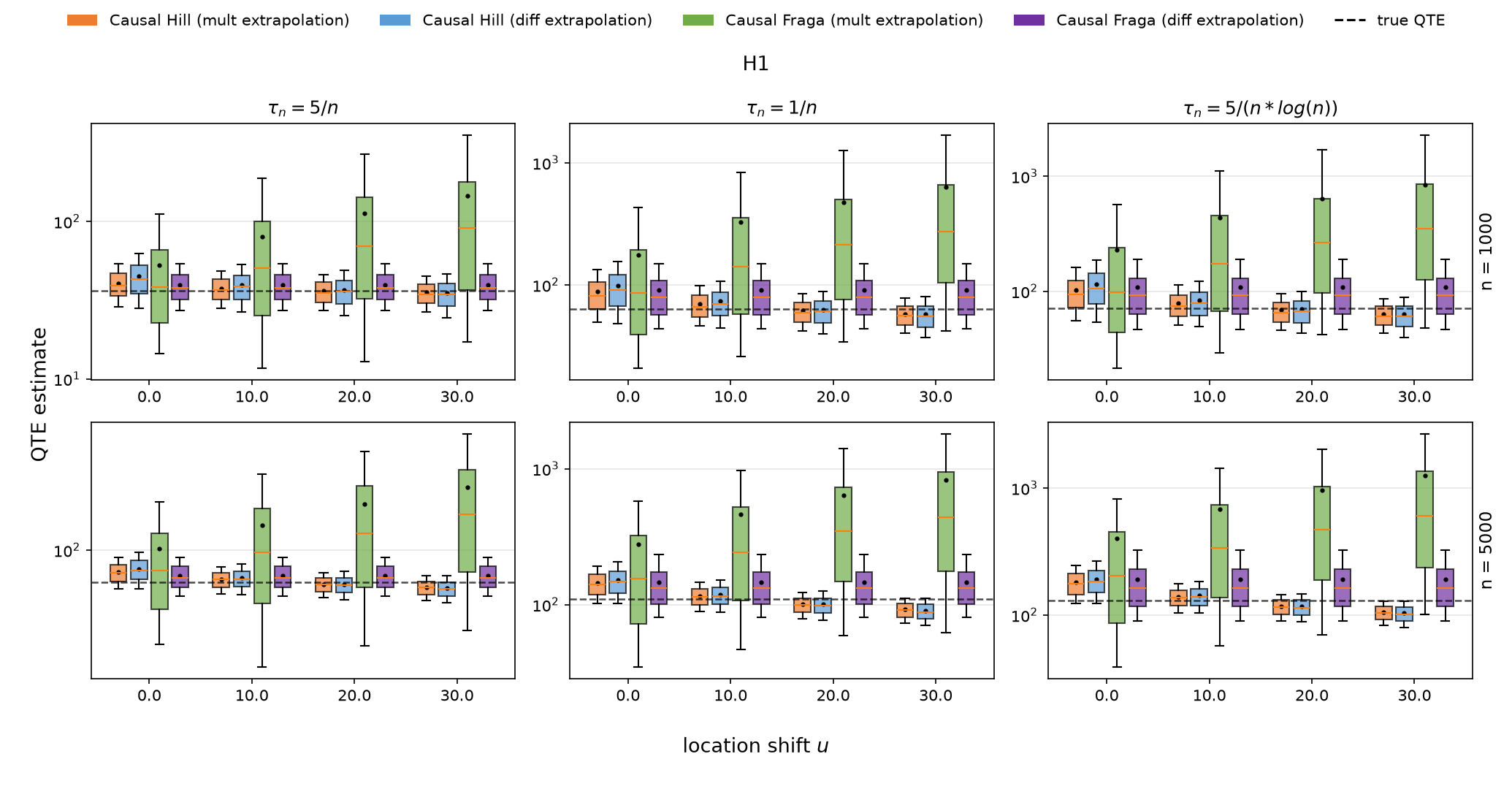}
  \caption{Quantile treatment effect estimations under the location
  shift $u$ for model $H_1$.  The boxplot whiskers correspond to the
  $0.1$ and $0.9$ quantiles, the orange line is the median, and the
  black dots are the means.}
  \label{fig:qte_shift_H1}
\end{figure}

\begin{figure}[t]
  \centering
  \includegraphics[width=\linewidth]{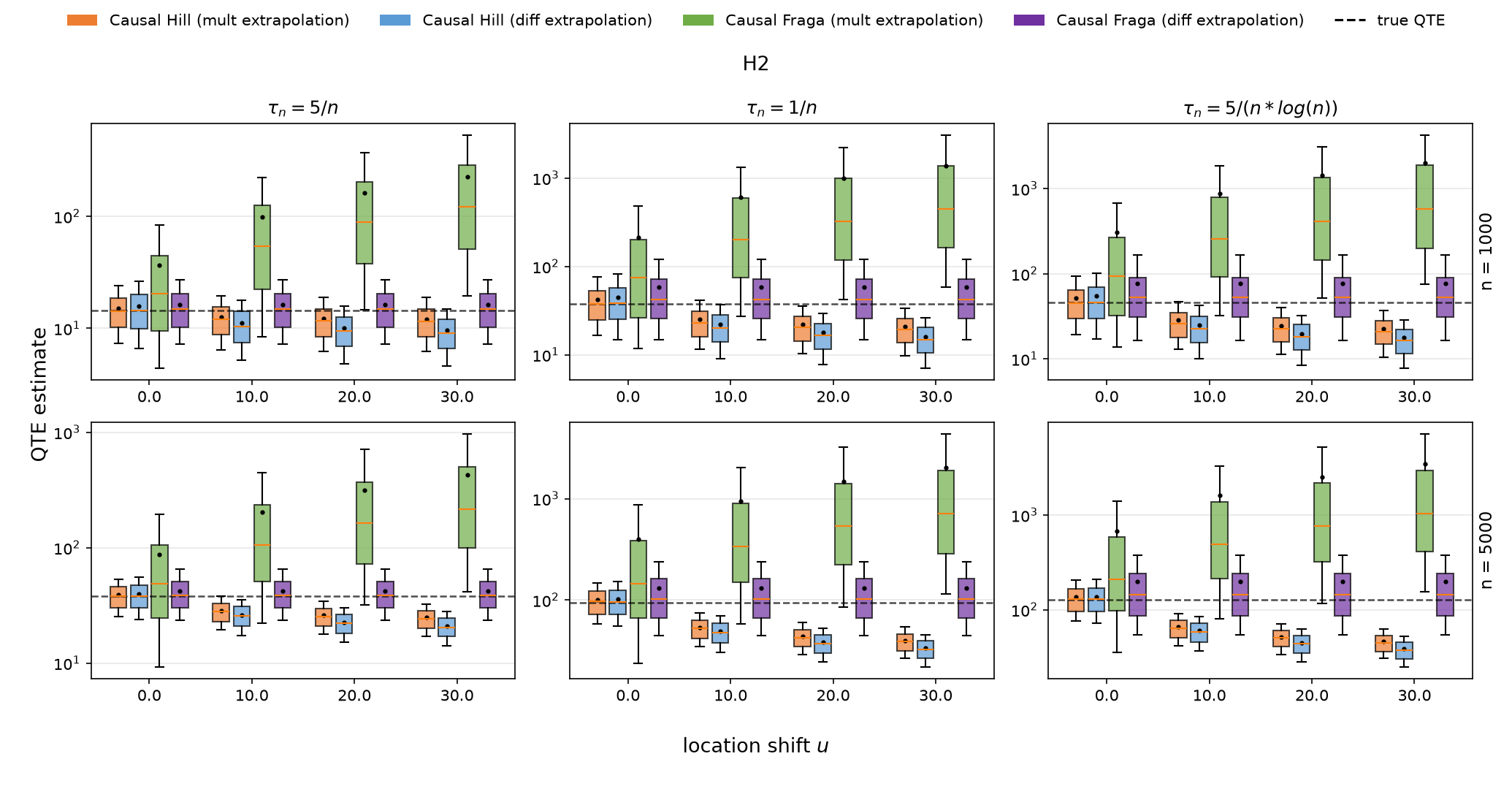}
  \caption{Quantile treatment effect estimations under the location
  shift $u$ for model $H_2$.  The boxplot whiskers correspond to the
  $0.1$ and $0.9$ quantiles, the orange line is the median, and the
  black dots are the means.}
  \label{fig:qte_shift_H2}
\end{figure}

\begin{figure}[t]
  \centering
  \includegraphics[width=\linewidth]{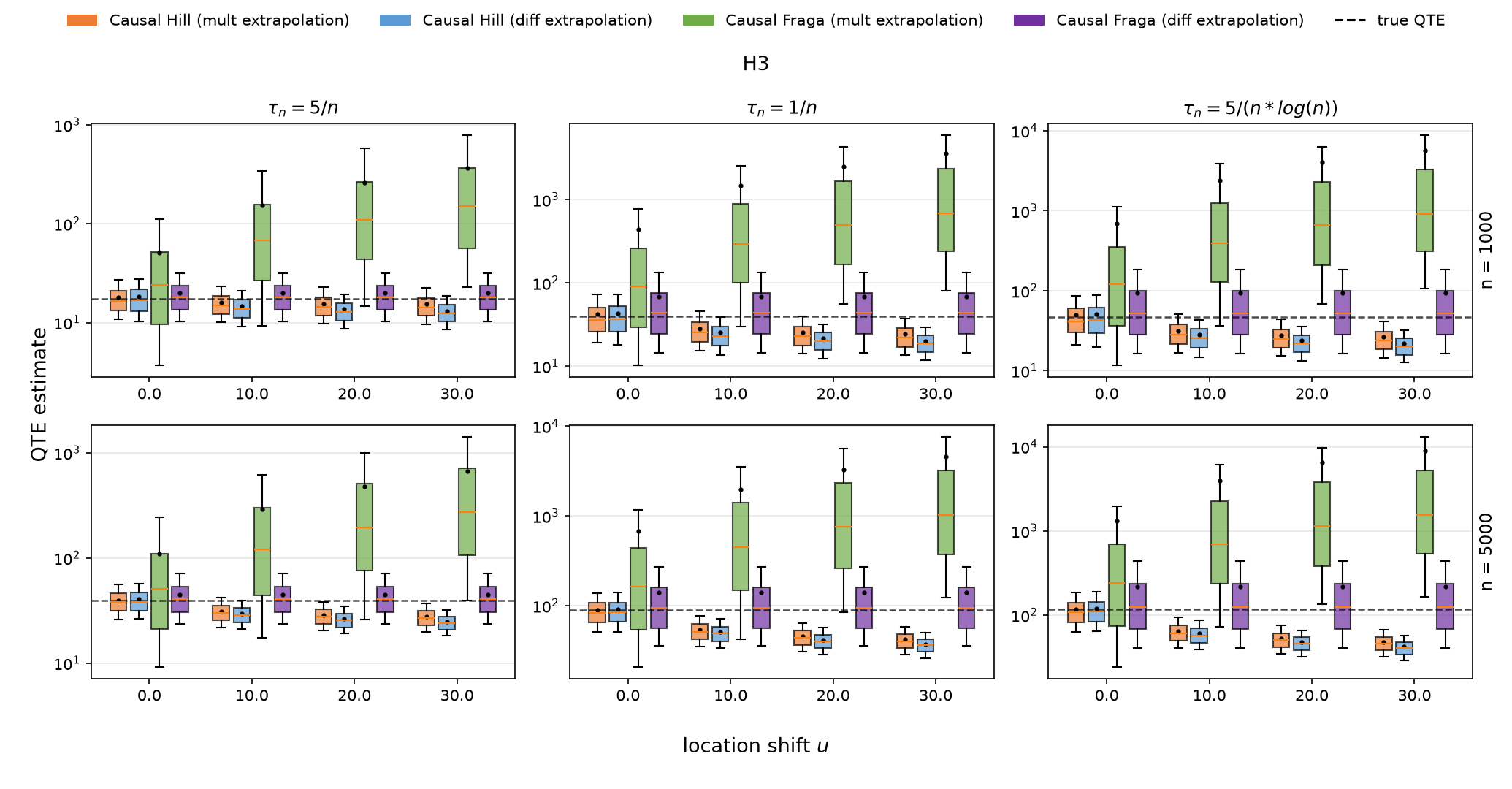}
  \caption{Quantile treatment effect estimations under the location
  shift $u$ for model $H_3$.  The boxplot whiskers correspond to the
  $0.1$ and $0.9$ quantiles, the orange line is the median, and the
  black dots are the means.}
  \label{fig:qte_shift_H3}
\end{figure}

\begin{figure}[t]
  \centering
  \includegraphics[width=\linewidth]{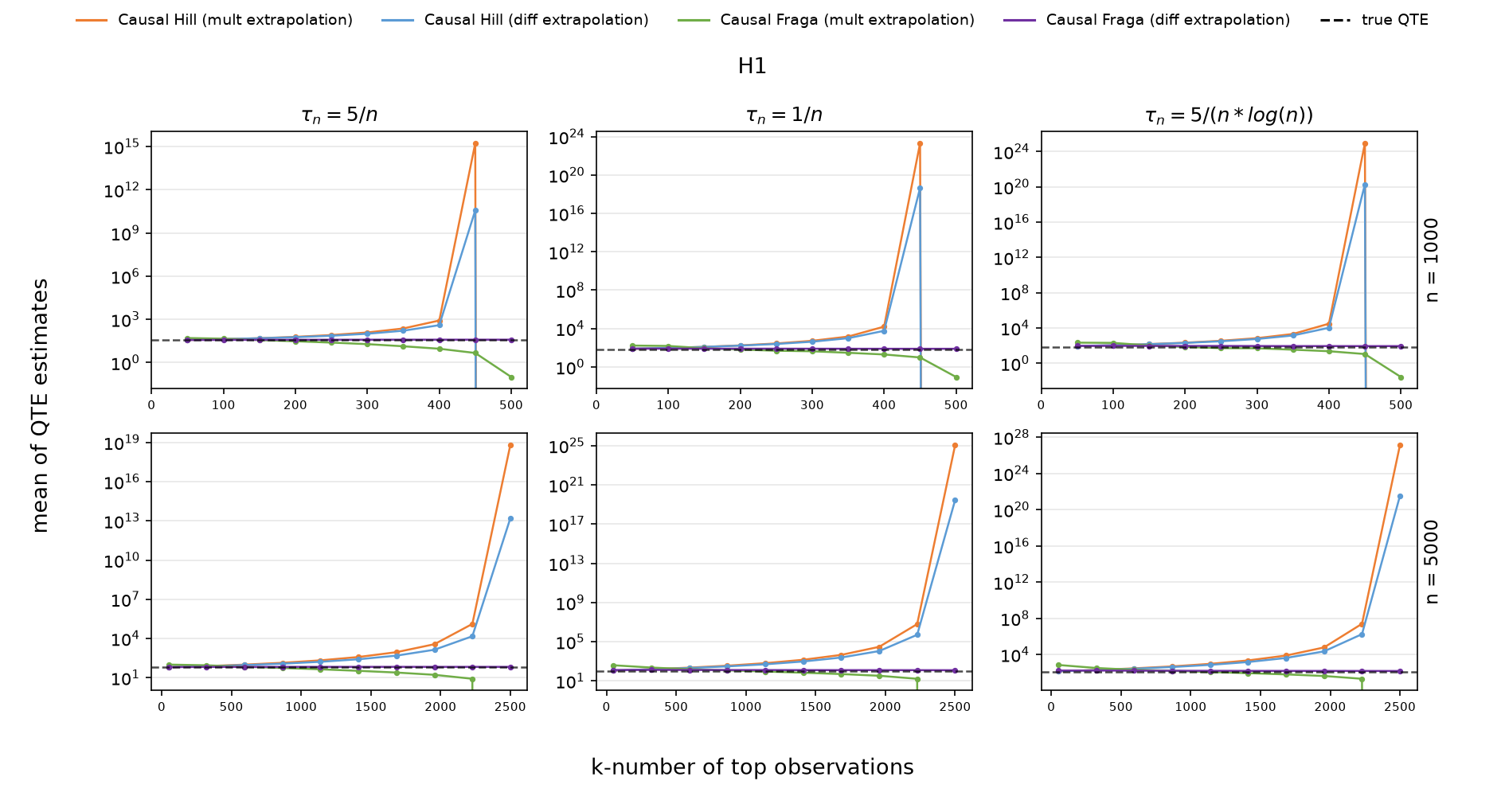}
  \caption{Mean of the quantile treatment effect estimations as a
  function of the threshold parameter $k$ for model $H_1$.}
  \label{fig:qte_k_mean_H1}
\end{figure}

\begin{figure}[t]
  \centering
  \includegraphics[width=\linewidth]{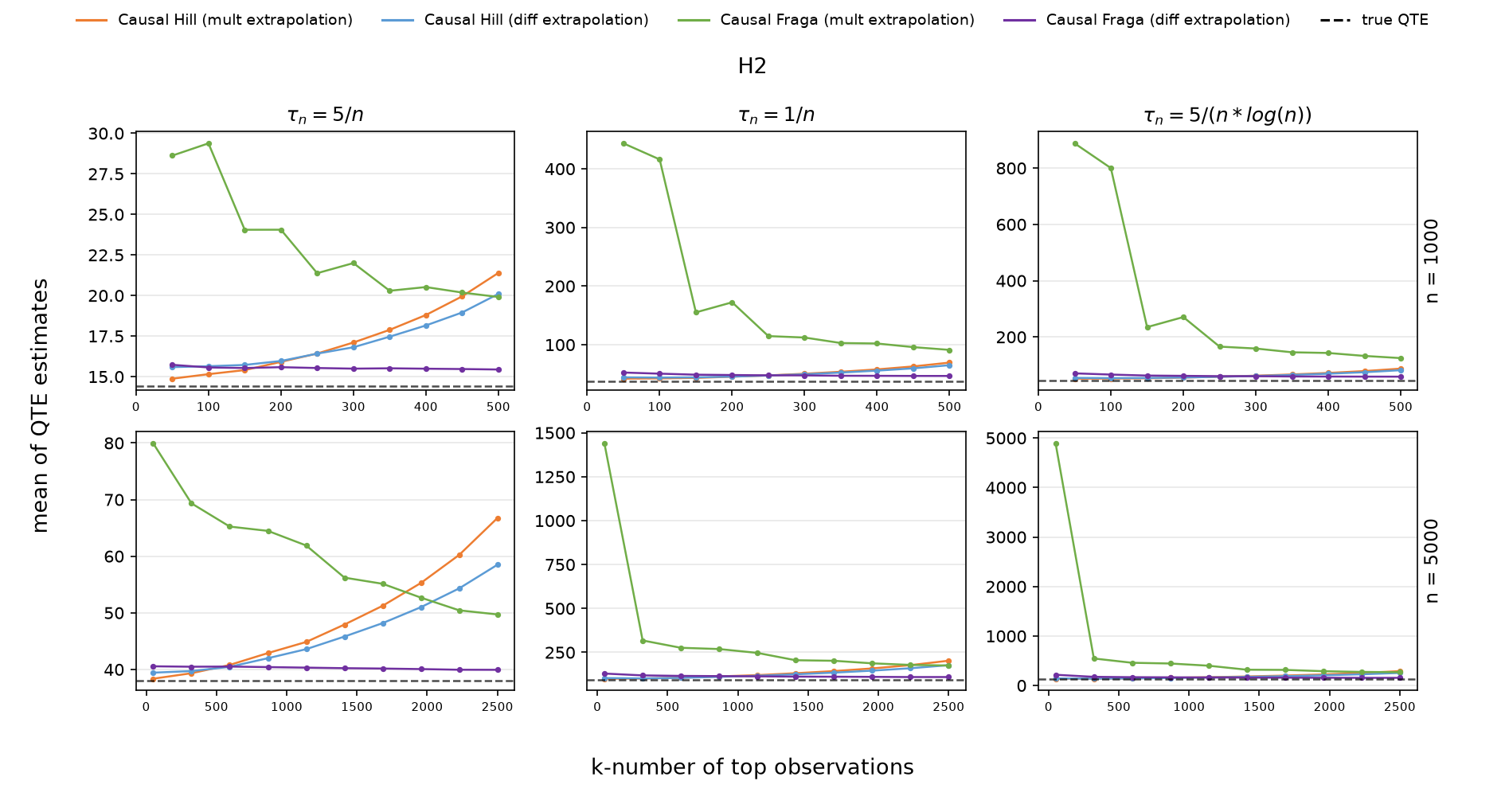}
  \caption{Mean of the quantile treatment effect estimations as a
  function of the threshold parameter $k$ for model $H_2$.}
  \label{fig:qte_k_mean_H2}
\end{figure}

\begin{figure}[t]
  \centering
  \includegraphics[width=\linewidth]{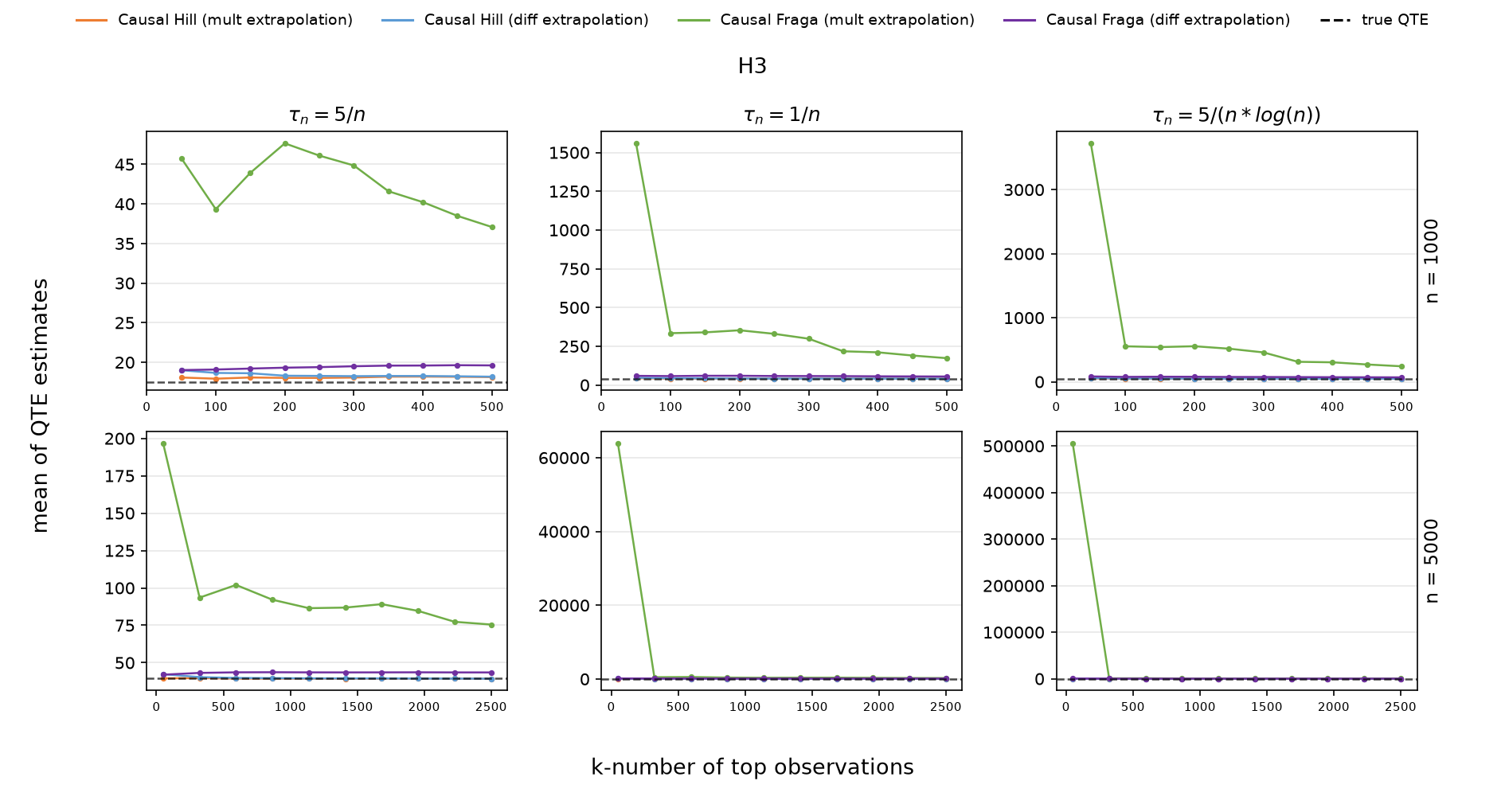}
  \caption{Mean of the quantile treatment effect estimations as a
  function of the threshold parameter $k$ for model $H_3$.}
  \label{fig:qte_k_mean_H3}
\end{figure}

Figures~\ref{fig:qte_shift_H1}--\ref{fig:qte_shift_H3} display the
distributions of the QTE estimates across the location shift $u$ for
the three models, and highlight the interplay between the EVI estimator
and the extrapolation scheme.  Only the location-invariant causal Fraga
estimator combined with the difference extrapolation yields genuinely
location-invariant QTE estimates, whose distribution remains stable as
$u$ grows.  For the causal Hill estimator with multiplicative
extrapolation, both the extreme value index estimator and the
intermediate quantile estimator shift with the location offset and in
opposite directions, so the net change of the resulting QTE estimate is
somewhat smaller than for its difference-extrapolation counterpart.
Finally, the QTE estimates based on the causal Fraga estimator are
generally more variable, reflecting the larger variance of the causal
Fraga extreme value index estimator itself.  Overall, the difference
extrapolation effectively reduces the variance inherited from the extreme
value index estimator, but it adds an extra variance component stemming
from estimating the intermediate quantile at the level $\beta_n$.

Figures~\ref{fig:qte_k_mean_H1}--\ref{fig:qte_k_mean_H3} report the
mean QTE estimates as a function of the threshold parameter $k$ for
the three extreme levels $\tau_n$ and the two sample sizes $n$.  The
QTE estimator that combines difference extrapolation with the causal
Fraga estimator behaves stably across all three models.  In contrast,
the two extrapolation variants based on the causal Hill estimator
fluctuate considerably under $H_1$ and $H_2$, and exhibit pronounced
spikes as $k$ increases, with estimates reaching extremely negative
values (truncated by the logarithmic scale), reflecting the fact that
the Hill extreme value index estimator itself can be highly unstable in
certain settings (see the discussion in
Fraga Alves~\cite{fraga2001location}).  Finally, under multiplicative
extrapolation, the high variance of the causal Fraga extreme value
index estimator, combined with small values of $k$, inflates the
variance and hence leads to a large mean bias.

Figures~\ref{fig:qte_ci_cov_H1}--\ref{fig:qte_ci_cov_H3} display the
empirical coverage of the confidence intervals for the three models.
The curve labelled ``Causal Fraga (asymp CI)'' corresponds to the
difference extrapolation estimator combined with the analytical rule of
Equation~\eqref{eq:confidence_interval}; all the other confidence
intervals are constructed with the bootstrap procedure.  The bootstrap
confidence intervals of the difference extrapolation estimator based on
the causal Fraga estimator attain coverage close to the nominal level,
supporting its asymptotic normality.  The analytical confidence
interval is closer to the nominal level at $n = 5000$ than at $n = 1000$,
indicating that the analytical variance estimator requires a larger
sample size to converge.  For the multiplicative extrapolation based on
the causal Fraga estimator, the bootstrap standard error remains
inflated at $n = 5000$ because of the large variance of the causal Fraga
extreme value index estimator itself, resulting in empirical coverages
above the nominal level.

Figure~\ref{fig:qte_ci_shift} compares the empirical coverage of the
bootstrap confidence intervals of the QTE estimator of Deuber et
al.~\cite{deuber2024estimation} and of the proposed estimator, for model
$H_2$ with $n = 5000$ and $\tau_n = 5/(n\log n)$, as the location shift
$u$ increases.  The coverage of the Deuber estimator drops rapidly as
$u$ grows, whereas that of the proposed estimator remains stable.
Additional results on the sensitivity of the QTE estimators to the
location shift $u$ and to the choice of $k$ are deferred to
Section~\ref{sec:additional_sim_results}.

\begin{figure}[t]
  \centering
  \includegraphics[width=\linewidth]{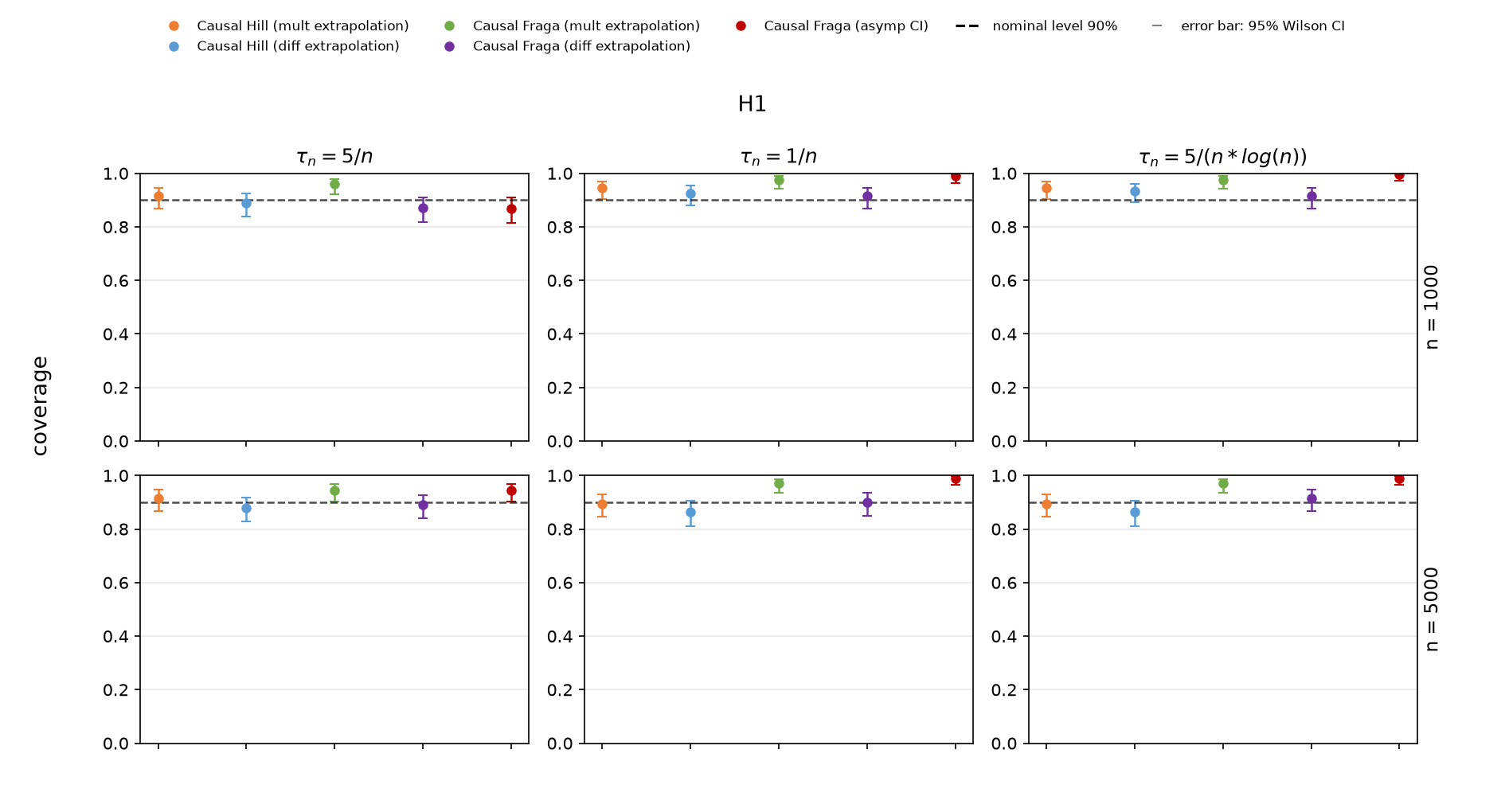}
  \caption{Empirical coverage of the confidence intervals for the
  quantile treatment effect for model $H_1$.}
  \label{fig:qte_ci_cov_H1}
\end{figure}

\begin{figure}[t]
  \centering
  \includegraphics[width=\linewidth]{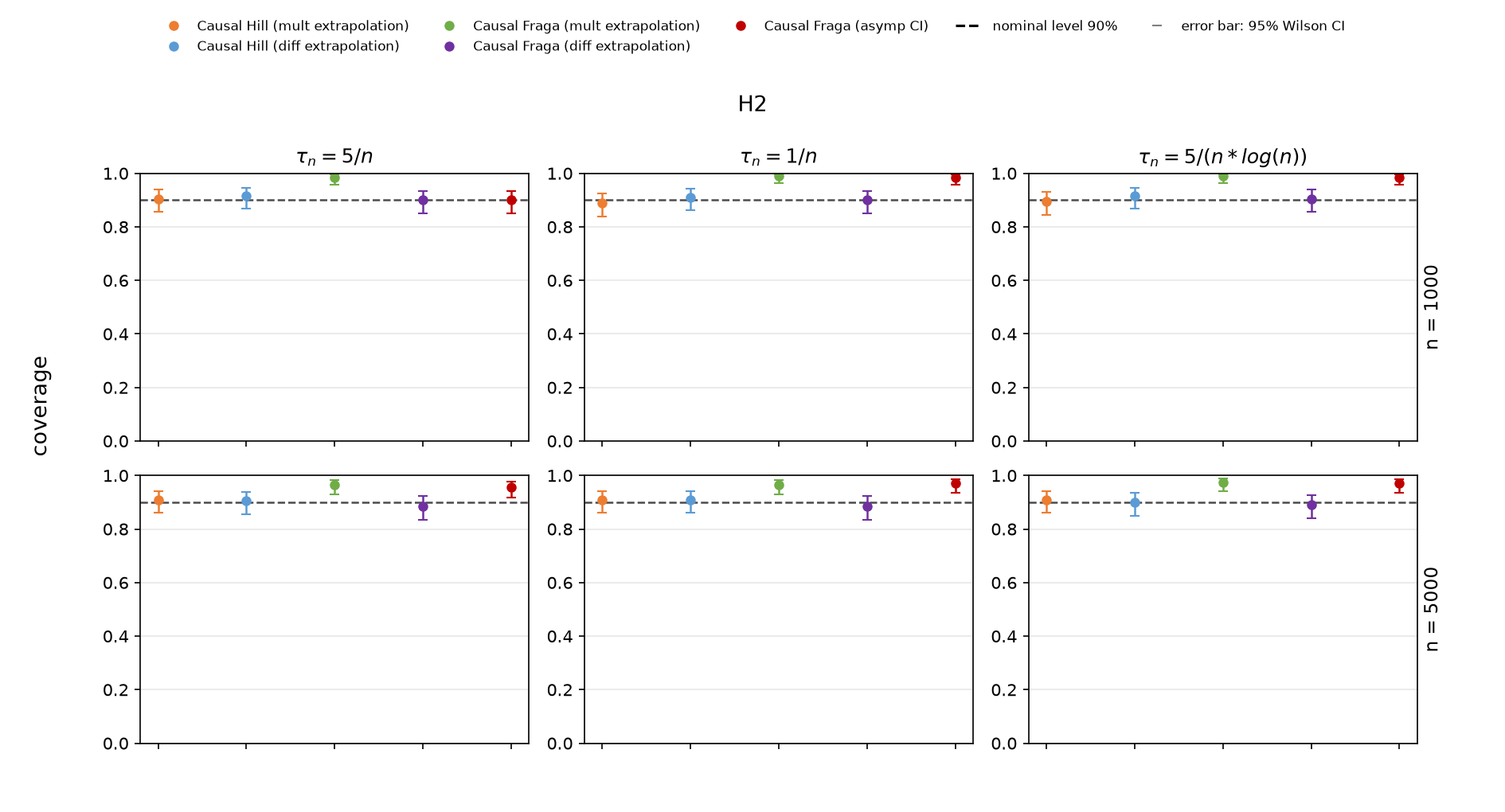}
  \caption{Empirical coverage of the confidence intervals for the
  quantile treatment effect for model $H_2$.}
  \label{fig:qte_ci_cov_H2}
\end{figure}

\begin{figure}[t]
  \centering
  \includegraphics[width=\linewidth]{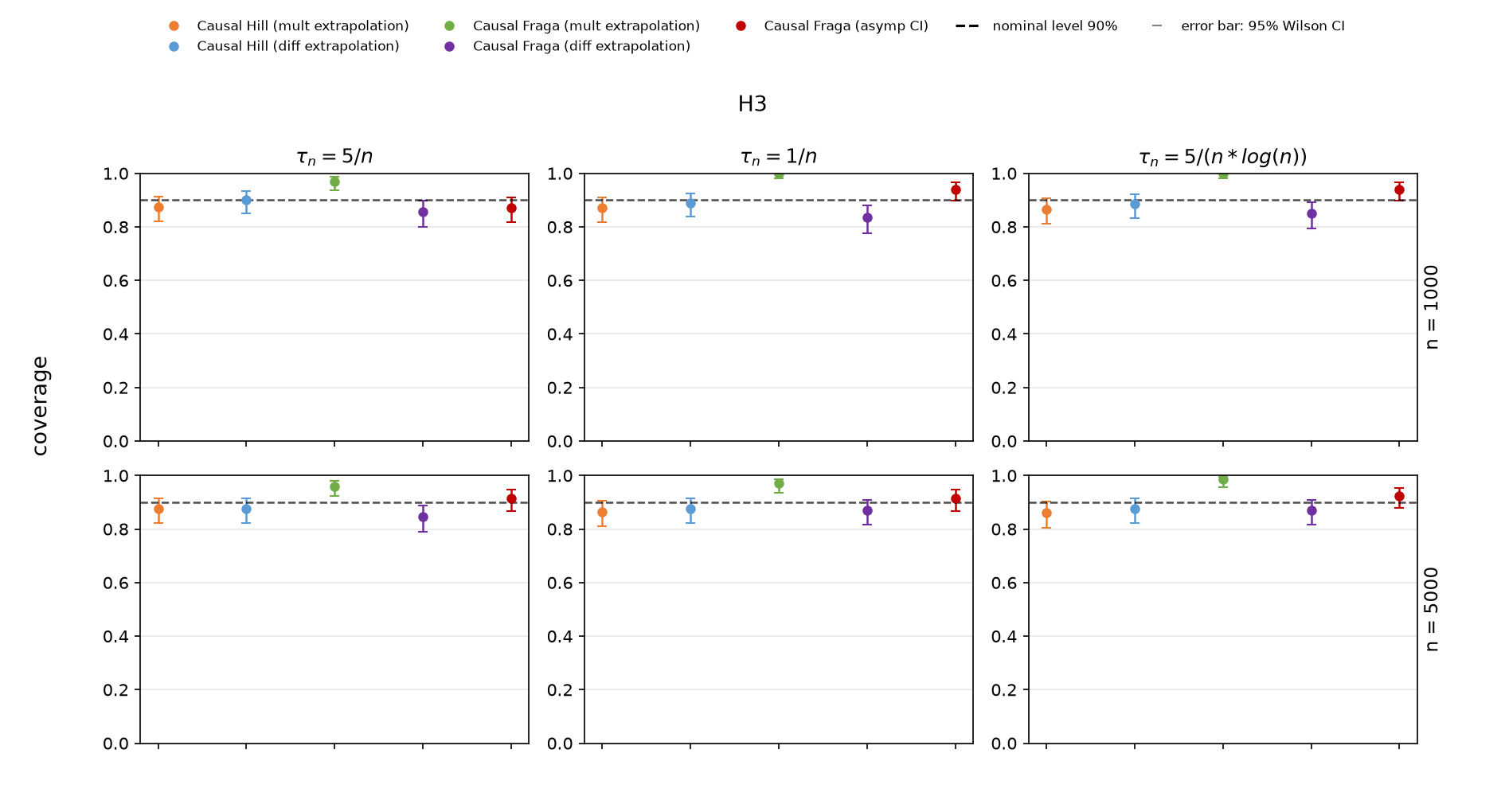}
  \caption{Empirical coverage of the confidence intervals for the
  quantile treatment effect for model $H_3$.}
  \label{fig:qte_ci_cov_H3}
\end{figure}

\begin{figure}[t]
  \centering
  \includegraphics[width=0.48\linewidth]{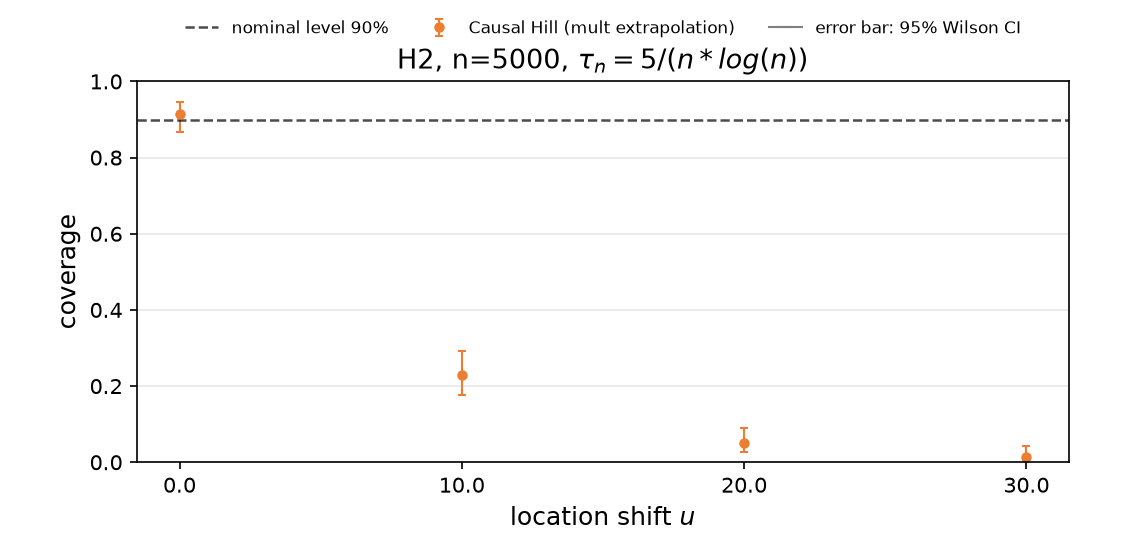}\hfill
  \includegraphics[width=0.48\linewidth]{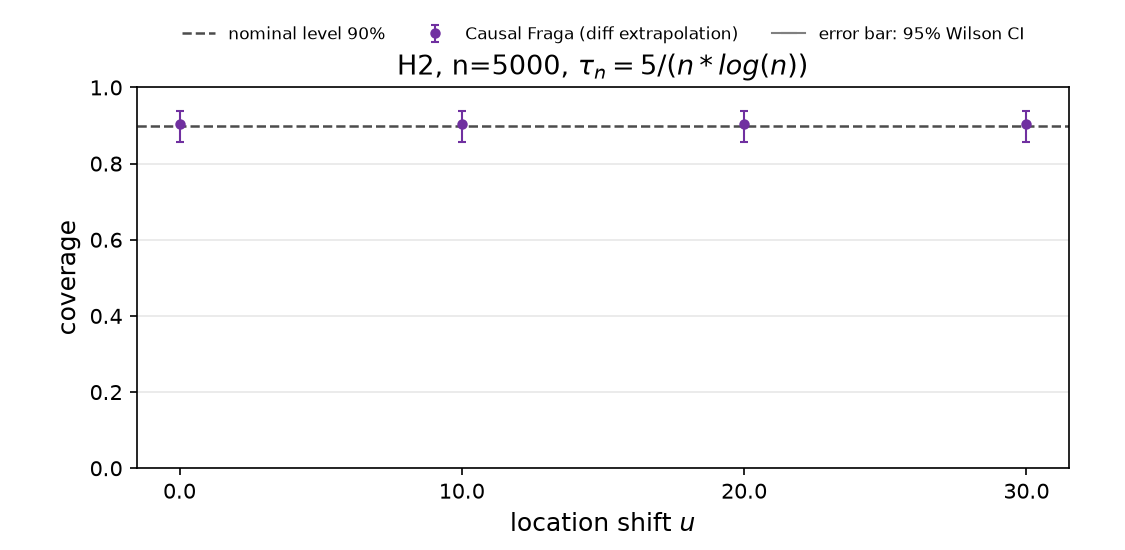}
  \caption{Empirical coverage of the bootstrap confidence intervals of
  the QTE estimator of Deuber et al.\ (left) and of the proposed QTE
  estimator (right), under the location shift $u$ for model $H_2$ with
  $n = 5000$ and $\tau_n = 5/(n\log n)$.}
  \label{fig:qte_ci_shift}
\end{figure}

\section{Conclusion}
\label{sec:conclusion}

We proposed a location-invariant estimator of the extremal quantile
treatment effect under unconfoundedness.  Building on the classical
location-invariant extreme value index estimator of Fraga
Alves~\cite{fraga2001location}, we constructed the causal Fraga
estimator by inverse propensity score weighting, and combined it with a
difference extrapolation scheme in which the location parameter cancels
in the differences of the intermediate quantiles.  We established the
consistency and asymptotic normality of the causal Fraga estimator, the
asymptotic normality of the resulting extremal QTE estimator, and the
consistency of an analytical variance estimator, which yields
asymptotically honest confidence intervals.  Through a simulation study
we verified the location invariance of the proposed estimator, its
stability as a function of the threshold $k$, and the coverage of the
associated confidence intervals.

The present work also suggests two directions for future research.
First, the Fraga-type estimator is known to have a relatively large
variance by construction.  Embedding causal counterparts of other
location-invariant extreme value index estimators with better
variance properties into our framework is a promising avenue for
improving the precision of the extremal QTE estimator.  Second, the
difference extrapolation scheme requires an additional auxiliary level
$\beta_n$, and the choice of $\beta_n$ currently follows an adaptive
rule without a formal optimality theory.  A thorough investigation of
how to choose $\beta_n$ is another potential direction, with the goal of
a data-driven rule balancing the bias and variance of the extrapolation.

\clearpage
\bibliographystyle{plain}
\bibliography{references}
\clearpage

\appendix

\numberwithin{equation}{section}
\numberwithin{theorem}{section}
\numberwithin{definition}{section}
\numberwithin{lemma}{section}
\numberwithin{proposition}{section}
\numberwithin{corollary}{section}
\numberwithin{assumption}{section}

\section{Proofs}
\label{sec:proofs}

\subsection{Asymptotic Theory}

This subsection lays down the asymptotic foundations used to prove the
asymptotic properties of the estimators proposed in the main text. Because
our analysis involves \emph{triangular arrays} of random variables
$\{Y_{n,k} : 1 \leq k \leq n,\ n \geq 1\}$, the classical law of large
numbers and central limit theorem for i.i.d.\ sequences no longer apply
directly. We therefore recall the weak law of large numbers and the
Lindeberg central limit theorem for triangular
arrays~\cite{durrett2019probability}, together with the
continuous mapping theorem (also known as the Mann--Wald
theorem~\cite{mann1943stochastic}) that
will be invoked repeatedly in the subsequent proofs.

We first collect the relationships and arithmetic rules among the order
symbols $o_p(1)$ and $O_p(1)$ that are used throughout the paper.
\begin{enumerate}
\def\theenumi{\roman{enumi}}
\def\labelenumi{(\theenumi)}
  \item If $Y_n = o_p(1)$ as $n \to +\infty$, then
        $Y_n = O_p(1)$ as $n \to +\infty$.
  \item $o_p(1) + o_p(1) = o_p(1)$, \quad
        $o_p(1) + O_p(1) = O_p(1)$, \quad
        $O_p(1) + O_p(1) = O_p(1)$.
  \item $o_p(1) \, o_p(1) = o_p(1)$, \quad
        $o_p(1) \, O_p(1) = o_p(1)$, \quad
        $O_p(1) \, O_p(1) = O_p(1)$.
\end{enumerate}

We next state the weak law of large numbers for triangular arrays. Note
that, unlike the Lindeberg central limit theorem presented next, the
weak law does not require the random variables within the same row to be
mutually independent.

\begin{theorem}[Weak law of large numbers for triangular arrays~\cite{durrett2019probability}]
\label{thm:wlln_triangular}
Let $S_n = \sum_{k=1}^{n} Y_{n,k}$, $\mu_n = \mathbb{E}(S_n)$ and
$\sigma_n^2 = \mathrm{Var}(S_n)$. If there exists a sequence $\{a_n\}$
such that $\sigma_n^2 / a_n^2 \to 0$ as $n \to +\infty$, then
\begin{equation*}
  \frac{S_n - \mu_n}{a_n} \xrightarrow{p} 0.
\end{equation*}
\end{theorem}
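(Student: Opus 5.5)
The plan is to prove the statement directly from Chebyshev's inequality. Independence within rows plays no role, which matches the remark before the statement. The hypothesis $\sigma_n^2 / a_n^2 \to 0$ already controls the second moment of the normalised centred sum $(S_n - \mu_n)/a_n$, so convergence in $L^2$, and hence in probability, follows at once. Implicit in the statement is that each $S_n$ has finite variance, so that $\mu_n$ and $\sigma_n^2$ are well defined, and that $a_n \neq 0$. I will assume both throughout; in particular $\sigma_n^2 < \infty$ is needed for the ratio $\sigma_n^2/a_n^2$ to make sense.

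First, fix $\varepsilon > 0$. The centred, normalised variable $W_n := (S_n - \mu_n)/a_n$ has mean zero and second moment
\begin{equation*}
  \mathbb{E}(W_n^2) = \frac{\mathbb{E}\bigl[(S_n - \mu_n)^2\bigr]}{a_n^2} = \frac{\sigma_n^2}{a_n^2}.
\end{equation*}
Second, Chebyshev's inequality (Markov's inequality applied to $W_n^2$) gives
\begin{equation*}
  \mathbb{P}\bigl(|W_n| > \varepsilon\bigr) \le \frac{\mathbb{E}(W_n^2)}{\varepsilon^2} = \frac{\sigma_n^2}{\varepsilon^2 a_n^2}.
\end{equation*}
Third, by hypothesis the right-hand side tends to $0$ as $n \to +\infty$ for every fixed $\varepsilon > 0$. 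This is exactly the definition of $W_n \xrightarrow{p} 0$.

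There is no genuine obstacle here. The only point needing care is that $\mathrm{Var}(S_n)$ is the variance of the entire row sum, not a sum of individual variances. This is why no independence or uncorrelatedness within a row is required. In later applications, one would bound $\sigma_n^2$ by expanding the variance of the sum, using independence across observations $i$ where it is available.
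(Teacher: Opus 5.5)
Your Chebyshev argument is correct and complete. Since $\mathbb{E}\bigl[((S_n-\mu_n)/a_n)^2\bigr] = \sigma_n^2/a_n^2 \to 0$, the normalised centred sum tends to $0$ in $L^2$ and hence in probability; the paper gives no proof of its own and cites Durrett, whose proof is this same $L^2$/Chebyshev argument, and your remark that no within-row independence is needed matches the paper's comment.
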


\begin{theorem}[Lindeberg central limit theorem for triangular arrays~\cite{durrett2019probability}]
\label{thm:lindeberg_clt}
For each $n \in \mathbb{Z}^+$, let $\{Y_{n,k} : 1 \leq k \leq n\}$ be
independent random variables with $\mathbb{E}(Y_{n,k}) = 0$. Define
$\sigma_{n,k}^2 = \mathbb{E}(Y_{n,k}^2)$,
$S_n = \sum_{k=1}^{n} Y_{n,k}$ and
$T_n^2 = \mathrm{Var}(S_n) = \sum_{k=1}^{n} \sigma_{n,k}^2$. If the
following Lindeberg condition holds: for every $\epsilon > 0$,
\begin{equation*}
  \frac{1}{T_n^2} \sum_{k=1}^{n}
    \mathbb{E}\!\left( Y_{n,k}^2 \,
      \mathbf{1}_{\{|Y_{n,k}| > \epsilon T_n\}} \right) \to 0,
  \quad n \to +\infty,
\end{equation*}
then
\begin{equation*}
  \frac{S_n}{T_n} \xrightarrow{d} \mathcal{N}(0, 1).
\end{equation*}
\end{theorem}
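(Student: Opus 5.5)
We prove the statement by characteristic functions and then invoke L\'evy's continuity theorem. First we reduce to the normalized case. Replacing $Y_{n,k}$ by $Y_{n,k}/T_n$ changes neither the hypothesis nor the conclusion, so we may assume $T_n^2=\sum_{k}\sigma_{n,k}^2=1$. The Lindeberg condition then reads $\sum_k \mathbb{E}(Y_{n,k}^2\mathbf{1}_{\{|Y_{n,k}|>\epsilon\}})\to 0$ for every $\epsilon>0$. Fix $t\in\mathbb{R}$ and write $\varphi_{n,k}(t)=\mathbb{E}e^{itY_{n,k}}$. By independence within each row, $\mathbb{E}e^{itS_n}=\prod_{k}\varphi_{n,k}(t)$, and it suffices to show that this product converges to $e^{-t^2/2}$.

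\textbf{Step 1: uniform smallness of the variances.} For any $\epsilon>0$,
\begin{equation*}
\sigma_{n,k}^2\le \epsilon^2+\mathbb{E}\bigl(Y_{n,k}^2\mathbf{1}_{\{|Y_{n,k}|>\epsilon\}}\bigr).
\end{equation*}
Bounding the second term by the full Lindeberg sum gives $\max_k\sigma_{n,k}^2\le \epsilon^2+o(1)$. Hence $\max_k\sigma_{n,k}^2\to 0$. This infinitesimality is what makes the later product comparison work.

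\textbf{Step 2: second-order expansion of each factor.} We use the elementary bound
\begin{equation*}
\bigl|e^{ix}-(1+ix-x^2/2)\bigr|\le\min\{|x|^3/6,\;x^2\}.
\end{equation*}
Since $\mathbb{E}Y_{n,k}=0$, this yields $|\varphi_{n,k}(t)-(1-t^2\sigma_{n,k}^2/2)|\le \mathbb{E}\min\{|tY_{n,k}|^3/6,\,t^2Y_{n,k}^2\}$. We split the expectation on $\{|Y_{n,k}|\le\epsilon\}$ and its complement, using the cubic bound on the first set and the quadratic bound on the second. Summing over $k$ and using $\sum_k\sigma_{n,k}^2=1$ gives
\begin{equation*}
\sum_k\bigl|\varphi_{n,k}(t)-(1-t^2\sigma_{n,k}^2/2)\bigr|\le \frac{\epsilon|t|^3}{6}+t^2\sum_k\mathbb{E}\bigl(Y_{n,k}^2\mathbf{1}_{\{|Y_{n,k}|>\epsilon\}}\bigr).
\end{equation*}
The right-hand side has $\limsup$ at most $\epsilon|t|^3/6$. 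Since $\epsilon$ is arbitrary, the left-hand side tends to $0$.

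\textbf{Step 3: comparing products.} For complex numbers of modulus at most $1$ we have $|\prod_k z_k-\prod_k w_k|\le\sum_k|z_k-w_k|$. By Step 1, for large $n$ every $w_k=1-t^2\sigma_{n,k}^2/2$ lies in $[0,1]$. Applying the inequality with $z_k=\varphi_{n,k}(t)$ and these $w_k$, Step 2 shows that $\prod_k\varphi_{n,k}(t)$ and $\prod_k(1-t^2\sigma_{n,k}^2/2)$ have vanishing difference. Next we apply the same inequality with $z_k=1-t^2\sigma_{n,k}^2/2$ and $w_k=e^{-t^2\sigma_{n,k}^2/2}$. For $0\le x\le 1$ we have $|e^{-x}-(1-x)|\le x^2$, so the difference is at most
\begin{equation*}
\sum_k \frac{t^4\sigma_{n,k}^4}{4}\le \frac{t^4}{4}\max_k\sigma_{n,k}^2\sum_k\sigma_{n,k}^2\to 0.
\end{equation*}
Since $\prod_k e^{-t^2\sigma_{n,k}^2/2}=e^{-t^2/2}$ exactly, we conclude that $\mathbb{E}e^{itS_n}\to e^{-t^2/2}$ for every $t$. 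L\'evy's continuity theorem then gives $S_n/T_n\xrightarrow{d}\mathcal{N}(0,1)$.

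The main obstacle is the passage from the sum of per-factor errors to the product. It requires both the $\epsilon$-splitting in Step 2, to make the cubic remainder small, and the uniform negligibility $\max_k\sigma_{n,k}^2\to 0$ from Step 1, to control the quadratic terms in the exponential comparison. Both facts rest entirely on the Lindeberg condition, so this is where we would be most careful.
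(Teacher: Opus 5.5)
Your proof is correct. It is the standard characteristic-function proof of the Lindeberg--Feller theorem, and that is all the paper relies on: the paper gives no proof of its own and simply cites Durrett, whose argument is essentially yours, except that Durrett finishes Step 3 with the lemma that $\prod_k(1+c_{n,k})\to e^{\lambda}$ when $\max_k|c_{n,k}|\to 0$ and $\sum_k c_{n,k}\to\lambda$, while you apply the product-comparison inequality a second time against $\prod_k e^{-t^2\sigma_{n,k}^2/2}$.
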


We conclude this section with the continuous mapping theorem
(Mann--Wald theorem), which allows the limit function to be discontinuous,
provided the random variables fall in the discontinuity set with
probability zero.

\begin{theorem}[Continuous mapping theorem~\cite{mann1943stochastic}]
\label{thm:cmt}
Let $Y_1, Y_2, \ldots$ be $k$-dimensional random vectors and let
$f : \mathbb{R}^k \to \mathbb{R}^l$ be continuous almost everywhere. Then
\begin{enumerate}
\def\theenumi{\roman{enumi}}
\def\labelenumi{(\theenumi)}
  \item If $Y_n \xrightarrow{p} Y$, then
        $f(Y_n) \xrightarrow{p} f(Y)$.
  \item If $Y_n \xrightarrow{d} Y$, then
        $f(Y_n) \xrightarrow{d} f(Y)$.
\end{enumerate}
\end{theorem}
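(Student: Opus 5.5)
We read ``continuous almost everywhere'' as the condition $\mathbb{P}(Y \in D_f) = 0$, where $D_f$ is the set of discontinuity points of $f$. This is the form in which the theorem is used later, with $Y$ typically Gaussian or degenerate. We first record that $D_f$ is Borel. Indeed, $D_f = \bigcup_{m}\bigcap_{l} G_{m,l}$, where $G_{m,l}$ is the open set of points $y$ admitting $z, z'$ within distance $1/l$ of $y$ with $|f(z)-f(z')| > 1/m$; the event $\{Y \in D_f\}$ is therefore well defined. We then prove part (ii) through the Portmanteau theorem and part (i) through a direct $\varepsilon$--$\delta$ argument.

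\emph{Part (ii).} By the Portmanteau theorem it suffices to show $\limsup_n \mathbb{P}(f(Y_n) \in F) \le \mathbb{P}(f(Y) \in F)$ for every closed $F \subset \mathbb{R}^l$. The key inclusion is $\overline{f^{-1}(F)} \subset f^{-1}(F) \cup D_f$. To see it, take $y$ in the closure with $y \notin D_f$, and pick $y_m \to y$ with $f(y_m) \in F$. Continuity of $f$ at $y$ gives $f(y_m) \to f(y)$, and closedness of $F$ then gives $f(y) \in F$. Applying the Portmanteau theorem to the closed set $\overline{f^{-1}(F)}$ and $Y_n \xrightarrow{d} Y$ yields
\begin{equation*}
  \limsup_n \mathbb{P}(f(Y_n)\in F) \le \limsup_n \mathbb{P}\bigl(Y_n \in \overline{f^{-1}(F)}\bigr) \le \mathbb{P}\bigl(Y \in \overline{f^{-1}(F)}\bigr) \le \mathbb{P}(f(Y)\in F) + \mathbb{P}(Y \in D_f),
\end{equation*}
and the last probability vanishes.

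\emph{Part (i).} Fix $\varepsilon > 0$. For $\delta > 0$, let $B_\delta$ be the set of $y$ for which there exists $z$ with $\|z-y\| < \delta$ and $\|f(z)-f(y)\| > \varepsilon$. On the event $\{Y \notin B_\delta,\ \|Y_n - Y\| < \delta\}$ we have $\|f(Y_n)-f(Y)\| \le \varepsilon$, so
\begin{equation*}
  \mathbb{P}\bigl(\|f(Y_n)-f(Y)\| > \varepsilon\bigr) \le \mathbb{P}(Y \in B_\delta) + \mathbb{P}(\|Y_n - Y\| \ge \delta).
\end{equation*}
The second term tends to $0$ as $n \to \infty$ for each fixed $\delta$. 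The sets $B_\delta$ decrease as $\delta \downarrow 0$, and every point of $\bigcap_\delta B_\delta$ is a discontinuity point, so $\bigcap_\delta B_\delta \subset D_f$. Continuity from above of the probability measure then gives $\mathbb{P}(Y \in B_\delta) \to \mathbb{P}(Y \in \bigcap_\delta B_\delta) \le \mathbb{P}(Y\in D_f) = 0$. Letting first $n \to \infty$ and then $\delta \downarrow 0$ proves (i).

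The main technical obstacle is measurability. The sets $B_\delta$ need not be Borel. We would handle this by replacing $\mathbb{P}(Y\in B_\delta)$ with the outer probability, which still satisfies continuity from above along a decreasing family whose intersection is contained in the Borel null set $D_f$. Alternatively, we can restrict $\delta$ to a sequence $\delta = 1/l$ and use the inclusion $B_{1/l} \subset \{y : \omega_f(y, 1/l) \ge \varepsilon\}$, where the oscillation $\omega_f$ is upper semicontinuous in $y$; this makes the enlarged sets closed and hence Borel, and their intersection over $l$ still lies in $D_f$. Everything else is routine.
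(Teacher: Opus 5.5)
Your proof is correct. The paper does not prove this result: it is listed among the auxiliary tools and simply attributed to Mann and Wald, so there is no competing argument to compare with. What you give is the standard textbook proof, essentially Theorem~2.3 of van der Vaart~\cite{vandervaart1998}. For (ii) you combine the closed-set Portmanteau bound with $\overline{f^{-1}(F)}\subset f^{-1}(F)\cup D_f$; for (i) you use the sets $B_\delta$ and continuity from above.

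Your reading of ``continuous almost everywhere'' as $\mathbb{P}(Y\in D_f)=0$ is not merely convenient but necessary. Under the literal Lebesgue-a.e.\ reading the statement is false: take $Y_n=1/n$, $Y=0$, $f=\mathbf{1}_{(0,\infty)}$, so that $f(Y_n)=1\not\to 0=f(Y)$. Your reading also matches the paper's own gloss that the limit should fall in the discontinuity set with probability zero. Where the paper actually uses the theorem ($\log$ applied to a ratio tending to $1$ in the consistency proof), the limit is degenerate at a continuity point, so either version suffices there.

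Two side remarks in your measurability paragraph are inaccurate, though the conclusion survives.

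\emph{Outer probability.} Outer probability is not continuous from above in general. The tails of the partition of $[0,1)$ into rational translates (mod~$1$) of a Vitali set form a decreasing sequence with Lebesgue outer measure $1$ and empty intersection. Your first fix works here only for a specific reason: for Borel $f$, each $B_\delta$ is the projection of a Borel subset of $\mathbb{R}^k\times\mathbb{R}^k$, hence analytic and universally measurable. The completed law of $Y$ therefore measures it and is continuous from above.

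\emph{Semicontinuity.} Take the natural open-ball definition $\omega_f(y,\delta)=\sup\{\|f(z)-f(z')\|:\ \|z-y\|<\delta,\ \|z'-y\|<\delta\}$. Then $y\mapsto\omega_f(y,\delta)$ is lower, not upper, semicontinuous; only the limit $\omega_f(y)=\lim_{\delta\downarrow 0}\omega_f(y,\delta)$ is upper semicontinuous. So $\{\omega_f(\cdot,1/l)\ge\varepsilon\}$ need not be closed. It is, however, a $G_\delta$ set, hence Borel, and $\bigcap_l\{\omega_f(\cdot,1/l)\ge\varepsilon\}=\{\omega_f\ge\varepsilon\}\subset D_f$. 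Your second fix therefore goes through as is.

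Finally, state explicitly that $f$ is Borel measurable; the statement tacitly requires this for $f(Y_n)$ to be a random vector.
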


We next state Slutsky's theorem, which is used throughout the paper
to combine convergence in distribution with convergence in
probability to a constant~\cite{vandervaart1998}.

\begin{theorem}[Slutsky's theorem~\cite{vandervaart1998}]
\label{thm:slutsky}
Let $X_n$, $Y_n$ be random variables and $c \in \mathbb{R}$ a
constant.  Then the following statements are equivalent:
\begin{enumerate}
\def\theenumi{\roman{enumi}}
\def\labelenumi{(\theenumi)}
  \item $X_n \xrightarrow{d} c$ if and only if
        $X_n \xrightarrow{p} c$;
  \item if $X_n \xrightarrow{d} X$ and the L\'evy--Prokhorov distance
        satisfies $d(X_n, Y_n) \xrightarrow{p} 0$, then
        $Y_n \xrightarrow{d} X$;
  \item if $X_n \xrightarrow{d} X$ and $Y_n \xrightarrow{p} c$, then
        $(X_n, Y_n) \xrightarrow{d} (X, c)$.
\end{enumerate}
\end{theorem}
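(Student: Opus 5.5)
Since each of (i)--(iii) is a self-contained implication about arbitrary sequences, the only way a claim of mutual equivalence can be established is by showing that all three are true. Three true statements are trivially equivalent, each implying the others. The plan is therefore to prove (i), (ii) and (iii) in turn, reading each exactly as worded, and then to conclude the equivalence from the fact that all three hold. I would state this reading explicitly at the start, because the theorem as written bundles three standard facts under the word ``equivalent''.

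For (i), the ``if'' direction is the standard fact that convergence in probability implies convergence in distribution. Given a bounded Lipschitz $f$, split $\mathbb{E}|f(X_n)-f(c)|$ on the events $\{|X_n-c|\le\varepsilon\}$ and $\{|X_n-c|>\varepsilon\}$, and then invoke the portmanteau characterisation. For the ``only if'' direction, the point mass at $c$ puts no mass on the boundary of the closed set $\{x:|x-c|\ge\varepsilon\}$. Portmanteau then gives $\limsup\mathbb{P}(|X_n-c|\ge\varepsilon)\le\mathbb{P}(|c-c|\ge\varepsilon)=0$.

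For (ii), I read $d(X_n,Y_n)$ as the Lévy--Prokhorov distance between the laws of $X_n$ and $Y_n$. This is a deterministic sequence, so $d(X_n,Y_n)\xrightarrow{p}0$ just means $d(X_n,Y_n)\to0$. The proof then has two ingredients: the Lévy--Prokhorov metric metrises weak convergence on $\mathbb{R}$, and it satisfies the triangle inequality. Writing $\mathcal{L}(\cdot)$ for the law of a random variable, we get
\begin{equation*}
  d\bigl(\mathcal{L}(Y_n),\mathcal{L}(X)\bigr)\le d\bigl(\mathcal{L}(Y_n),\mathcal{L}(X_n)\bigr)+d\bigl(\mathcal{L}(X_n),\mathcal{L}(X)\bigr)\to0,
\end{equation*}
and hence $Y_n\xrightarrow{d}X$. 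I would remark that the often-quoted variant with $|X_n-Y_n|\xrightarrow{p}0$ is a special case: $|X_n-Y_n|\xrightarrow{p}0$ forces the Ky Fan distance to vanish, and the Ky Fan distance dominates the Lévy--Prokhorov distance of the laws.

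For (iii), I would show joint weak convergence through characteristic functions. Write
\begin{equation*}
  \bigl|\mathbb{E}e^{i(sX_n+tY_n)}-\mathbb{E}e^{i(sX+tc)}\bigr|\le \mathbb{E}\bigl|e^{itY_n}-e^{itc}\bigr|+\bigl|\mathbb{E}e^{isX_n}-\mathbb{E}e^{isX}\bigr|.
\end{equation*}
The first term tends to zero by bounded convergence in probability, since $Y_n\xrightarrow{p}c$. The second term tends to zero because $X_n\xrightarrow{d}X$. Lévy's continuity theorem then gives $(X_n,Y_n)\xrightarrow{d}(X,c)$.

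The main obstacle is interpretive rather than technical: ``equivalent'' and the Lévy--Prokhorov formulation in (ii) must be given a coherent meaning. I would commit to the two readings above, all three statements true and $d$ a deterministic distance between laws. Under those readings the remaining work is the routine portmanteau and characteristic-function arguments.
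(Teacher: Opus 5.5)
Your proof is correct. The paper gives no proof of its own: it quotes van der Vaart, essentially parts (iii)--(v) of Theorem~2.7 there. In that source the items are separate facts rather than an equivalence. There $d$ is the Euclidean metric, so $d(X_n,Y_n)=|X_n-Y_n|$ is a random variable and the hypothesis $d(X_n,Y_n)\xrightarrow{p}0$ has content; the L\'evy--Prokhorov label is the paper's own. Your reading of ``equivalent'' as ``all three hold'' is the right one. Your Ky Fan remark matters, because $|X_n-Y_n|\xrightarrow{p}0$ is the form the paper actually relies on whenever it discards $o_p(1)$ terms.

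The textbook route differs from yours in two places.

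\begin{itemize}
\item For (ii), van der Vaart argues directly: for bounded Lipschitz $f$ he splits $|\mathbb{E}f(X_n)-\mathbb{E}f(Y_n)|$ on $\{|X_n-Y_n|\le\varepsilon\}$ and its complement. Your metrisation-plus-triangle-inequality argument is shorter. However, it leans on the non-trivial theorem that the L\'evy--Prokhorov metric metrises weak convergence, and, for the textbook form, on Ky Fan domination.
\item For (iii), he reduces it to (ii). The pairs $(X_n,Y_n)$ and $(X_n,c)$ are at distance $|Y_n-c|\xrightarrow{p}0$, and $(X_n,c)\xrightarrow{d}(X,c)$ holds trivially. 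Your characteristic-function proof is self-contained but specific to Euclidean space, whereas the reduction to (ii) works in any metric space.
\end{itemize}

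One minor point: in the ``only if'' half of (i), the closed-set part of the portmanteau lemma needs no boundary condition. Your remark about the boundary of $\{|x-c|\ge\varepsilon\}$ is therefore unnecessary, though harmless.
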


\subsection{Proof of Theorem~\ref{thm:causal_fraga_consistency}}
\label{sec:consistency_proof}

\begin{lemma}
\label{lem:A.1}
Let Assumption~\ref{asm:causal_identification} hold, and let $f \ge 0$ be a measurable
function with $\mathbb{E}\bigl(f(Y(1))\bigr) < +\infty$ and
$\mathbb{E}\bigl(f(Y(0))\bigr) < +\infty$.  For any $r \in \mathbb{Z}^+$,
we have the uniform bounds
\begin{equation*}
  \frac{1}{(1 - c)^{\,r-1}}\;\mathbb{E}\bigl(f^{r}(Y(1))\bigr)
  \le \mathbb{E}\!\left( \left(f(Y)\,\frac{D}{\pi(X)}\right)^{\!r} \right)
  \le \frac{1}{c^{\,r-1}}\;\mathbb{E}\bigl(f^{r}(Y(1))\bigr),
\end{equation*}
\begin{equation*}
  \frac{1}{(1 - c)^{\,r-1}}\;\mathbb{E}\bigl(f^{r}(Y(0))\bigr)
  \le \mathbb{E}\!\left( \left(f(Y)\,\frac{1 - D}{1 - \pi(X)}\right)^{\!r} \right)
  \le \frac{1}{c^{\,r-1}}\;\mathbb{E}\bigl(f^{r}(Y(0))\bigr).
\end{equation*}
When $r = 1$, equality is attained in both cases:
\begin{equation*}
  \mathbb{E}\!\left(f(Y)\,\frac{D}{\pi(X)}\right)
    = \mathbb{E}\bigl(f(Y(1))\bigr),\qquad
  \mathbb{E}\!\left(f(Y)\,\frac{1 - D}{1 - \pi(X)}\right)
    = \mathbb{E}\bigl(f(Y(0))\bigr).
\end{equation*}
\end{lemma}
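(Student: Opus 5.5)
We prove the treated case; the control case is identical after swapping $D$ with $1-D$ and $\pi$ with $1-\pi$. The plan is to condition on $X$ and then use the two parts of Assumption~\ref{asm:causal_identification}: unconfoundedness to factor the conditional expectation, and common support to bound the propensity-score weight.

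First, because $D\in\{0,1\}$, we have $D^r = D$ and $DY = DY(1)$. Hence
\begin{equation*}
  \left(f(Y)\,\frac{D}{\pi(X)}\right)^{r} = \frac{D\, f^{r}(Y(1))}{\pi(X)^{r}} .
\end{equation*}

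Second, we condition on $X$ and apply unconfoundedness, $(Y(1),Y(0))\perp D\mid X$. This gives
\begin{equation*}
  \mathbb{E}\!\left[\frac{D\, f^{r}(Y(1))}{\pi(X)^{r}}\,\middle|\,X\right]
  = \frac{\mathbb{E}[D\mid X]\;\mathbb{E}[f^{r}(Y(1))\mid X]}{\pi(X)^{r}}
  = \pi(X)^{1-r}\,\mathbb{E}[f^{r}(Y(1))\mid X].
\end{equation*}
We then take the outer expectation by the tower property. For $r=1$ the weight $\pi(X)^{0}=1$, so the result is exactly $\mathbb{E}f(Y(1))$, which is the equality claim. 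Nonnegativity of $f$ justifies the conditioning and the use of Tonelli, even if some moments are infinite.

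Third, for $r\ge 2$ we use common support, $c<\pi(x)<1-c$. Since $1-r\le 0$, this gives
\begin{equation*}
  (1-c)^{1-r}\le \pi(X)^{1-r}\le c^{1-r}.
\end{equation*}
Because $\mathbb{E}[f^r(Y(1))\mid X]\ge 0$, we can multiply this inequality through by the conditional expectation and integrate. This yields both bounds, with the constants $1/(1-c)^{r-1}$ and $1/c^{r-1}$ as stated.

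For the control arm, $(1-D)^r = 1-D$ and $(1-D)Y = (1-D)Y(0)$. The same two steps give the weight $(1-\pi(X))^{1-r}$, and $c<1-\pi(x)<1-c$ gives the same constants.

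No step is a real obstacle; the only point needing care is measure-theoretic. We should state that the bounds hold in $[0,\infty]$, so that finiteness of $\mathbb{E}f^r$ for $r\ge2$ is not needed. The hypotheses of the lemma only assume finite first moments, so if $\mathbb{E}f^r(Y(1))=\infty$ both sides are infinite and the inequalities hold trivially.
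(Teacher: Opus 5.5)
Your proof is correct and follows the paper's argument step for step: condition on $X$, use $D\,f^{r}(Y)=D\,f^{r}(Y(1))$ and unconfoundedness to cancel one power of $\pi(X)$, then bound the remaining weight $\pi(X)^{1-r}$ using common support. Your remark that the inequalities hold in $[0,\infty]$, so that finiteness of the higher moments is not needed, is a small clarification the paper leaves implicit.
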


\begin{proof}[Proof of Lemma~\ref{lem:A.1}]
We prove the bound for the first inequality ($r \in \mathbb{Z}^+$) only;
the second follows by symmetry, and the $r = 1$ equalities are obtained
by noting that the inequality step below becomes an identity.
By the law of total expectation,
\begin{equation*}
  \mathbb{E}\!\left( \left(f(Y)\,\frac{D}{\pi(X)}\right)^{\!r} \right)
    = \mathbb{E}_X\!\left( \frac{1}{\pi(X)^r}\;
         \mathbb{E}\bigl( f(Y)^{r} D \mid X \bigr) \right).
\end{equation*}
Since $Y = D\,Y(1) + (1 - D)\,Y(0)$ and $D \in \{0, 1\}$,
$f(Y)^{r} D = f(Y(1))^{r} D$, hence
\begin{equation*}
  \mathbb{E}_X\!\left( \frac{1}{\pi(X)^r}\;
    \mathbb{E}\bigl( f(Y)^{r} D \mid X \bigr) \right)
    = \mathbb{E}_X\!\left( \frac{1}{\pi(X)^r}\;
         \mathbb{E}\bigl( f(Y(1))^{r} D \mid X \bigr) \right).
\end{equation*}
By Assumption~\ref{asm:causal_identification} (unconfoundedness),
$(Y(1), Y(0)) \perp D \mid X$, so the inner conditional expectation
factorises as
\begin{equation*}
  \mathbb{E}\bigl( f(Y(1))^{r} D \mid X \bigr)
    = \mathbb{E}\bigl( f(Y(1))^{r} \mid X \bigr)\;\mathbb{E}(D \mid X).
\end{equation*}
Using $\mathbb{E}(D \mid X) = \pi(X)$, the factor $\pi(X)$ cancels one
power in the denominator:
\begin{equation*}
  \mathbb{E}_X\!\left( \frac{1}{\pi(X)^r}\;
    \mathbb{E}\bigl( f(Y(1))^{r} \mid X \bigr)\,\pi(X) \right)
    = \mathbb{E}_X\!\left( \frac{1}{\pi(X)^{\,r-1}}\;
         \mathbb{E}\bigl( f(Y(1))^{r} \mid X \bigr) \right).
\end{equation*}
From Assumption~\ref{asm:causal_identification} (common support),
$c \le \pi(X) \le 1 - c$ almost surely,
hence $(1 - c)^{-(r-1)} \le 1 / \pi(X)^{\,r-1} \le c^{-(r-1)}$.
Applying these bounds to the expression after cancellation gives
the two-sided inequality
\begin{align*}
  \mathbb{E}_X\!\left( \frac{1}{\pi(X)^{\,r-1}}\;
    \mathbb{E}\bigl( f(Y(1))^{r} \mid X \bigr) \right)
  &\ge \frac{1}{(1 - c)^{\,r-1}}\;
     \mathbb{E}\bigl(f^{r}(Y(1))\bigr), \\[4pt]
  \mathbb{E}_X\!\left( \frac{1}{\pi(X)^{\,r-1}}\;
    \mathbb{E}\bigl( f(Y(1))^{r} \mid X \bigr) \right)
  &\le \frac{1}{c^{\,r-1}}\;
     \mathbb{E}\bigl(f^{r}(Y(1))\bigr),
\end{align*}
which is the claimed two-sided bound.
When $r = 1$, both bounds collapse to $1$ and the factor
$1 / \pi(X)^{\,r-1}$ equals $1$, so cancellation
with the $\pi(X)$ factor from $\mathbb{E}(D \mid X)$ leaves
\begin{align*}
  &\mathbb{E}_X\!\left( \frac{1}{\pi(X)}
       \mathbb{E}\bigl( f(Y(1))\,D \mid X \bigr) \right) \\
  =\, &\mathbb{E}_X\!\left( \frac{1}{\pi(X)}
       \mathbb{E}\bigl( f(Y(1)) \mid X \bigr)\,\pi(X) \right) \\
  =\, &\mathbb{E}_X\!\Bigl( \mathbb{E}\bigl( f(Y(1)) \mid X \bigr) \Bigr)
       = \mathbb{E}\bigl(f(Y(1))\bigr).
\end{align*} \qedhere
\end{proof}

\begin{lemma}
\label{lem:A.2}
Let Assumptions~\ref{asm:causal_identification}
and~\ref{asm:potential_outcome_distributions} hold.  For $j \in \{0, 1\}$, let
$Y_1(j), Y_2(j), \ldots, Y_n(j)$ be i.i.d.\ copies of the potential
outcome $Y(j)$, with cumulative distribution function $F_j$ and tail
quantile function $U_j(x) = \bigl(\frac{1}{1 - F_j}\bigr)^{\leftarrow}(x)$.
Define $Z_i(j) = \frac{1}{1 - F_j(Y_i(j))}$ for $i \in \{1, 2, \ldots, n\}$.
As $n \to +\infty$, assume $\beta_n \to 0$ with $k_0 = n \beta_n \to +\infty$.
Then
\begin{align}
  & \frac{1}{k_0} \sum_{i=1}^{n}
      \frac{D_i}{\pi(X_i)}\,
      \mathbf{1}_{\{Y_i > q_1(1 - \beta_n)\}}
      \xrightarrow{p} 1,
  \label{eq:lem_wlln_treated} \\
  & \frac{1}{k_0} \sum_{i=1}^{n}
      \frac{1 - D_i}{1 - \pi(X_i)}\,
      \mathbf{1}_{\{Y_i > q_0(1 - \beta_n)\}}
      \xrightarrow{p} 1,
  \label{eq:lem_wlln_control} \\
  & \frac{1}{k_0} \sum_{i=1}^{n}
      \frac{D_i}{\pi(X_i)}\,
      \mathbf{1}_{\{Y_i > q_1(1 - \beta_n)\}}
      \log\bigl(Z_i(1)\,\beta_n\bigr)
      \xrightarrow{p} 1,
  \label{eq:lem_wlln_treated_log} \\
  & \frac{1}{k_0} \sum_{i=1}^{n}
      \frac{1 - D_i}{1 - \pi(X_i)}\,
      \mathbf{1}_{\{Y_i > q_0(1 - \beta_n)\}}
      \log\bigl(Z_i(0)\,\beta_n\bigr)
      \xrightarrow{p} 1.
  \label{eq:lem_wlln_control_log}
\end{align}
\end{lemma}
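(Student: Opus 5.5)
Each of the four statements is a weak law of large numbers for a sum of i.i.d.\ terms. I will apply Theorem~\ref{thm:wlln_triangular} with normaliser $k_0$: compute the mean with Lemma~\ref{lem:A.1} at $r=1$, and bound the variance with Lemma~\ref{lem:A.1} at $r=2$. I treat the treated arm, \eqref{eq:lem_wlln_treated} and \eqref{eq:lem_wlln_treated_log}; the control arm is identical, using the second display of Lemma~\ref{lem:A.1}.

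\textbf{Step 1: reduce to uniform variables.} By Assumption~\ref{asm:potential_outcome_distributions}(i), $F_1$ is continuous, so $V := 1-F_1(Y(1))$ is $\mathrm{Uniform}(0,1)$ and $Z(1)=1/V$ is standard Pareto, $\mathbb{P}(Z(1)>z)=1/z$ for $z\ge 1$. Continuity and the definition of $q_1$ also give, almost surely,
\[
\{Y(1)>q_1(1-\beta_n)\}=\{F_1(Y(1))>1-\beta_n\}=\{Z(1)>\beta_n^{-1}\}.
\]

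\textbf{Step 2: means.} Write $\xi_i = \frac{D_i}{\pi(X_i)}\mathbf{1}_{\{Y_i>q_1(1-\beta_n)\}}$ and $\eta_i=\xi_i\log(Z_i(1)\beta_n)$. On $\{D_i=1\}$ we have $Y_i=Y_i(1)$, so both are of the form $\frac{D}{\pi(X)}f(Y(1))$ with $f\ge 0$; on the indicator's support $Z\beta_n>1$, so the log is nonnegative. Lemma~\ref{lem:A.1} with $r=1$ gives
\[
\mathbb{E}\,\xi_i=\mathbb{P}(Z(1)>\beta_n^{-1})=\beta_n,
\qquad
\mathbb{E}\,\eta_i=\mathbb{E}\bigl[\mathbf{1}_{\{Z>\beta_n^{-1}\}}\log(Z\beta_n)\bigr].
\]
Substituting $Z=W/\beta_n$, and noting that conditionally on $Z>1/\beta_n$ the variable $W$ is standard Pareto, the second mean equals $\beta_n\,\mathbb{E}\log W=\beta_n$. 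Hence $\mathbb{E}\sum_i\xi_i=\mathbb{E}\sum_i\eta_i=n\beta_n=k_0$.

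\textbf{Step 3: variances.} By independence and Lemma~\ref{lem:A.1} with $r=2$,
\[
\mathrm{Var}\Bigl(\sum_i\xi_i\Bigr)\le n\,\mathbb{E}\,\xi_1^2\le \frac{n}{c}\,\beta_n=\frac{k_0}{c},
\qquad
\mathrm{Var}\Bigl(\sum_i\eta_i\Bigr)\le \frac{n}{c}\,\beta_n\,\mathbb{E}(\log W)^2=\frac{2k_0}{c}.
\]
Taking $a_n=k_0$ in Theorem~\ref{thm:wlln_triangular}, the ratio $\sigma_n^2/a_n^2=O(1/k_0)\to 0$ because $k_0\to\infty$. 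Therefore $(S_n-k_0)/k_0\xrightarrow{p}0$ for each sum, which is exactly the claim.

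\textbf{Main obstacle.} The only delicate point is Step~1: identifying the event $\{Y>q_1(1-\beta_n)\}$, for the observed $Y$ and with the IPW weight attached, with $\{Z(1)>\beta_n^{-1}\}$. This needs the decomposition $D\,Y=D\,Y(1)$ and continuity of $F_1$, so that ties and flat pieces of $F_1$ have probability zero. Once this holds, the rest is the exact Pareto calculation together with Lemma~\ref{lem:A.1}.
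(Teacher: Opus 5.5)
Your proposal is correct and follows essentially the same route as the paper. Both arguments replace $Y_i$ by $Y_i(1)$ on $\{D_i=1\}$, obtain the mean $\beta_n$ and the second-moment bounds $\beta_n/c$ and $2\beta_n/c$ from Lemma~\ref{lem:A.1} with $r=1,2$ and the standard exponential law of $\log Z(1)$, and conclude by Theorem~\ref{thm:wlln_triangular} with $a_n=k_0$. Your explicit almost-sure identification $\{Y(1)>q_1(1-\beta_n)\}=\{Z(1)>\beta_n^{-1}\}$ via continuity of $F_1$ is a step the paper uses only implicitly, so it is a useful clarification rather than a different argument.
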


\begin{proof}[Proof of Lemma~\ref{lem:A.2}]
We prove Equation~\eqref{eq:lem_wlln_treated}
and Equation~\eqref{eq:lem_wlln_treated_log}; the control-arm counterparts
\eqref{eq:lem_wlln_control} and~\eqref{eq:lem_wlln_control_log} follow by
replacing $D$ with $1 - D$ and $\pi(X)$ with $1 - \pi(X)$ throughout.

\paragraph{Proof of Equation~\eqref{eq:lem_wlln_treated}.}
Define
\[
  S_{n,1} = \sum_{i=1}^{n}
    \frac{D_i}{\pi(X_i)}\,\mathbf{1}_{\{Y_i > q_1(1-\beta_n)\}}.
\]
By the consistency relation $Y_i = D_i Y_i(1) + (1 - D_i) Y_i(0)$, the
summand vanishes unless $D_i = 1$, in which case $Y_i = Y_i(1)$
and
\[
  S_{n,1} = \sum_{i=1}^{n}
    \frac{D_i}{\pi(X_i)}\,\mathbf{1}_{\{Y_i(1) > q_1(1-\beta_n)\}}.
\]
Applying Lemma~\ref{lem:A.1} with $r = 1$ and $f = \mathbf{1}_{\{Y > q_1(1-\beta_n)\}} \ge 0$ gives
\[
  \mathbb{E}\!\left[\frac{D_i}{\pi(X_i)}\,\mathbf{1}_{\{Y_i(1) > q_1(1-\beta_n)\}}\right]
    = \mathbb{P}\!\left(Y(1) > q_1(1-\beta_n)\right)
    = \beta_n.
\]
Applying the lemma again with $r = 2$ yields
\[
  \mathbb{E}\!\left[\left(\frac{D_i}{\pi(X_i)}\,\mathbf{1}_{\{Y_i(1) > q_1(1-\beta_n)\}}\right)^{\!2}\right]
    \le \frac{1}{c}\;\mathbb{P}\!\left(Y(1) > q_1(1-\beta_n)\right)
    = \frac{\beta_n}{c}.
\]
Hence $\mathbb{E}(S_{n,1}) = n \beta_n = k_0$. Moreover,
\begin{align*}
  \frac{\mathrm{Var}(S_{n,1})}{k_0^{2}}
    &\le \frac{n}{k_0^{2}}\;
       \mathbb{E}\!\left[\left(\frac{D_i}{\pi(X_i)}\,\mathbf{1}_{\{Y_i(1) > q_1(1-\beta_n)\}}\right)^{\!2}\right] \\
    &\le \frac{n}{k_0^{2}}\cdot\frac{\beta_n}{c}
     = \frac{1}{n \beta_n\,c} \;\to\; 0,
\end{align*}
where the first inequality uses the standard bound
$\mathrm{Var}(S_{n,1}) \le n \mathbb{E}[\text{summand}^2]$ for independent
summands. By the weak law of large numbers for triangular arrays
(Theorem~\ref{thm:wlln_triangular}) applied with $a_n = k_0$,
\begin{equation*}
  \frac{S_{n,1} - \mathbb{E}(S_{n,1})}{k_0}
    = \frac{S_{n,1} - k_0}{k_0} \xrightarrow{p} 0.
\end{equation*}
Substituting $\mathbb{E}(S_{n,1}) = k_0$, we obtain
\[
  \frac{1}{k_0}\sum_{i=1}^{n}
    \frac{D_i}{\pi(X_i)}\,\mathbf{1}_{\{Y_i > q_1(1-\beta_n)\}}
    = \frac{S_{n,1}}{k_0} \xrightarrow{p} 1,
\]
which is Equation~\eqref{eq:lem_wlln_treated}.

\paragraph{Proof of Equation~\eqref{eq:lem_wlln_treated_log}.}
Define
\[
  S_{n,2} = \sum_{i=1}^{n}
    \frac{D_i}{\pi(X_i)}\,\mathbf{1}_{\{Y_i > q_1(1-\beta_n)\}}
      \log\bigl(Z_i(1)\,\beta_n\bigr),
\]
where $Z_i(1) = 1/(1 - F_1(Y_i(1)))$. The same substitution
$Y_i = D_i Y_i(1)$ gives
\[
  S_{n,2} = \sum_{i=1}^{n}
    \frac{D_i}{\pi(X_i)}\,\mathbf{1}_{\{Y_i(1) > q_1(1-\beta_n)\}}
      \log\bigl(Z_i(1)\,\beta_n\bigr).
\]
Because $F_1(Y_i(1)) \sim U(0,1)$ under
Assumption~\ref{asm:potential_outcome_distributions}, the random variable
$Z_i(1) = 1/(1 - F_1(Y_i(1)))$ has density
\begin{equation*}
  \psi_1(z) = z^{-2}, \quad z \ge 1,
\end{equation*}
so that $\log Z_i(1)$ follows the standard exponential distribution
with density
\begin{equation*}
  \psi_2(z) = e^{-z}, \quad z \ge 0.
\end{equation*}
By Lemma~\ref{lem:A.1} with $r = 1$ and the function
$f(y) = \mathbf{1}_{\{y > q_1(1-\beta_n)\}}\log\!\bigl(\beta_n / (1 - F_1(y))\bigr)$,
which is non-negative on its support, the preceding density gives
\begin{align*}
  \mathbb{E}\!\left[\frac{D_i}{\pi(X_i)}\,
      \mathbf{1}_{\{Z_i(1) > \beta_n^{-1}\}}\log(Z_i(1)\,\beta_n)\right]
    &= \mathbb{E}\!\left[\mathbf{1}_{\{Z_i(1) > \beta_n^{-1}\}}\log(Z_i(1)\,\beta_n)\right] \\
    &= \int_{\log(\beta_n^{-1})}^{+\infty}
        (z + \log \beta_n)\, e^{-z}\, dz \\
    &= \beta_n,
\end{align*}
the last step following because the integral equals
$e^{-\log(\beta_n^{-1})} = \beta_n$. Hence $\mathbb{E}(S_{n,2}) =
n \beta_n = k_0$. Applying Lemma~\ref{lem:A.1} with $r = 2$
yields
\[
  \mathbb{E}\!\left[\left(\frac{D_i}{\pi(X_i)}\,
        \mathbf{1}_{\{Z_i(1) > \beta_n^{-1}\}}\log(Z_i(1)\,\beta_n)\right)^{\!2}\right]
    \le \frac{1}{c}\;\mathbb{E}\!\left[\mathbf{1}_{\{Z_i(1) > \beta_n^{-1}\}}
        \log^{2}(Z_i(1)\,\beta_n)\right]
    = \frac{2\beta_n}{c},
\]
where the exponential moment is obtained by
$\int_{\log(\beta_n^{-1})}^{\infty} (z + \log \beta_n)^2 e^{-z}\, dz
= 2 \beta_n$. Therefore
\[
  \frac{\mathrm{Var}(S_{n,2})}{k_0^{2}}
    \le \frac{n}{k_0^{2}}\cdot\frac{2\beta_n}{c}
    = \frac{2}{n\beta_n\,c} \;\to\; 0.
\]
Theorem~\ref{thm:wlln_triangular} now yields, with $a_n = k_0$,
\begin{equation*}
  \frac{S_{n,2} - \mathbb{E}(S_{n,2})}{k_0}
    = \frac{S_{n,2} - k_0}{k_0} \xrightarrow{p} 0.
\end{equation*}
Substituting $\mathbb{E}(S_{n,2}) = k_0$, we obtain
\[
  \frac{1}{k_0}\sum_{i=1}^{n}
    \frac{D_i}{\pi(X_i)}\,\mathbf{1}_{\{Y_i > q_1(1-\beta_n)\}}
      \log\bigl(Z_i(1)\,\beta_n\bigr)
    = \frac{S_{n,2}}{k_0} \xrightarrow{p} 1,
\]
i.e., Equation~\eqref{eq:lem_wlln_treated_log}.
\end{proof}

\begin{lemma}
\label{lem:A.3}
Let Assumptions~\ref{asm:causal_identification},~\ref{asm:sieve_assumptions}
and~\ref{asm:potential_outcome_distributions} hold.  Denote by
$\widehat{q}_j(1 - \beta_n)$ the $\tau = 1 - \beta_n$ quantile
estimator of the potential outcome $Y(j)$ constructed
in Equation~\eqref{eq:qte_firpo}.  As $n \to +\infty$, assume $\beta_n \to 0$
with $k_0 = n \beta_n \to +\infty$.  Then
\begin{align}
  & \frac{1}{k_0} \sum_{i=1}^{n}
      \frac{D_i}{\widehat{\pi}(X_i)}\,
      \Bigl(\mathbf{1}_{\{Y_i > \widehat{q}_1(1-\beta_n)\}}
           - \mathbf{1}_{\{Y_i > q_1(1-\beta_n)\}}\Bigr)
    \xrightarrow{p} 0,
  \label{eq:lem_quantile_treated} \\[4pt]
  & \frac{1}{k_0} \sum_{i=1}^{n}
      \frac{1 - D_i}{1 - \widehat{\pi}(X_i)}\,
      \Bigl(\mathbf{1}_{\{Y_i > \widehat{q}_0(1-\beta_n)\}}
           - \mathbf{1}_{\{Y_i > q_0(1-\beta_n)\}}\Bigr)
    \xrightarrow{p} 0.
  \label{eq:lem_quantile_control}
\end{align}
\end{lemma}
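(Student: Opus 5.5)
We prove \eqref{eq:lem_quantile_treated}; \eqref{eq:lem_quantile_control} follows by replacing $D/\widehat\pi$ with $(1-D)/(1-\widehat\pi)$. The plan is to separate the two sources of error: the estimated propensity score and the estimated quantile. Consider first the weights. Under Assumption~\ref{asm:sieve_assumptions} the sieve logit estimator satisfies $\sup_x|\widehat\pi(x)-\pi(x)|\xrightarrow{p}0$ (Hirano et al.; Zhang). Combined with $c<\pi<1-c$, this gives $\widehat\pi\ge c/2$ with probability tending to one, and
\[
\max_i\Bigl|\tfrac{1}{\widehat\pi(X_i)}-\tfrac{1}{\pi(X_i)}\Bigr|=o_p(1).
\]
Since the absolute value of the summand in \eqref{eq:lem_quantile_treated} is bounded by the weight, the weight error contributes at most $o_p(1)\cdot k_0^{-1}\sum_i D_i|\mathbf 1_{\{Y_i>\widehat q_1\}}-\mathbf 1_{\{Y_i>q_1\}}|$. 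This is $o_p(1)$ once we show that the remaining sum is $O_p(1)$, which is what the second step does. It therefore suffices to treat the sum with $\pi$ in place of $\widehat\pi$.

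The second step controls the estimated quantile. The key input is the intermediate-level result of Zhang~\cite{zhang2018extremal} (Theorem 3.1, valid under Assumptions~\ref{asm:causal_identification}--\ref{asm:potential_outcome_distributions}):
\[
\sqrt{k_0}\,\bigl(\widehat q_1(1-\beta_n)/q_1(1-\beta_n)-1\bigr)=O_p(1).
\]
In particular, for any fixed $\delta>0$, with probability tending to one,
\[
q_1(1-\beta_n)(1-\delta)\le \widehat q_1(1-\beta_n)\le q_1(1-\beta_n)(1+\delta).
\]
On this event the indicator difference is dominated by $\mathbf 1_{\{q_1(1-\beta_n)(1-\delta)<Y_i(1)\le q_1(1-\beta_n)(1+\delta)\}}$. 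The sum $k_0^{-1}\sum_i \frac{D_i}{\pi(X_i)}\mathbf 1_{\{\cdots\}}$ has mean
\[
\frac{n}{k_0}\Bigl[\bar F_1\bigl((1-\delta)q_1(1-\beta_n)\bigr)-\bar F_1\bigl((1+\delta)q_1(1-\beta_n)\bigr)\Bigr],
\]
by Lemma~\ref{lem:A.1} with $r=1$, where $\bar F_1=1-F_1$. Its variance is at most $(ck_0)^{-1}$ times the same bracket divided by $\beta_n$, by Lemma~\ref{lem:A.1} with $r=2$. By regular variation \eqref{eq:equiv_first_order}, the mean tends to $(1-\delta)^{-1/\gamma_1}-(1+\delta)^{-1/\gamma_1}$, and by Theorem~\ref{thm:wlln_triangular} the sum converges in probability to this limit. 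This limit is arbitrarily small as $\delta\downarrow0$, so the sum is $o_p(1)$, giving the claim; the same bound also gives the $O_p(1)$ needed in the first step.

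The main obstacle is the random threshold $\widehat q_1$: it depends on all the data, so the indicators are not independent and the weak law cannot be applied to them directly. The sandwich by deterministic thresholds above is the device that removes this dependence. The remaining hazard is justifying the consistency $\widehat q_1(1-\beta_n)/q_1(1-\beta_n)\to1$ at an intermediate level; I would cite Zhang~\cite{zhang2018extremal} for this rather than reprove it. If that result is unavailable in the required form, the fallback is to obtain it directly from the first-order condition of \eqref{eq:qte_firpo}: the weighted empirical tail count $k_0^{-1}\sum_i \frac{D_i}{\widehat\pi(X_i)}\mathbf 1_{\{Y_i>\widehat q_1\}}$ equals $1+o_p(1)$, and monotonicity in the threshold together with Lemma~\ref{lem:A.2} evaluated at the thresholds $(1\pm\delta)q_1(1-\beta_n)$ then forces $\widehat q_1(1-\beta_n)$ to lie inside the $\delta$-band.
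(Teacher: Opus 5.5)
Your proof is correct, but it takes a different route from the paper's. The paper never uses consistency of $\widehat q_1(1-\beta_n)$. Instead it reads off from the first-order condition of the weighted check-loss problem \eqref{eq:qte_firpo} that the estimated threshold calibrates the weighted tail count: $\sum_i \frac{D_i}{\widehat\pi(X_i)}\mathbf 1_{\{Y_i>\widehat q_1(1-\beta_n)\}}=\beta_n\sum_i\frac{D_i}{\widehat\pi(X_i)}$. So this count divided by $k_0$ equals $n^{-1}\sum_i D_i/\widehat\pi(X_i)\xrightarrow{p}1$. The count at the true threshold also tends to $1$, by Lemma~\ref{lem:A.2} and uniform consistency of $1/\widehat\pi$, so the signed difference vanishes. (The check loss is piecewise linear, so its minimiser sits at a data point and the identity holds only up to one weight, $\max_i D_i/\widehat\pi(X_i)=O_p(1)$. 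This is negligible after division by $k_0$; your ``$1+o_p(1)$'' formulation in the fallback is the accurate one.)

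You instead import ratio consistency of $\widehat q_1(1-\beta_n)$ and control the absolute sum by a deterministic $\delta$-band. This additionally needs first-order regular variation of $1-F_1$ to make the band mass small.

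The paper's route is cheaper. Beyond the identification assumptions, it uses only continuity of $F_1$ and uniform consistency of $\widehat\pi$. It also does not rely on an intermediate-quantile result at level $\beta_n$, whose sieve rate conditions would have to hold at $\beta_n$ rather than at the $\tau_n$ appearing in Assumption~\ref{asm:sieve_assumptions}.

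Your band argument is the more generic device: it works for any consistent threshold estimator, not only one defined by the same weighted estimating equation as the sum. It does not give a stronger conclusion here, however. The threshold is common to all $i$, so every nonzero indicator difference has the sign of $q_1(1-\beta_n)-\widehat q_1(1-\beta_n)$, and the absolute value of the signed sum already equals the weighted sum of absolute differences.

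Finally, your fallback already contains the paper's whole proof. Once the first-order condition pins the count at $\widehat q_1(1-\beta_n)$ to $1+o_p(1)$, subtracting the count at $q_1(1-\beta_n)$ finishes the argument, and localising $\widehat q_1(1-\beta_n)$ in the band becomes unnecessary.
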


\begin{proof}[Proof of Lemma~\ref{lem:A.3}]
We prove Equation~\eqref{eq:lem_quantile_treated}; the control-arm statement
\eqref{eq:lem_quantile_control} follows by the symmetric substitution of
$D$ with $1 - D$ and $\widehat{\pi}$ with $1 - \widehat{\pi}$ throughout.

Setting $j = 1$ and $\tau = 1 - \beta_n$ in Equation~\eqref{eq:qte_firpo} gives
\[
  \widehat{q}_1(1-\beta_n) = \mathop{\mathrm{argmin}}_{q \in \mathbb{R}}
    \sum_{i=1}^{n}
      \frac{D_i}{\widehat{\pi}(X_i)}\,(Y_i - q)\,
        \bigl((1 - \beta_n) - \mathbf{1}_{\{Y_i \leq q\}}\bigr).
\]
Denote the objective by $L_n(q)$. Since $Y_i$ is continuous under
Assumption~\ref{asm:potential_outcome_distributions}, $L_n(q)$ is piecewise linear
and differentiable at $q = \widehat{q}_1(1-\beta_n)$ almost surely. The
first-order condition gives
\[
  L'_n\bigl(\widehat{q}_1(1-\beta_n)\bigr)
    = -(1 - \beta_n) \sum_{i=1}^{n} \frac{D_i}{\widehat{\pi}(X_i)}
      + \sum_{i=1}^{n}
        \frac{D_i}{\widehat{\pi}(X_i)}\,\mathbf{1}_{\{Y_i \le \widehat{q}_1(1-\beta_n)\}}
    = 0.
\]
Combining with $\mathbf{1}_{\{Y_i \le q\}} + \mathbf{1}_{\{Y_i > q\}} = 1$ and
$k_0 = n \beta_n$ yields the identity
\begin{equation*}
  \sum_{i=1}^{n}
    \frac{D_i}{\widehat{\pi}(X_i)}\,\mathbf{1}_{\{Y_i > \widehat{q}_1(1-\beta_n)\}}
    = \beta_n \sum_{i=1}^{n} \frac{D_i}{\widehat{\pi}(X_i)}.
\end{equation*}
Crucially, this identity is purely algebraic and requires no asymptotic assumptions.

\begin{align*}
  &\biggl| \frac{1}{k_0} \sum_{i=1}^{n}
       \frac{D_i}{\widehat{\pi}(X_i)}\,
       \bigl(\mathbf{1}_{\{Y_i > \widehat{q}_1(1-\beta_n)\}}
           - \mathbf{1}_{\{Y_i > q_1(1-\beta_n)\}}\bigr) \biggr| \\
  &\qquad\le
     \underbrace{\biggl|
        \frac{1}{k_0} \sum_{i=1}^{n}
        \frac{D_i}{\widehat{\pi}(X_i)}\,
        \bigl(\mathbf{1}_{\{Y_i > \widehat{q}_1(1-\beta_n)\}} - \beta_n\bigr)
      \biggr|}_{T_{n,1}}
     \;+\;
     \underbrace{\biggl|
        \frac{1}{k_0} \sum_{i=1}^{n}
        \frac{D_i}{\widehat{\pi}(X_i)}\,
        \bigl(\mathbf{1}_{\{Y_i > q_1(1-\beta_n)\}} - \beta_n\bigr)
      \biggr|}_{T_{n,2}}.
\end{align*}

By the identity above,
\begin{align*}
  T_{n,1}
  &= \biggl|
      \frac{1}{k_0}\, \beta_n \sum_{i=1}^{n}
        \frac{D_i}{\widehat{\pi}(X_i)}
      - \frac{\beta_n}{k_0} \sum_{i=1}^{n}
        \frac{D_i}{\widehat{\pi}(X_i)}
    \biggr|
  = 0 \qquad \text{(a.s.)}.
\end{align*}

Moreover, using $k_0 = n \beta_n$, write $T_{n,2} = |T_{n,2}^{(1)} - T_{n,2}^{(2)}|$ with
\[
  T_{n,2}^{(1)} := \frac{1}{k_0} \sum_{i=1}^{n}
    \frac{D_i}{\widehat{\pi}(X_i)}\,
    \mathbf{1}_{\{Y_i > q_1(1-\beta_n)\}},
\qquad
  T_{n,2}^{(2)} := \frac{\beta_n}{k_0} \sum_{i=1}^{n}
    \frac{D_i}{\widehat{\pi}(X_i)}
    = \frac{1}{n} \sum_{i=1}^{n} \frac{D_i}{\widehat{\pi}(X_i)}.
\]
For $T_{n,2}^{(1)}$, Lemma~\ref{lem:A.2}
(Equation \eqref{eq:lem_wlln_treated}) gives
\[
  \frac{1}{k_0} \sum_{i=1}^{n}
    \frac{D_i}{\pi(X_i)}\,
    \mathbf{1}_{\{Y_i > q_1(1-\beta_n)\}} \xrightarrow{p} 1.
\]
By Deuber et al.~\cite{deuber2024estimation} (Lemma~G.3), which uses
Assumption~\ref{asm:sieve_assumptions} together with the continuity of
$g(u) = 1/u$ on $[c, 1]$ (recall $\pi(\cdot) \ge c > 0$ by
Assumption~\ref{asm:causal_identification}),
\begin{equation}\label{eq:pi_inverse_uniform}
  \sup_{x \in \mathrm{supp}(X)} \Bigl|\frac{1}{\widehat{\pi}(x)} - \frac{1}{\pi(x)}\Bigr| = o_p(1).
\end{equation}

For $T_{n,2}^{(1)}$, Lemma~\ref{lem:A.2}
(Equation~\eqref{eq:lem_wlln_treated}) gives
$\frac{1}{k_0}\sum_i \frac{D_i}{\pi(X_i)}\mathbf{1}_{\{Y_i > q_1(1-\beta_n)\}} \xrightarrow{p} 1$.
Expanding $\frac{D_i}{\widehat{\pi}(X_i)}
= \frac{D_i}{\pi(X_i)} + D_i\bigl(\frac{1}{\widehat{\pi}(X_i)} - \frac{1}{\pi(X_i)}\bigr)$
and applying~\eqref{eq:pi_inverse_uniform}, the difference satisfies
\begin{equation*}
  \biggl| T_{n,2}^{(1)} - \frac{1}{k_0}\sum_{i=1}^{n}
       \frac{D_i}{\pi(X_i)}\mathbf{1}_{\{Y_i > q_1(1-\beta_n)\}} \biggr|
  \le \sup_{x \in \mathrm{supp}(X)} \Bigl|\frac{1}{\widehat{\pi}(x)} - \frac{1}{\pi(x)}\Bigr|
      \cdot \frac{1}{k_0}\sum_{i=1}^{n}
        D_i\,\mathbf{1}_{\{Y_i > q_1(1-\beta_n)\}}.
\end{equation*}
The sup factor is $o_p(1)$ by~\eqref{eq:pi_inverse_uniform}.  For the
second factor, since $Y_i = Y_i(1)$ whenever $D_i = 1$, and by
Assumption~\ref{asm:causal_identification},
\begin{equation*}
  \mathbb{E}\bigl[D_i\,\mathbf{1}_{\{Y_i > q_1(1-\beta_n)\}}\bigr]
  = \mathbb{E}\bigl[\pi(X)\,\mathbf{1}_{\{Y(1) > q_1(1-\beta_n)\}}\bigr]
  \le (1 - c)\,\beta_n,
\end{equation*}
its mean is at most $(1-c)/n \to 0$, hence it is $o_p(1)$.  The product
of two $o_p(1)$ factors is $o_p(1)$, whence
$T_{n,2}^{(1)} \xrightarrow{p} 1$.

For $T_{n,2}^{(2)}$, by Lemma~\ref{lem:A.1} with $r = 1$ and
$f \equiv 1$ we have $\mathbb{E}\!\bigl[D/\pi(X)\bigr] = 1$, so the
Khinchine law of large numbers gives
$\frac{1}{n}\sum_i D_i/\pi(X_i) \xrightarrow{p} 1$.  The same expansion
yields
\begin{equation*}
  \biggl| \frac{1}{n}\sum_{i=1}^{n}\frac{D_i}{\widehat{\pi}(X_i)}
          - \frac{1}{n}\sum_{i=1}^{n}\frac{D_i}{\pi(X_i)} \biggr|
  \le \sup_{x \in \mathrm{supp}(X)} \Bigl|\frac{1}{\widehat{\pi}(x)} - \frac{1}{\pi(x)}\Bigr|
      \cdot \frac{1}{n}\sum_{i=1}^{n} D_i
  \le \sup_{x \in \mathrm{supp}(X)} \Bigl|\frac{1}{\widehat{\pi}(x)} - \frac{1}{\pi(x)}\Bigr| = o_p(1),
\end{equation*}
since $\frac{1}{n}\sum_i D_i \le 1$.  Hence
$T_{n,2}^{(2)} \xrightarrow{p} 1$.

Therefore $T_{n,2} = T_{n,2}^{(1)} - T_{n,2}^{(2)} \xrightarrow{p} 0$.
Putting the two estimates together gives
\[
  \frac{1}{k_0} \sum_{i=1}^{n}
    \frac{D_i}{\widehat{\pi}(X_i)}\,
    \Bigl(\mathbf{1}_{\{Y_i > \widehat{q}_1(1-\beta_n)\}}
        - \mathbf{1}_{\{Y_i > q_1(1-\beta_n)\}}\Bigr)
  \xrightarrow{p} 0,
\]
which is Equation~\eqref{eq:lem_quantile_treated}.
\end{proof}

Now we give the proof of Theorem~\ref{thm:causal_fraga_consistency}.
\begin{proof}
We prove the consistency for $j = 1$; the case $j = 0$ follows by the
symmetric substitution of $D$ with $1 - D$ and $\pi$ with $1 - \pi$
throughout.  Write the estimator as
\begin{equation*}
  \widehat{\gamma}_{1}^{F}(\beta_n, \alpha_n)
    = \frac{1}{k_0} \sum_{i=1}^{n}
      \frac{D_i}{\widehat{\pi}(X_i)}
      \mathbf{1}_{\{Y_i > \widehat{q}_1(1 - \beta_n)\}}
      \log \frac{Y_i - \widehat{q}_1(1 - \alpha_n)}
                {\widehat{q}_1(1 - \beta_n) - \widehat{q}_1(1 - \alpha_n)},
\end{equation*}
which decomposes additively as
$\widehat{\gamma}_{1}^{F} = A_{n,1}^{1} + A_{n,1}^{2} + A_{n,1}^{3} + A_{n,1}^{4} + A_{n,1}^{5}$,
where
\begin{align*}
  A_{n,1}^{1} &=
    \frac{1}{k_0} \sum_{i=1}^{n}
      \frac{D_i}{\pi(X_i)}\,
      \mathbf{1}_{\{Y_i > q_1(1 - \beta_n)\}}
      \log \frac{Y_i - q_1(1 - \alpha_n)}
                {q_1(1 - \beta_n) - q_1(1 - \alpha_n)}, \\[4pt]
  A_{n,1}^{2} &=
    \frac{1}{k_0} \sum_{i=1}^{n}
      \frac{D_i}{\pi(X_i)}\,
      \mathbf{1}_{\{Y_i > q_1(1 - \beta_n)\}}
      \log \frac{q_1(1 - \beta_n) - q_1(1 - \alpha_n)}
                {\widehat{q}_1(1 - \beta_n) - \widehat{q}_1(1 - \alpha_n)}, \\[4pt]
  A_{n,1}^{3} &=
    \frac{1}{k_0} \sum_{i=1}^{n}
      \frac{D_i}{\pi(X_i)}\,
      \mathbf{1}_{\{Y_i > q_1(1 - \beta_n)\}}
      \log \frac{Y_i - \widehat{q}_1(1 - \alpha_n)}
                {Y_i - q_1(1 - \alpha_n)}, \\[4pt]
  A_{n,1}^{4} &=
    \frac{1}{k_0} \sum_{i=1}^{n}
      \frac{D_i}{\widehat{\pi}(X_i)}\,
      \Bigl(
        \mathbf{1}_{\{Y_i > \widehat{q}_1(1 - \beta_n)\}}
        - \mathbf{1}_{\{Y_i > q_1(1 - \beta_n)\}}
      \Bigr)
      \log \frac{Y_i - \widehat{q}_1(1 - \alpha_n)}
                {\widehat{q}_1(1 - \beta_n) - \widehat{q}_1(1 - \alpha_n)}, \\[4pt]
  A_{n,1}^{5} &=
    \frac{1}{k_0} \sum_{i=1}^{n}
      D_i \!\left(
        \frac{1}{\widehat{\pi}(X_i)} - \frac{1}{\pi(X_i)}
      \right)\!
      \mathbf{1}_{\{Y_i > q_1(1 - \beta_n)\}}
      \log \frac{Y_i - \widehat{q}_1(1 - \alpha_n)}
                {\widehat{q}_1(1 - \beta_n) - \widehat{q}_1(1 - \alpha_n)}.
\end{align*}

\paragraph{Step 1: $A_{n,1}^{1} \xrightarrow{p} \gamma_1$.}
By the consistency relation $Y_i = D_i Y_i(1) + (1 - D_i) Y_i(0)$,
the summand of $A_{n,1}^{1}$ vanishes unless $D_i = 1$, in which case
$Y_i = Y_i(1)$.  Define
$Z_i(1) = 1 / (1 - F_1(Y_i(1)))$.
For $j = 1$, Assumption~\ref{asm:potential_outcome_distributions} gives
$F_1 \in \mathcal{D}(G_{\gamma_1})$, so that
$Y_i(1) = U_1(Z_i(1))$ and
$q_1(1 - \beta_n) = U_1(\beta_n^{-1})$,
$q_1(1 - \alpha_n) = U_1(\alpha_n^{-1})$.
We have 
\begin{equation*}
  A_{n,1}^{1} = \frac{1}{k_0} \sum_{i=1}^{n}
    \frac{D_i}{\pi(X_i)}\,
    \mathbf{1}_{\{Y_i > q_1(1 - \beta_n)\}}\,
    \log\!\left(
      \frac{U_1(Z_i(1)) - U_1(\alpha_n^{-1})}
           {U_1(\beta_n^{-1}) - U_1(\alpha_n^{-1})}
    \right).
\end{equation*}

Following the statement of Proposition B.1.9 in de Haan and Ferreira~\cite{de2006extreme}, by the heavy-tail representation in
Equation~\eqref{eq:equiv_second_order} and Assumption~\ref{asm:potential_outcome_distributions}, for any
$\epsilon_0 \in (0, 1)$ and $\epsilon_1 \in (0, \gamma_1)$, there exists
$t_0 > 0$ such that for all $x > y > 1$ and $t > t_0$,
\begin{equation*}
  (1 - \epsilon_0)\,
  \frac{x^{\gamma_1 - \epsilon_1} - 1}{y^{\gamma_1 - \epsilon_1} - 1}
    \;\le\;
  \frac{U_1(t x) - U_1(t)}{U_1(t y) - U_1(t)}
    \;\le\;
  (1 + \epsilon_0)\,
  \frac{x^{\gamma_1 + \epsilon_1} - 1}{y^{\gamma_1 + \epsilon_1} - 1}.
\end{equation*}

Therefore,
\begin{align*}
  &\log(1 - \epsilon_0) + \log\!\left(\frac{x^{\gamma_1 - \epsilon_1} - 1}{y^{\gamma_1 - \epsilon_1} - 1}\right) \\
  &\quad = \log(1 - \epsilon_0) + \log\!\left(\frac{x^{\gamma_1 - \epsilon_1} - 1}{x^{\gamma_1 - \epsilon_1}}\right) - \log\!\left(\frac{y^{\gamma_1 - \epsilon_1} - 1}{y^{\gamma_1 - \epsilon_1}}\right) + (\gamma_1 - \epsilon_1) \log\!\left(\frac{x}{y}\right) \\
  &\quad < \log\!\left(\frac{U_1(tx) - U_1(t)}{U_1(ty) - U_1(t)}\right) \\
  &\quad < \log(1 + \epsilon_0) + \log\!\left(\frac{x^{\gamma_1 + \epsilon_1} - 1}{y^{\gamma_1 + \epsilon_1} - 1}\right) \\
  &\quad = \log(1 + \epsilon_0) + \log\!\left(\frac{x^{\gamma_1 + \epsilon_1} - 1}{x^{\gamma_1 + \epsilon_1}}\right) - \log\!\left(\frac{y^{\gamma_1 + \epsilon_1} - 1}{y^{\gamma_1 + \epsilon_1}}\right) + (\gamma_1 + \epsilon_1) \log\!\left(\frac{x}{y}\right).
\end{align*}

Take $t = \alpha_n^{-1}$, $x = \alpha_n Z_i(1)$,
$y = \alpha_n \beta_n^{-1}$.  Since
$\beta_n / \alpha_n \to 0$, we have $\alpha_n \beta_n^{-1} \to +\infty$,
and for $Z_i(1) > \beta_n^{-1}$ we have
$x > y > 1$ eventually.
Substituting in yields
\begin{align*}
  & \epsilon^- + (\gamma_1 - \epsilon_1)\log\bigl(Z_i(1) \beta_n\bigr) \\
  & \qquad \le
  \log \frac{U_1(Z_i(1)) - U_1(\alpha_n^{-1})}
           {U_1(\beta_n^{-1}) - U_1(\alpha_n^{-1})}
  \;\le\;
  \epsilon^+ + (\gamma_1 + \epsilon_1)\log\bigl(Z_i(1) \beta_n\bigr),
\end{align*}
where
$\epsilon^- = \log(1 - \epsilon_0) + o(1)$ and
$\epsilon^+ = \log(1 + \epsilon_0) + o(1)$,
with the $o(1)$ terms vanishing uniformly in $i$ as
$\alpha_n Z_i(1),\, \alpha_n \beta_n^{-1} \to +\infty$
(here $\alpha_n Z_i(1) \ge \alpha_n \beta_n^{-1} \to +\infty$ on the event
$\{Z_i(1) > \beta_n^{-1}\}$, so the remainder is bounded by the
$y$-dependent remainder, which vanishes uniformly).
Then we can obtain the sandwich
\begin{align*}
  & \epsilon^- \frac{1}{k_0} \sum_{i=1}^{n} \frac{D_i}{\pi(X_i)}\,
        \mathbf{1}_{\{Y_i > q_1(1 - \beta_n)\}}
      + (\gamma_1 - \epsilon_1)
        \frac{1}{k_0} \sum_{i=1}^{n} \frac{D_i}{\pi(X_i)}\,
        \mathbf{1}_{\{Y_i > q_1(1 - \beta_n)\}}
        \log\bigl(Z_i(1) \beta_n\bigr) \\
  & \qquad \le \frac{1}{k_0} \sum_{i=1}^{n} \frac{D_i}{\pi(X_i)}\,
        \mathbf{1}_{\{Y_i > q_1(1 - \beta_n)\}}
        \log \frac{U_1(Z_i(1)) - U_1(\alpha_n^{-1})}
                {U_1(\beta_n^{-1}) - U_1(\alpha_n^{-1})} \\
  & \qquad \le \epsilon^+ \frac{1}{k_0} \sum_{i=1}^{n} \frac{D_i}{\pi(X_i)}\,
        \mathbf{1}_{\{Y_i > q_1(1 - \beta_n)\}}
      + (\gamma_1 + \epsilon_1)
        \frac{1}{k_0} \sum_{i=1}^{n} \frac{D_i}{\pi(X_i)}\,
        \mathbf{1}_{\{Y_i > q_1(1 - \beta_n)\}}
        \log\bigl(Z_i(1) \beta_n\bigr).
\end{align*}
Applying Lemma~\ref{lem:A.2} (Equations~\eqref{eq:lem_wlln_treated}
and~\eqref{eq:lem_wlln_treated_log}) gives
\begin{align*}
  \frac{1}{k_0} \sum_{i=1}^{n} \frac{D_i}{\pi(X_i)}\,
    \mathbf{1}_{\{Y_i > q_1(1 - \beta_n)\}} &\xrightarrow{p} 1, \\
  \frac{1}{k_0} \sum_{i=1}^{n} \frac{D_i}{\pi(X_i)}\,
    \mathbf{1}_{\{Y_i > q_1(1 - \beta_n)\}}
    \log\bigl(Z_i(1) \beta_n\bigr) &\xrightarrow{p} 1.
\end{align*}
Hence the lower bound on $A_{n,1}^{1}$ converges in probability to
$\epsilon^- + (\gamma_1 - \epsilon_1)$,
and the upper bound to
$\epsilon^+ + (\gamma_1 + \epsilon_1)$.
Since $\epsilon^-$, $\epsilon^+$ and $\epsilon_1$ can be made arbitrarily small,
so $A_{n,1}^{1} \xrightarrow{p} \gamma_1$.

\paragraph{Step 2: $A_{n,1}^{2} \xrightarrow{p} 0$.}
Factor $A_{n,1}^{2}$ as
$A_{n,1}^{2} = \delta_n \cdot
  \frac{1}{k_0} \sum_{i=1}^{n}
    \frac{D_i}{\pi(X_i)}\,
    \mathbf{1}_{\{Y_i > q_1(1 - \beta_n)\}}$,
where
\begin{equation*}
  \delta_n = \log \frac{q_1(1 - \beta_n) - q_1(1 - \alpha_n)}
                     {\widehat{q}_1(1 - \beta_n) - \widehat{q}_1(1 - \alpha_n)}.
\end{equation*}
By Lemma~G.1 and Theorem~G.1 of Deuber et al.~\cite{deuber2024estimation}, under
Assumptions~\ref{asm:causal_identification}--\ref{asm:potential_outcome_distributions},
\begin{equation*}
  \sqrt{k}\!\left(\frac{\widehat{q}_1(1 - \alpha_n)}{q_1(1 - \alpha_n)} - 1\right)
    = O_p(1), \qquad
  \sqrt{k_0}\!\left(\frac{\widehat{q}_1(1 - \beta_n)}{q_1(1 - \beta_n)} - 1\right)
    = O_p(1),
\end{equation*}
so that
$\widehat{q}_1(1 - \alpha_n) / q_1(1 - \alpha_n) \xrightarrow{p} 1$ and
$\widehat{q}_1(1 - \beta_n) / q_1(1 - \beta_n) \xrightarrow{p} 1$.
By the continuous mapping theorem (Theorem~\ref{thm:cmt}),
\begin{equation*}
  \log\!\left(\frac{\widehat{q}_1(1 - \beta_n)}{q_1(1 - \beta_n)}\right) = o_p(1).
\end{equation*}
Since $q_1(1 - \alpha_n) = U_1(1/\alpha_n)$ and
$q_1(1 - \beta_n) = U_1(1/\beta_n)$, by Equation~\eqref{eq:equiv_quantile},
\begin{equation*}
  \frac{q_1(1 - \alpha_n)}{q_1(1 - \beta_n)}
    \sim (\beta_n / \alpha_n)^{\gamma_1},
\end{equation*}
which together with the rate above gives
\begin{equation*}
  \frac{\widehat{q}_1(1 - \alpha_n)}{\widehat{q}_1(1 - \beta_n)} = o_p(1).
\end{equation*}
Hence
$\log\!\Bigl(1 - \frac{q_1(1 - \alpha_n)}{q_1(1 - \beta_n)}\Bigr) = o(1)$ and
$\log\!\Bigl(1 - \frac{\widehat{q}_1(1 - \alpha_n)}{\widehat{q}_1(1 - \beta_n)}\Bigr)
  = o_p(1)$.
Substituting these estimates into the decomposition
\begin{equation*}
  \delta_n = \log\frac{q_1(1-\beta_n)}{\widehat{q}_1(1-\beta_n)}
            + \log\!\left(1 - \frac{q_1(1-\alpha_n)}{q_1(1-\beta_n)}\right)
            - \log\!\left(1 - \frac{\widehat{q}_1(1-\alpha_n)}
                                   {\widehat{q}_1(1-\beta_n)}\right)
\end{equation*}
yields $|\delta_n| = o_p(1)$.  Therefore
\begin{equation*}
  |A_{n,1}^{2}|
    = |\delta_n| \cdot
      \frac{1}{k_0} \sum_{i=1}^{n}
        \frac{D_i}{\pi(X_i)}\,
        \mathbf{1}_{\{Y_i > q_1(1 - \beta_n)\}}
    \xrightarrow{p} 0
\end{equation*}
by Lemma~\ref{lem:A.2}.

\paragraph{Step 3: $A_{n,1}^{3} \xrightarrow{p} 0$.}
By Step~2, $\widehat{q}_1(1 - \alpha_n) / q_1(1 - \alpha_n) \xrightarrow{p} 1$.
Define $\delta_n = (\widehat{q}_1(1 - \alpha_n) - q_1(1 - \alpha_n)) / q_1(1 - \alpha_n) = o_p(1)$.
For $Y_i > q_1(1 - \beta_n)$, write the ratio inside the log as $1 - x$ with
\begin{equation*}
  x \;=\; \frac{\widehat{q}_1(1 - \alpha_n) - q_1(1 - \alpha_n)}{Y_i - q_1(1 - \alpha_n)}
       \;=\; \delta_n \cdot \frac{q_1(1 - \alpha_n)}{Y_i - q_1(1 - \alpha_n)}.
\end{equation*}
Since $\delta_n = o_p(1)$ and the second factor is bounded by
$c\,(\beta_n / \alpha_n)^{\gamma_1} \to 0$ (by the bound established below), we have
$|x| < 1/2$ for large enough $n$.  Hence
\begin{align*}
  \Bigl|\log\frac{Y_i - \widehat{q}_1(1 - \alpha_n)}{Y_i - q_1(1 - \alpha_n)}\Bigr|
  &= \bigl|\log(1 - x)\bigr| \\
  &\le 2|x| \\
  &= 2|\delta_n|\,\frac{q_1(1 - \alpha_n)}{Y_i - q_1(1 - \alpha_n)}.
\end{align*}
Here we used the elementary bound $|\log(1 - x)| \le 2|x|$, valid for $|x| \le 1/2$.
For $Y_i > q_1(1 - \beta_n)$,
\begin{equation*}
  Y_i - q_1(1 - \alpha_n)
  \ge q_1(1 - \beta_n) - q_1(1 - \alpha_n)
  = q_1(1 - \alpha_n)\Bigl(\frac{q_1(1 - \beta_n)}{q_1(1 - \alpha_n)} - 1\Bigr).
\end{equation*}
Since $q_1(1 - \beta_n) / q_1(1 - \alpha_n) \sim (\beta_n / \alpha_n)^{-\gamma_1} \to +\infty$
(by Step~2 and $\beta_n / \alpha_n \to 0$ with $\gamma_1 > 0$), there exists a
constant $c > 0$ such that for all large $n$,
\begin{equation*}
  \frac{q_1(1 - \alpha_n)}{Y_i - q_1(1 - \alpha_n)}
  \le c\,\Bigl(\frac{\beta_n}{\alpha_n}\Bigr)^{\!\gamma_1},
\end{equation*}
uniformly over all $i$ with $Y_i > q_1(1 - \beta_n)$.  Consequently, for every $i$ with
$Y_i > q_1(1 - \beta_n)$,
\begin{equation*}
  \Bigl|\log\frac{Y_i - \widehat{q}_1(1 - \alpha_n)}{Y_i - q_1(1 - \alpha_n)}\Bigr|
  \le 2c\,|\delta_n|\,\Bigl(\frac{\beta_n}{\alpha_n}\Bigr)^{\!\gamma_1}
  = o_p(1) \cdot o(1) = o_p(1),
\end{equation*}
where the right-hand side is independent of $i$.  Hence,
\begin{align*}
  |A_{n,1}^{3}|
  &\le 2c\,|\delta_n|\,\Bigl(\frac{\beta_n}{\alpha_n}\Bigr)^{\!\gamma_1}
     \cdot \frac{1}{k_0} \sum_{i=1}^{n}
       \frac{D_i}{\pi(X_i)}\,
       \mathbf{1}_{\{Y_i > q_1(1 - \beta_n)\}} \\
  &= o_p(1) \cdot O_p(1)
   = o_p(1),
\end{align*}
again by Lemma~\ref{lem:A.2}.

\paragraph{Step 4: $A_{n,1}^{4} \xrightarrow{p} 0$.}
We first establish a per-$i$ sandwich on the summand of $A_{n,1}^{4}$.
Let $\Delta_i :=
  \mathbf{1}_{\{Y_i > \widehat{q}_1(1 - \beta_n)\}}
 - \mathbf{1}_{\{Y_i > q_1(1 - \beta_n)\}} \in \{-1, 0, 1\}$.
For every $i \in \{1, 2, \ldots, n\}$ we claim
\begin{align}
  0
  &\le \Delta_i\,
       \log \frac{Y_i - \widehat{q}_1(1 - \alpha_n)}
                {\widehat{q}_1(1 - \beta_n) - \widehat{q}_1(1 - \alpha_n)}
       \nonumber \\
  &\le \Delta_i\,
       \log \frac{q_1(1 - \beta_n) - \widehat{q}_1(1 - \alpha_n)}
                {\widehat{q}_1(1 - \beta_n) - \widehat{q}_1(1 - \alpha_n)}.
       \label{eq:lem_quantile_per_i}
\end{align}
To verify Equation~\eqref{eq:lem_quantile_per_i}, observe that
$\widehat{q}_1(1 - \beta_n) < Y_i \le q_1(1 - \beta_n)$ when $\Delta_i = +1$, and
$q_1(1 - \beta_n) < Y_i \le \widehat{q}_1(1 - \beta_n)$ when $\Delta_i = -1$.
In either case, substituting the appropriate bound on $Y_i$ into the numerator and
using the monotonicity of the logarithm directly gives the claimed sandwich; the
case $\Delta_i = 0$ is trivial.

By Step~2, $q_1(1 - \beta_n) = \widehat{q}_1(1 - \beta_n)(1 + o_p(1))$ and
$\widehat{q}_1(1 - \alpha_n) = \widehat{q}_1(1 - \beta_n) \cdot o_p(1)$, so
\begin{align*}
  \frac{q_1(1 - \beta_n) - \widehat{q}_1(1 - \alpha_n)}
       {\widehat{q}_1(1 - \beta_n) - \widehat{q}_1(1 - \alpha_n)}
  &= \frac{\widehat{q}_1(1 - \beta_n)(1 + o_p(1))}
       {\widehat{q}_1(1 - \beta_n)(1 - o_p(1))} \\
  &= 1 + o_p(1),
\end{align*}
and therefore
$\log \frac{q_1(1 - \beta_n) - \widehat{q}_1(1 - \alpha_n)}
            {\widehat{q}_1(1 - \beta_n) - \widehat{q}_1(1 - \alpha_n)}
 = o_p(1)$.

Applying the sandwich Equation~\eqref{eq:lem_quantile_per_i} and summing
over $i$ -- noting that the factor
$\log\frac{q_1(1 - \beta_n) - \widehat{q}_1(1 - \alpha_n)}
        {\widehat{q}_1(1 - \beta_n) - \widehat{q}_1(1 - \alpha_n)}$
does not depend on $i$ -- we obtain
\begin{equation*}
  0 \le A_{n,1}^{4}
  \le o_p(1)\,
     \frac{1}{k_0} \sum_{i=1}^{n}
       \frac{D_i}{\widehat{\pi}(X_i)}\,
       \Bigl(
         \mathbf{1}_{\{Y_i > \widehat{q}_1(1 - \beta_n)\}}
         - \mathbf{1}_{\{Y_i > q_1(1 - \beta_n)\}}
       \Bigr)
  \xrightarrow{p} 0,
\end{equation*}
where the last convergence follows from
Lemma~\ref{lem:A.3}.

\paragraph{Step 5: $A_{n,1}^{5} \xrightarrow{p} 0$.}
Factor $A_{n,1}^{5}$ as
\begin{align*}
  |A_{n,1}^{5}|
  &=
  \left|
    \frac{1}{k_0} \sum_{i=1}^{n}
      D_i \!\left(
        \frac{1}{\widehat{\pi}(X_i)} - \frac{1}{\pi(X_i)}
      \right)\!
      \mathbf{1}_{\{Y_i > q_1(1 - \beta_n)\}}
      \log \frac{Y_i - \widehat{q}_1(1 - \alpha_n)}
                {\widehat{q}_1(1 - \beta_n) - \widehat{q}_1(1 - \alpha_n)}
  \right| \\
  &\le
  \sup_{x \in \mathrm{supp}(X)}
    \left|\frac{1}{\widehat{\pi}(x)} - \frac{1}{\pi(x)}\right|
  \cdot
  \frac{1}{k_0} \sum_{i=1}^{n}
    D_i\,
    \mathbf{1}_{\{Y_i > q_1(1 - \beta_n)\}}
    \left|\log \frac{Y_i - \widehat{q}_1(1 - \alpha_n)}
                  {\widehat{q}_1(1 - \beta_n) - \widehat{q}_1(1 - \alpha_n)}\right|.
\end{align*}
By Deuber et al.~\cite{deuber2024estimation} (Lemma~G.3) (which uses Assumption~\ref{asm:sieve_assumptions} and the continuity of $g(\pi) = 1 / \pi$ on $[c, 1]$, recall $\pi(\cdot) \ge c > 0$ by Assumption~\ref{asm:causal_identification}),
\begin{equation*}
  \sup_{x \in \mathrm{supp}(X)}
    \left|\frac{1}{\widehat{\pi}(x)} - \frac{1}{\pi(x)}\right|
  = o_p(1).
\end{equation*}
For the second factor, the triangle inequality together with the
decomposition of the log in the preamble gives
\begin{align*}
  &\frac{1}{k_0} \sum_{i=1}^{n}
    D_i\,
    \mathbf{1}_{\{Y_i > q_1(1 - \beta_n)\}}
    \left|\log \frac{Y_i - \widehat{q}_1(1 - \alpha_n)}
                  {\widehat{q}_1(1 - \beta_n) - \widehat{q}_1(1 - \alpha_n)}\right| \\
  &\le
  (1 - c)\,
  \frac{1}{k_0} \sum_{i=1}^{n}
    \frac{D_i}{\pi(X_i)}\,
    \mathbf{1}_{\{Y_i > q_1(1 - \beta_n)\}}
    \left|\log \frac{Y_i - \widehat{q}_1(1 - \alpha_n)}
                  {\widehat{q}_1(1 - \beta_n) - \widehat{q}_1(1 - \alpha_n)}\right| \\
  &\le
  (1 - c)\bigl(|A_{n,1}^{1}| + |A_{n,1}^{2}| + |A_{n,1}^{3}|\bigr)
  = O_p(1).
\end{align*}
Combining, $|A_{n,1}^{5}| = o_p(1) \cdot O_p(1) = o_p(1)$.

\paragraph{Conclusion.}
Combining the five steps,
$\widehat{\gamma}_{1}^{F}(\beta_n, \alpha_n)
 = A_{n,1}^{1} + A_{n,1}^{2} + A_{n,1}^{3} + A_{n,1}^{4} + A_{n,1}^{5}
 \xrightarrow{p} \gamma_1 + 0 + 0 + 0 + 0 = \gamma_1$.
The proof for $j = 0$ is identical after replacing
$D_i$ with $1 - D_i$ and $\pi(X_i)$ with $1 - \pi(X_i)$
throughout.  \qedhere
\end{proof}

\subsection{Proof of Lemma~\ref{thm:causal_fraga_equation}}
\label{sec:causal_fraga_equation_proof}

\begin{proof}
We prove the theorem for $j = 1$; the case $j = 0$ follows by
replacing $D_i$ with $1 - D_i$ and $\pi$ with $1 - \pi$ throughout.

\paragraph{Decomposition.}
Following the same decomposition as in the consistency proof
(Theorem~\ref{thm:causal_fraga_consistency}), we write
\begin{equation*}
  \widehat{\gamma}_{1}^{F}(\beta_n, \alpha_n)
  = A_{n,1}^{1} + A_{n,1}^{2} + A_{n,1}^{3} + A_{n,1}^{4} + A_{n,1}^{5},
\end{equation*}
where $A_{n,1}^{1},\ldots,A_{n,1}^{5}$ are defined as in the proof of
Theorem~\ref{thm:causal_fraga_consistency} (we omit the repetition here).  From the consistency proof,
$A_{n,1}^{2}, A_{n,1}^{3}, A_{n,1}^{4}, A_{n,1}^{5} = o_p(1)$, but
$A_{n,1}^{1} \xrightarrow{p} \gamma_1$ is the dominant term and requires
a refined analysis for the CLT.

By the consistency relation $Y_i = D_i Y_i(1) + (1 - D_i) Y_i(0)$,
the summand of $A_{n,1}^{1}$ vanishes unless $D_i = 1$, in which case
$Y_i = Y_i(1)$.  Using the tail quantile function $U_1$ and writing
$Z_i(1) = 1 / (1 - F_1(Y_i(1)))$, we have
$Y_i(1) = U_1(Z_i(1))$,
$q_1(1 - \beta_n) = U_1(\beta_n^{-1})$,
$q_1(1 - \alpha_n) = U_1(\alpha_n^{-1})$, and therefore
\begin{equation}\label{eq:A1n_form}
  A_{n,1}^{1}
  = \frac{1}{k_0} \sum_{i=1}^{n}
    \frac{D_i}{\pi(X_i)}\,
    \mathbf{1}_{\{Z_i(1) > \beta_n^{-1}\}}
    \log\!\left(
      \frac{U_1(Z_i(1)) - U_1(\alpha_n^{-1})}
           {U_1(\beta_n^{-1}) - U_1(\alpha_n^{-1})}
    \right).
\end{equation}

\paragraph{Second-order expansion of the log-ratio.}
Under Assumption~\ref{asm:second_order}, $U_1$ satisfies the
second-order regular variation relation (with the convention
$\frac{x^{\rho} - 1}{\rho} := \log x$ when $\rho = 0$, understood via
continuous extension)
\begin{equation*}
  \frac{U_1(t x)}{U_1(t)}
  = x^{\gamma_1} + A_1(t)\, x^{\gamma_1}\,
    \frac{x^{\rho_1} - 1}{\rho_1}\,(1 + o(1)), \qquad t \to +\infty,
\end{equation*}
for every $x > 0$.  For $i \in \{1, 2, \ldots, n\}$,
\begin{align*}
  &\log\frac{U_1(Z_i(1)) - U_1(\alpha_n^{-1})}
            {U_1(\beta_n^{-1}) - U_1(\alpha_n^{-1})} \\
  &\quad = \log\!\left(\frac{U_1(Z_i(1))}{U_1(\alpha_n^{-1})} - 1\right)
            - \log\!\left(\frac{U_1(\beta_n^{-1})}{U_1(\alpha_n^{-1})} - 1\right) \\
  &\quad = \log\!\Bigl(
             (Z_i(1)\,\alpha_n)^{\gamma_1}
             + A_1(\alpha_n^{-1})\,(Z_i(1)\,\alpha_n)^{\gamma_1}\,
               \frac{(Z_i(1)\,\alpha_n)^{\rho_1} - 1}{\rho_1}\,
               (1 + o_p(1)) - 1
           \Bigr) \\
  &\qquad - \log\!\Bigl(
             \Bigl(\frac{\alpha_n}{\beta_n}\Bigr)^{\!\gamma_1}
             + A_1(\alpha_n^{-1})\,\Bigl(\frac{\alpha_n}{\beta_n}\Bigr)^{\!\gamma_1}\,
               \frac{(\alpha_n/\beta_n)^{\rho_1} - 1}{\rho_1}\,
               (1 + o_p(1)) - 1
           \Bigr) \\
  &\quad = \gamma_1 \log(Z_i(1)\,\alpha_n)
            + \log\!\Bigl(
                1 + A_1(\alpha_n^{-1})\,
                  \frac{(Z_i(1)\,\alpha_n)^{\rho_1} - 1}{\rho_1}\,
                  (1 + o_p(1)) - (Z_i(1)\,\alpha_n)^{-\gamma_1}
              \Bigr) \\
  &\qquad - \Bigl(\gamma_1 \log\Bigl(\frac{\alpha_n}{\beta_n}\Bigr)
            + \log\!\Bigl(
                1 + A_1(\alpha_n^{-1})\,
                  \frac{(\alpha_n/\beta_n)^{\rho_1} - 1}{\rho_1}\,
                  (1 + o_p(1)) - \Bigl(\frac{\alpha_n}{\beta_n}\Bigr)^{\!-\gamma_1}
              \Bigr)\Bigr) \\
  &\quad = \gamma_1 \log(Z_i(1)\,\beta_n)
            + \Bigl(\frac{\alpha_n}{\beta_n}\Bigr)^{\!-\gamma_1}
              \Bigl(1 - (Z_i(1)\,\beta_n)^{-\gamma_1}\Bigr)(1 + o_p(1)) \\
  &\qquad + A_1(\alpha_n^{-1})\,
              \Bigl(\frac{\alpha_n}{\beta_n}\Bigr)^{\!\rho_1}
              \frac{(Z_i(1)\,\beta_n)^{\rho_1} - 1}{\rho_1}\,(1 + o_p(1)).
\end{align*}
The last equality uses $\log(1 + u) = u\,(1 + o(1))$ as $u \to 0$: as
$n \to +\infty$, $\alpha_n^{-1} \to +\infty$ and $\alpha_n/\beta_n \to +\infty$;
for $Z_i(1) \ge \beta_n^{-1}$ we have $Z_i(1)\,\alpha_n \to +\infty$, so
$A_1(\alpha_n^{-1}) \to 0$,
$(Z_i(1)\,\alpha_n)^{\rho_1} \to 0$,
$(Z_i(1)\,\alpha_n)^{-\gamma_1} \to 0$,
$(\alpha_n/\beta_n)^{\rho_1} \to 0$, and
$(\alpha_n/\beta_n)^{-\gamma_1} \to 0$, all of which enter the
inner logarithms.  This yields the final compact form
\begin{align}
  &\log\!\left(
     \frac{U_1(Z_i(1)) - U_1(\alpha_n^{-1})}
          {U_1(\beta_n^{-1}) - U_1(\alpha_n^{-1})}
   \right) \nonumber \\
  &\quad = \gamma_1 \log\!\bigl(Z_i(1)\,\beta_n\bigr) \nonumber\\
  &\qquad + \Bigl(\frac{\alpha_n}{\beta_n}\Bigr)^{\!-\gamma_1}
           \Bigl(1 - (Z_i(1)\,\beta_n)^{-\gamma_1}\Bigr)
           \bigl(1 + o_p(1)\bigr) \nonumber\\
  &\qquad + A_1(\alpha_n^{-1})\,
           \Bigl(\frac{\alpha_n}{\beta_n}\Bigr)^{\!\rho_1}
           \frac{(Z_i(1)\,\beta_n)^{\rho_1} - 1}{\rho_1}
           \bigl(1 + o_p(1)\bigr).
  \label{eq:second_order_exp}
\end{align}

\paragraph{Three-term decomposition of $A_{n,1}^{1}$.}
Substituting~\eqref{eq:second_order_exp} into~\eqref{eq:A1n_form} yields
\begin{equation*}
  A_{n,1}^{1} = A_{n,1}^{1,1} + A_{n,1}^{1,2} + A_{n,1}^{1,3},
\end{equation*}
where
\begin{align*}
  A_{n,1}^{1,1}
  &= \frac{\gamma_1}{k_0} \sum_{i=1}^{n}
     \frac{D_i}{\pi(X_i)}\,
     \mathbf{1}_{\{Z_i(1) > \beta_n^{-1}\}}
     \log\!\bigl(Z_i(1)\,\beta_n\bigr), \\[4pt]
  A_{n,1}^{1,2}
  &= \frac{1}{k_0} \sum_{i=1}^{n}
     \frac{D_i}{\pi(X_i)}\,
     \mathbf{1}_{\{Z_i(1) > \beta_n^{-1}\}}
     \Bigl(\frac{\alpha_n}{\beta_n}\Bigr)^{\!-\gamma_1}
     \Bigl(1 - (Z_i(1)\,\beta_n)^{-\gamma_1}\Bigr)
     \bigl(1 + o_p(1)\bigr), \\[4pt]
  A_{n,1}^{1,3}
  &= \frac{1}{k_0} \sum_{i=1}^{n}
     \frac{D_i}{\pi(X_i)}\,
     \mathbf{1}_{\{Z_i(1) > \beta_n^{-1}\}}
     \,A_1(\alpha_n^{-1})\,
     \Bigl(\frac{\alpha_n}{\beta_n}\Bigr)^{\!\rho_1}
     \frac{(Z_i(1)\,\beta_n)^{\rho_1} - 1}{\rho_1}
     \bigl(1 + o_p(1)\bigr).
\end{align*}
Since $\alpha_n^{-1} = n/k$, we have
$(\alpha_n / \beta_n)^{-\gamma_1} = (k_0 / k)^{\gamma_1}$ and
$A_1(\alpha_n^{-1}) = A_1(n/k)$.

\paragraph{Uniformity of the $o_p(1)$ remainder.}
Although the realization of the $o_p(1)$ error in~\eqref{eq:second_order_exp}
depends on the sampled value of $Z_i(1)$, the \emph{order} of the
remainder is the same for all qualifying $i$.  Indeed, since
$t = \alpha_n^{-1}$ is common to all summands and the second-order
regular variation is uniform in $x$ on the sliding window
$x \in [\alpha_n / \beta_n, +\infty)$ (the convergence in second-order regular variation is uniform in~$x$; see de Haan and Ferreira~\cite{de2006extreme} for details),
\begin{equation*}
  \sup_{i\,:\,Z_i(1) > \beta_n^{-1}}
    \bigl|o_p^{(i)}(1)\bigr| = o_p(1),
\end{equation*}
i.e.\ there exists a single $o_p(1)$ random variable dominating all
per-$i$ errors.

We now analyse each term separately.

\paragraph{Analysis of $A_{n,1}^{1,1}$ --- Central limit theorem.}
Define
\begin{equation*}
  R_{n,1,i}
  = \frac{D_i}{\pi(X_i)}\,
    \mathbf{1}_{\{Z_i(1) > \beta_n^{-1}\}}
    \log\!\bigl(Z_i(1)\,\beta_n\bigr) - \beta_n.
\end{equation*}
Then
$A_{n,1}^{1,1} = \frac{\gamma_1}{k_0} \sum_{i=1}^{n} R_{n,1,i} + \gamma_1$.

\emph{Mean.}  Recall that under Assumption~\ref{asm:potential_outcome_distributions},
$F_1(Y_i(1)) \sim U(0,1)$, so $Z_i(1) = 1/(1 - F_1(Y_i(1)))$ has density
$\psi_1(z) = z^{-2}$ for $z \ge 1$.  By Lemma~\ref{lem:A.1} with $r = 1$,
\begin{equation*}
  \mathbb{E}(R_{n,1,i})
  = \int_{\beta_n^{-1}}^{+\infty} \log(z\,\beta_n)\, z^{-2}\, dz - \beta_n
  = \beta_n - \beta_n = 0.
\end{equation*}

\emph{Second moment.}  Since $\mathbb{E}(R_{n,1,i}) = 0$,
$\mathrm{Var}(R_{n,1,i}) = \mathbb{E}(R_{n,1,i}^{2})$.  Applying
Lemma~\ref{lem:A.1} with $r = 2$ yields the bounds
\begin{align*}
  \mathbb{E}(R_{n,1,i}^{2})
  &\le \frac{1}{c}\,
      \mathbb{E}\!\left[\mathbf{1}_{\{Z_i(1) > \beta_n^{-1}\}}
        \log^{2}(Z_i(1)\,\beta_n)\right] - \beta_n^{2}
   = \frac{2\beta_n}{c} - \beta_n^{2},\\[4pt]
  \mathbb{E}(R_{n,1,i}^{2})
  &\ge \frac{1}{1 - c}\,
      \mathbb{E}\!\left[\mathbf{1}_{\{Z_i(1) > \beta_n^{-1}\}}
        \log^{2}(Z_i(1)\,\beta_n)\right] - \beta_n^{2}
   = \frac{2\beta_n}{1 - c} - \beta_n^{2}.
\end{align*}
Hence $\mathbb{E}(R_{n,1,i}^{2}) = O(\beta_n)$ and
\begin{equation*}
  \sigma_{n,1}^{2} := \mathbb{E}(R_{n,1,i}^{2})
  \in \Bigl[\frac{2}{1 - c}\,\beta_n - \beta_n^{2},\;
            \frac{2}{c}\,\beta_n - \beta_n^{2}\Bigr].
\end{equation*}
Define $a_{n,1} := \sqrt{n \sigma_{n,1}^{2} / k_0}$; then $a_{n,1}$ is a
bounded sequence.

\emph{Lindeberg condition.}  Let
$T_{n,1} = \sum_{i=1}^{n} R_{n,1,i}$ and
$V_n^{2} = \mathrm{Var}(T_{n,1}) = n \sigma_{n,1}^{2}$.
To verify the Lindeberg condition, we use the bound
$\mathbf{1}_{\{|R_{n,1,i}| > \varepsilon V_n\}}
 \le R_{n,1,i}^{2} / (\varepsilon^{2} V_n^{2})$, giving
\begin{equation*}
  \frac{1}{V_n^{2}} \sum_{i=1}^{n}
    \mathbb{E}\!\left(R_{n,1,i}^{2}\,
      \mathbf{1}_{\{|R_{n,1,i}| > \varepsilon V_n\}}\right)
  \le \frac{1}{\varepsilon^{2} V_n^{4}}
    \sum_{i=1}^{n} \mathbb{E}(R_{n,1,i}^{4}).
\end{equation*}
Write $G_i := \frac{D_i}{\pi(X_i)}\, \mathbf{1}_{\{Z_i(1) > \beta_n^{-1}\}}
\log(Z_i(1)\,\beta_n)$, so that $R_{n,1,i} = G_i - \beta_n$.  Expanding,
\begin{equation*}
  \mathbb{E}(R_{n,1,i}^{4})
  = \mathbb{E}(G_i^{4}) - 4\beta_n \mathbb{E}(G_i^{3})
    + 6\beta_n^{2} \mathbb{E}(G_i^{2})
    - 4\beta_n^{3} \mathbb{E}(G_i) + \beta_n^{4}.
\end{equation*}
By Lemma~\ref{lem:A.1} with $r = 1, 2, 3, 4$ and the change of
variables $u = \log(z\,\beta_n)$,
\begin{align*}
  \mathbb{E}\!\bigl[\mathbf{1}_{\{Z_i(1) > \beta_n^{-1}\}}
    \log^{k}(Z_i(1)\,\beta_n)\bigr]
  &= \int_{\beta_n^{-1}}^{+\infty}
     \log^{k}(z\,\beta_n)\, z^{-2}\, dz
   = \beta_n \int_{0}^{\infty} u^{k} e^{-u}\, du
   = k!\,\beta_n,
\end{align*}
so that
\begin{align*}
  \mathbb{E}(G_i^{4}) &\le \frac{24}{c^{3}}\,\beta_n, &
  \mathbb{E}(G_i^{3}) &\ge \frac{6}{(1 - c)^{2}}\,\beta_n, \\
  \mathbb{E}(G_i^{2}) &\le \frac{2}{c}\,\beta_n, &
  \mathbb{E}(G_i) &= \beta_n.
\end{align*}
Plugging these bounds into the expansion,
\begin{equation*}
  \mathbb{E}(R_{n,1,i}^{4})
  \le \frac{24}{c^{3}}\,\beta_n
     - \frac{24}{(1 - c)^{2}}\,\beta_n^{2}
     + \frac{12}{c}\,\beta_n^{3}
     - 3\,\beta_n^{4}
   = O(\beta_n).
\end{equation*}
Consequently, for every $\varepsilon > 0$,
\begin{equation*}
  \frac{1}{V_n^{2}} \sum_{i=1}^{n}
    \mathbb{E}\!\left(R_{n,1,i}^{2}\,
      \mathbf{1}_{\{|R_{n,1,i}| > \varepsilon V_n\}}\right)
  \le \frac{1}{\varepsilon^{2}\, V_n^{4}}
    \sum_{i=1}^{n} \mathbb{E}(R_{n,1,i}^{4})
  \le \frac{1}{\varepsilon^{2}}\,
    \frac{n \cdot O(\beta_n)}{(n \cdot O(\beta_n))^{2}}
  = \frac{1}{\varepsilon^{2}}\,
    O\!\left(\frac{1}{n \beta_n}\right)
  \to 0.
\end{equation*}
Thus the Lindeberg condition holds, and by the Lindeberg CLT
(Theorem~\ref{thm:lindeberg_clt}),
\begin{equation*}
  \frac{T_{n,1}}{V_n} \xrightarrow{d} \mathcal{N}(0, 1).
\end{equation*}
Let $P_{n,1} := T_{n,1} / V_n \xrightarrow{d} \mathcal{N}(0, 1)$.  Since
$V_n = a_{n,1} \sqrt{k_0}$, we have
\begin{equation}\label{eq:A11_result}
  A_{n,1}^{1,1}
  = \gamma_1 + \frac{\gamma_1}{k_0}\, V_n P_{n,1}
  = \gamma_1 + \gamma_1 a_{n,1} \frac{P_{n,1}}{\sqrt{k_0}}.
\end{equation}

\paragraph{Analysis of $A_{n,1}^{1,2}$ --- Leading bias term.}
Define
\begin{equation*}
  v_{n,i}
  = \frac{D_i}{\pi(X_i)}\,
    \mathbf{1}_{\{Z_i(1) > \beta_n^{-1}\}}
    \Bigl(1 - (Z_i(1)\,\beta_n)^{-\gamma_1}\Bigr),\qquad
  T_{n,2} = \sum_{i=1}^{n} v_{n,i}.
\end{equation*}
By Lemma~\ref{lem:A.1} with $r = 1$,
\begin{equation*}
  \mathbb{E}(v_{n,i})
  = \int_{\beta_n^{-1}}^{+\infty}
    \bigl(1 - (\beta_n z)^{-\gamma_1}\bigr)\, z^{-2}\, dz
  = \frac{\gamma_1}{1 + \gamma_1}\,\beta_n.
\end{equation*}
Hence $\mathbb{E}(T_{n,2} / k_0) = \gamma_1 / (1 + \gamma_1) =: b_1$.
Applying Lemma~\ref{lem:A.1} with $r = 2$ gives
$\mathbb{E}(v_{n,i}^{2}) = O(\beta_n)$, so
$\mathrm{Var}(T_{n,2} / k_0) = O(1 / (n \beta_n)) \to 0$.
The WLLN for triangular arrays
(Theorem~\ref{thm:wlln_triangular}) yields
$T_{n,2} / k_0 \xrightarrow{p} b_1$.  Since the definition of $A_{n,1}^{1,2}$
carries a factor $(1 + o_p(1))$ inherited from the expansion,
we have
\begin{equation}\label{eq:A12_result}
  A_{n,1}^{1,2}
  = \Bigl(\frac{k_0}{k}\Bigr)^{\!\gamma_1}
    \cdot \frac{T_{n,2}}{k_0} \cdot \bigl(1 + o_p(1)\bigr)
  = b_1 \Bigl(\frac{k_0}{k}\Bigr)^{\!\gamma_1}
    \bigl(1 + o_p(1)\bigr).
\end{equation}

\paragraph{Analysis of $A_{n,1}^{1,3}$ --- Second-order bias term.}
Define
\begin{equation*}
  w_{n,i}
  = \frac{D_i}{\pi(X_i)}\,
    \mathbf{1}_{\{Z_i(1) > \beta_n^{-1}\}}
    \frac{(Z_i(1)\,\beta_n)^{\rho_1} - 1}{\rho_1},\qquad
  T_{n,3} = \sum_{i=1}^{n} w_{n,i}.
\end{equation*}
By Lemma~\ref{lem:A.1} with $r = 1$,
\begin{equation*}
  \mathbb{E}(w_{n,i})
  = \int_{\beta_n^{-1}}^{+\infty}
    \frac{(\beta_n z)^{\rho_1} - 1}{\rho_1}\, z^{-2}\, dz
  = \frac{1}{1 - \rho_1}\,\beta_n.
\end{equation*}
Hence $\mathbb{E}(T_{n,3} / k_0) = 1 / (1 - \rho_1) =: c_1$.
Applying Lemma~\ref{lem:A.1} with $r = 2$ again gives
$\mathbb{E}(w_{n,i}^{2}) = O(\beta_n)$, so
$\mathrm{Var}(T_{n,3} / k_0) = O(1 / (n \beta_n)) \to 0$.
The WLLN gives $T_{n,3} / k_0 \xrightarrow{p} c_1$.  Incorporating the
factor $(1 + o_p(1))$ inherited from the expansion, we obtain
\begin{equation}\label{eq:A13_result}
  A_{n,1}^{1,3}
  = A_1\!\Bigl(\frac{n}{k}\Bigr)\,
    \Bigl(\frac{k_0}{k}\Bigr)^{\!-\rho_1}
    \cdot \frac{T_{n,3}}{k_0} \cdot \bigl(1 + o_p(1)\bigr)
  = c_1\, A_1\!\Bigl(\frac{n}{k}\Bigr)\,
    \Bigl(\frac{k_0}{k}\Bigr)^{\!-\rho_1}
    \bigl(1 + o_p(1)\bigr).
\end{equation}

\paragraph{Final assembly.}
Combining~\eqref{eq:A11_result},~\eqref{eq:A12_result},
and~\eqref{eq:A13_result}, we obtain
\begin{equation*}
  A_{n,1}^{1}
  = \gamma_1
  + \gamma_1 a_{n,1} \frac{P_{n,1}}{\sqrt{k_0}}
  + b_1 \Bigl(\frac{k_0}{k}\Bigr)^{\!\gamma_1}
    \bigl(1 + o_p(1)\bigr)
  + c_1\, A_1\!\Bigl(\frac{n}{k}\Bigr)\,
    \Bigl(\frac{k_0}{k}\Bigr)^{\!-\rho_1}
    \bigl(1 + o_p(1)\bigr).
\end{equation*}
From the consistency proof,
$A_{n,1}^{2} + A_{n,1}^{3} + A_{n,1}^{4} + A_{n,1}^{5} = o_p(1)$.  Therefore
\begin{equation*}
  \widehat{\gamma}_{1}^{F}(\beta_n, \alpha_n)
  = \gamma_1
  + \gamma_1 a_{n,1} \frac{P_{n,1}}{\sqrt{k_0}}
  + b_1 \Bigl(\frac{k_0}{k}\Bigr)^{\!\gamma_1}
    \bigl(1 + o_p(1)\bigr)
  + c_1\, A_1\!\Bigl(\frac{n}{k}\Bigr)\,
    \Bigl(\frac{k_0}{k}\Bigr)^{\!-\rho_1}
    \bigl(1 + o_p(1)\bigr)
  + A_{n,1},
\end{equation*}
with $P_{n,1} \xrightarrow{d} \mathcal{N}(0, 1)$,
$a_{n,1}^{2} = n\,\mathbb{E}(R_{n,1,i}^{2}) / k_0$,
$b_1 = \gamma_1 / (1 + \gamma_1)$,
$c_1 = 1 / (1 - \rho_1)$.  The proof for $j = 0$ follows by the symmetric
substitution of $D$ with $1 - D$, $\pi$ with $1 - \pi$, $\widehat{\pi}$
with $1 - \widehat{\pi}$, $q_1$ with $q_0$, and $\widehat{q}_1$ with
$\widehat{q}_0$ throughout.
\qedhere
\end{proof}

\subsection{Proof of Theorem~\ref{thm:causal_fraga_clt}}

\begin{proof}
From Lemma~\ref{thm:causal_fraga_equation}, multiplying by $\sqrt{k_0}$ gives
\begin{equation*}
  \sqrt{k_0}\,\bigl(\widehat{\gamma}_{j}^{F} - \gamma_j\bigr)
  = \gamma_j a_{n,j} P_{n,j}
    + \sqrt{k_0}\,b_j \Bigl(\frac{k_0}{k}\Bigr)^{\!\gamma_j} (1 + o_p(1))
    + \sqrt{k_0}\,c_j A_j\Bigl(\frac{n}{k}\Bigr)
               \Bigl(\frac{k_0}{k}\Bigr)^{\!-\rho_j} (1 + o_p(1))
    + \sqrt{k_0}\,A_{n,j}.
\end{equation*}
Under the additional assumptions of Theorem~\ref{thm:causal_fraga_clt}
($\sqrt{k}\,A_j(n/k) \to \lambda_j$ and
$(k_0/k)^{\gamma_j} = o(k_0^{-1/2})$, $A_{n,j} = o_p(k_0^{-1/2})$),
the two bias terms and the remainder satisfy
\begin{align*}
  \sqrt{k_0}\,b_j (k_0 / k)^{\gamma_j} &= o_p(1), \\
  \sqrt{k_0}\,c_j\,A_j(n/k)\,(k_0 / k)^{-\rho_j} &= o_p(1), \\
  \sqrt{k_0}\,A_{n,j} &= o_p(1).
\end{align*}
Hence
\begin{equation*}
  \sqrt{k_0}\,\bigl(\widehat{\gamma}_{j}^{F} - \gamma_j\bigr)
  = \gamma_j a_{n,j} P_{n,j} + o_p(1).
\end{equation*}
By Assumption~\ref{asm:an_convergence}, $a_{n,j} \to a_j$, so
$\gamma_j a_{n,j} P_{n,j} \xrightarrow{d}
 \mathcal{N}(0, \gamma_j^{2} a_j^{2})$ by
Slutsky's theorem (Theorem~\ref{thm:slutsky}) and
$P_{n,j} \xrightarrow{d} \mathcal{N}(0, 1)$.
\qedhere
\end{proof}

\subsection{Proof of Theorem~\ref{thm:causal_fraga_joint_clt}}

\begin{proof}
We establish joint convergence.  Recall the decomposition from
the proof of Theorem~\ref{thm:causal_fraga_consistency}:
\begin{equation*}
  \widehat{\gamma}_{j}^{F}(\beta_n, \alpha_n)
  = A_{n,j}^{1} + A_{n,j},
\end{equation*}
where $A_{n,j} := A_{n,j}^{2} + A_{n,j}^{3} + A_{n,j}^{4} + A_{n,j}^{5}
= o_p(1)$.  Moreover, in the proof of
Lemma~\ref{thm:causal_fraga_equation}, the leading term
$A_{n,j}^{1}$ is further decomposed as
\begin{equation*}
  A_{n,j}^{1} = A_{n,j}^{1,1} + A_{n,j}^{1,2} + A_{n,j}^{1,3},
\end{equation*}
with
\begin{equation*}
  A_{n,j}^{1,1}
  = \frac{\gamma_j}{k_0} \sum_{i=1}^{n}
    \frac{D_{i,j}}{\pi_{j}(X_i)}\,
    \mathbf{1}_{\{Z_i(j) > \beta_n^{-1}\}}
    \log\!\bigl(Z_i(j)\,\beta_n\bigr),
\end{equation*}
where $(D_{i,1}, \pi_1) := (D_i, \pi)$ and
$(D_{i,0}, \pi_0) := (1 - D_i, 1 - \pi)$.

Define the centred leading sums $T_{n,j} := \sum_{i=1}^{n} R_{n,j,i}$,
where
\[
  R_{n,j,i} := \frac{D_{i,j}}{\pi_{j}(X_i)}\,
    \mathbf{1}_{\{Z_i(j) > \beta_n^{-1}\}}
    \log\!\bigl(Z_i(j)\,\beta_n\bigr) - \beta_n,
  \qquad j \in \{0, 1\},
\]
and $Z_i(j) = 1 / (1 - F_j(Y_i(j)))$.  Then
$A_{n,j}^{1,1} = \gamma_j + \gamma_j T_{n,j} / k_0$.

Recall from the proof of Lemma~\ref{thm:causal_fraga_equation} that
\begin{align*}
  A_{n,j}^{1,2}
  &= b_j \Bigl(\frac{k_0}{k}\Bigr)^{\!\gamma_j}
     \bigl(1 + o_p(1)\bigr), \\[4pt]
  A_{n,j}^{1,3}
  &= c_j\, A_j\Bigl(\frac{n}{k}\Bigr)\,
     \Bigl(\frac{k_0}{k}\Bigr)^{\!-\rho_j}
     \bigl(1 + o_p(1)\bigr).
\end{align*}
Under the rate assumptions of Theorem~\ref{thm:causal_fraga_joint_clt}, they satisfy
$\sqrt{k_0}\,A_{n,j}^{1,2} = o_p(1)$ and
$\sqrt{k_0}\,A_{n,j}^{1,3} = o_p(1)$, while the remainder satisfies
$A_{n,j} = o_p(k_0^{-1/2})$.  Chaining the equalities
\begin{align*}
  \widehat{\gamma}_{j}^{F} - \gamma_j
  &= A_{n,j}^{1} + A_{n,j} - \gamma_j \\
  &= (A_{n,j}^{1,1} - \gamma_j) + A_{n,j}^{1,2} + A_{n,j}^{1,3} + A_{n,j},
\end{align*}
and multiplying by $\sqrt{k_0}$,
\begin{align*}
  \sqrt{k_0}\,\bigl(\widehat{\gamma}_{j}^{F} - \gamma_j\bigr)
  &= \gamma_j\,\frac{T_{n,j}}{\sqrt{k_0}}
   + \sqrt{k_0}\,A_{n,j}^{1,2}
   + \sqrt{k_0}\,A_{n,j}^{1,3}
   + \sqrt{k_0}\,A_{n,j} \\
  &= \gamma_j\,\frac{T_{n,j}}{\sqrt{k_0}} + o_p(1).
\end{align*}
Hence it suffices to prove joint convergence of
$(\gamma_1\, T_{n,1} / \sqrt{k_0},\, \gamma_0\, T_{n,0} / \sqrt{k_0})$.

\paragraph{Step 1: Cross-covariance vanishes.}
For $i \neq \ell$, independence of the observations gives
$\operatorname{Cov}(R_{n,1,i}, R_{n,0,\ell}) = 0$.  For $i = \ell$,
observe that $D_i(1 - D_i) = 0$, so the product of the two uncentred
indicator factors in $R_{n,1,i}$ and $R_{n,0,i}$ vanishes.
Expanding $R_{n,1,i} R_{n,0,i}$ and using
$\mathbb{E}(R_{n,j,i}) = 0$,
\begin{align*}
  \mathbb{E}\!\bigl[R_{n,1,i} R_{n,0,i}\bigr]
  &= -\beta_n\;
     \mathbb{E}\!\left[\frac{D_i}{\pi(X_i)}\,
       \mathbf{1}_{\{Z_i(1) > \beta_n^{-1}\}}
       \log\!\bigl(Z_i(1)\,\beta_n\bigr)\right] \\
  &\quad -\beta_n\;
     \mathbb{E}\!\left[\frac{1 - D_i}{1 - \pi(X_i)}\,
       \mathbf{1}_{\{Z_i(0) > \beta_n^{-1}\}}
       \log\!\bigl(Z_i(0)\,\beta_n\bigr)\right] + \beta_n^{2}.
\end{align*}
By Lemma~\ref{lem:A.1} with $r = 1$, each of the two expectations on
the right-hand side equals $\beta_n$, hence
\begin{equation*}
  \mathbb{E}\!\bigl[R_{n,1,i} R_{n,0,i}\bigr]
  = -\beta_n^{2}.
\end{equation*}
Therefore
\begin{equation*}
  \operatorname{Cov}\!\left(\gamma_1\,\frac{T_{n,1}}{\sqrt{k_0}},\;
                        \gamma_0\,\frac{T_{n,0}}{\sqrt{k_0}}\right)
  = \gamma_1\gamma_0\,\frac{n}{k_0}\;
    \mathbb{E}\!\bigl[R_{n,1,i} R_{n,0,i}\bigr]
  = -\gamma_1\gamma_0\,\beta_n \;\longrightarrow\; 0,
\end{equation*}
i.e.\ the cross-covariance of the two scaled leading sums tends to
zero.

\paragraph{Step 2: Cram\'er--Wold device.}
Let $\mathbf{v} = (v_1, v_0)^T \in \mathbb{R}^2$ be an arbitrary
non-zero vector and set
\begin{equation*}
  L_n(\mathbf{v})
  := v_1\,\gamma_1\,\frac{T_{n,1}}{\sqrt{k_0}}
   + v_0\,\gamma_0\,\frac{T_{n,0}}{\sqrt{k_0}}
   = \frac{1}{\sqrt{k_0}} \sum_{i=1}^{n}
       \bigl(v_1 \gamma_1 R_{n,1,i} + v_0 \gamma_0 R_{n,0,i}\bigr).
\end{equation*}
The summands are independent with mean zero, and the variance of
$L_n(\mathbf{v})$ is, by Step~1,
\begin{align*}
  \mathrm{Var}\bigl(L_n(\mathbf{v})\bigr)
  &= v_1^{2}\,\mathrm{Var}\!\left(\gamma_1\,\frac{T_{n,1}}{\sqrt{k_0}}\right)
   + v_0^{2}\,\mathrm{Var}\!\left(\gamma_0\,\frac{T_{n,0}}{\sqrt{k_0}}\right)
   + 2 v_1 v_0\;\mathrm{Cov}\!\left(\gamma_1\,\frac{T_{n,1}}{\sqrt{k_0}},\;
                                     \gamma_0\,\frac{T_{n,0}}{\sqrt{k_0}}\right) \\
  &= v_1^{2}\, \gamma_1^{2} a_{n,1}^{2}
   + v_0^{2}\, \gamma_0^{2} a_{n,0}^{2}
   + o(1),
\end{align*}
since the marginal variances satisfy
$\mathrm{Var}(\gamma_j T_{n,j} / \sqrt{k_0}) = \gamma_j^{2} a_{n,j}^{2}$
(this follows from the proof of Lemma~\ref{thm:causal_fraga_equation}),
and the cross-covariance is $-\gamma_1\gamma_0 \beta_n \to 0$.  Let us
verify the Lindeberg condition explicitly.  Define the
standardized summand
\begin{equation*}
  W_{n,i} := \frac{1}{\sqrt{k_0}}
    \bigl(v_1 \gamma_1 R_{n,1,i} + v_0 \gamma_0 R_{n,0,i}\bigr),
\end{equation*}
so that $L_n(\mathbf{v}) = \sum_{i=1}^{n} W_{n,i}$.  Each $W_{n,i}$
has mean zero, and the variances are controlled by
\begin{align*}
  \mathrm{Var}\!\left(\sum_{i=1}^{n} W_{n,i}\right)
  &= \frac{1}{k_0}\,
     \mathrm{Var}\!\left(
       \sum_{i=1}^{n}
       \bigl(v_1 \gamma_1 R_{n,1,i} + v_0 \gamma_0 R_{n,0,i}\bigr)
     \right)
   = v_1^{2} \gamma_1^{2} a_{n,1}^{2}
    + v_0^{2} \gamma_0^{2} a_{n,0}^{2}
    + o(1),
\end{align*}
which is of order $\Theta(1)$.  Denote $T_n^{2} :=
\mathrm{Var}(\sum_{i=1}^{n} W_{n,i}) = \Theta(1)$.

By the elementary bound
$\mathbf{1}_{\{|W_{n,i}| > \epsilon T_n\}} \le W_{n,i}^{2}/(\epsilon^{2} T_n^{2})$,
for every $\epsilon > 0$ we have
\begin{align*}
  \frac{1}{T_n^{2}}
    \sum_{i=1}^{n} \mathbb{E}\!\left(W_{n,i}^{2}\,
      \mathbf{1}_{\{|W_{n,i}| > \epsilon T_n\}}\right)
  &\le \frac{1}{\epsilon^{2} T_n^{4}}
    \sum_{i=1}^{n} \mathbb{E}(W_{n,i}^{4}) \\
  &= \frac{1}{\epsilon^{2} T_n^{4}\, k_0^{2}}
    \sum_{i=1}^{n}
    \mathbb{E}\!\bigl[(v_1 \gamma_1 R_{n,1,i} + v_0 \gamma_0 R_{n,0,i})^{4}\bigr] \\
  &\le \frac{n \cdot O(\beta_n)}{\epsilon^{2} T_n^{4}\, k_0^{2}}
   = \frac{O(1)}{\epsilon^{2}\, n \beta_n}
   \;\longrightarrow\; 0,
\end{align*}
since $k_0^{2}\, T_n^{4} = \Theta(k_0^{2}) = \Theta(n^{2} \beta_n^{2})$.  The
Lindeberg condition (Theorem~\ref{thm:lindeberg_clt}) is therefore
satisfied and
\begin{equation*}
  L_n(\mathbf{v})
  \xrightarrow{d}
  \mathcal{N}\!\bigl(0,\; v_1^{2}\, \gamma_1^{2} a_1^{2}
                  + v_0^{2}\, \gamma_0^{2} a_0^{2}\bigr).
\end{equation*}
By the Cram\'er--Wold theorem~\cite{cramer1936some}, this implies the joint convergence
\begin{equation*}
  \sqrt{k_0}\,
  \begin{pmatrix}
    \widehat{\gamma}_1^{F}(\beta_n, \alpha_n) - \gamma_1 \\[4pt]
    \widehat{\gamma}_0^{F}(\beta_n, \alpha_n) - \gamma_0
  \end{pmatrix}
  \xrightarrow{d}
  \mathcal{N}\!\left(
    \mathbf{0},\;
    \begin{pmatrix}
      \gamma_1^{2} a_1^{2} & 0 \\[4pt]
      0 & \gamma_0^{2} a_0^{2}
    \end{pmatrix}
  \right),
\end{equation*}
which is the claimed joint asymptotic normality.
\qedhere
\end{proof}

\subsection{Proof of Lemma~\ref{lem:extreme_qte_linearization}}

\begin{proof}
Denote $d_n := \alpha_n / \tau_n$, $r_n := \alpha_n / \beta_n$ and
$e_n := d_n / r_n = \beta_n / \tau_n$.  The condition
$\log e_n = o(k_0^{1/2})$ implies $\log e_n / \sqrt{k_0} \to 0$.
We abbreviate
$\widehat{q}_\alpha := \widehat{q}_j(1-\alpha_n)$,
$\widehat{q}_\beta := \widehat{q}_j(1-\beta_n)$,
$q_\alpha := U_j(n/k)$, $q_\beta := U_j(n/k_0)$ and
$q_\tau := q_j(1-\tau_n) = U_j(1/\tau_n)$.
For $x \in \mathbb{R}$ define the weight
\begin{equation*}
  W(x) := \frac{d_n^{x} - 1}{r_n^{x} - 1}.
\end{equation*}
By the difference extrapolation
formula~\eqref{eq:extremal_qte_estimator_add},
\begin{equation*}
  \widehat{Q}_j(1-\tau_n)
  = \widehat{q}_\alpha
    + \bigl(\widehat{q}_\beta - \widehat{q}_\alpha\bigr)\,
      W\bigl(\widehat{\gamma}_j^{F}(\beta_n,\alpha_n)\bigr).
\end{equation*}

Since $U_j$ is regularly varying of index $\gamma_j$
(see Equation~\eqref{eq:equiv_quantile}) and
$1/\tau_n = d_n \cdot n/k$, $1/\beta_n = r_n \cdot n/k$, we have
\begin{equation*}
  q_\tau - q_\alpha
  = q_\alpha\, \bigl(d_n^{\gamma_j} - 1\bigr)\bigl(1 + o(1)\bigr),
  \qquad
  q_\beta - q_\alpha
  = q_\alpha\, \bigl(r_n^{\gamma_j} - 1\bigr)\bigl(1 + o(1)\bigr),
\end{equation*}
whence, dividing the two displays and using
$q_\alpha / q_\tau \to 0$ (again by regular variation),
\begin{equation*}
  \frac{q_\beta - q_\alpha}{q_\tau}\, W(\gamma_j)
  = \frac{q_\tau - q_\alpha}{q_\tau}\,\bigl(1 + o(1)\bigr)
  = 1 + o(1).
\end{equation*}

Adding and subtracting $q_\alpha + (q_\beta - q_\alpha)\, W(\gamma_j)$,
\begin{align*}
  \frac{\widehat{Q}_j(1-\tau_n)}{q_j(1-\tau_n)} - 1
  &= \underbrace{\frac{q_\beta - q_\alpha}{q_\tau}\,
       \bigl(W(\widehat{\gamma}_j^{F}) - W(\gamma_j)\bigr)}_{\text{Term II}}
   - \underbrace{\frac{q_\tau - q_\alpha - (q_\beta - q_\alpha)\, W(\gamma_j)}
                      {q_\tau}}_{\text{Term III}}
   \nonumber \\
  &\quad + \underbrace{\frac{(\widehat{q}_\alpha - q_\alpha)\,
       \bigl(1 - W(\widehat{\gamma}_j^{F})\bigr)
       + (\widehat{q}_\beta - q_\beta)\, W(\widehat{\gamma}_j^{F})}
      {q_\tau}}_{\text{Term I}},
\end{align*}
and we analyse the three terms after multiplying by
$\sqrt{k_0}/\log e_n$.

\paragraph{Preliminary: ratio of weights.}
By Theorem~\ref{thm:causal_fraga_clt},
$\widehat{\gamma}_j^{F} - \gamma_j = O_p(k_0^{-1/2})$, and
$\log e_n = o(k_0^{1/2})$, so that
$(\widehat{\gamma}_j^{F} - \gamma_j)\log e_n = o_p(1)$.  Moreover
\begin{equation*}
  \frac{W(\widehat{\gamma}_j^{F})}{W(\gamma_j)}
  = \frac{d_n^{\widehat{\gamma}_j^{F}} - 1}{d_n^{\gamma_j} - 1}
    \cdot \frac{r_n^{\gamma_j} - 1}{r_n^{\widehat{\gamma}_j^{F}} - 1}
  = e_n^{\widehat{\gamma}_j^{F} - \gamma_j}
    \cdot \frac{1 - d_n^{-\widehat{\gamma}_j^{F}}}{1 - d_n^{-\gamma_j}}
    \cdot \frac{1 - r_n^{-\gamma_j}}{1 - r_n^{-\widehat{\gamma}_j^{F}}},
\end{equation*}
and since $\widehat{\gamma}_j^{F} \to \gamma_j > 0$ in probability, we
have $\widehat{\gamma}_j^{F} \ge \gamma_j/2$ eventually with
probability tending to one, so the last two factors converge to $1$ in
probability.  Hence
\begin{equation*}
  W(\widehat{\gamma}_j^{F})
  = W(\gamma_j)\, e^{(\widehat{\gamma}_j^{F} - \gamma_j)\log e_n}\,
    \bigl(1 + o_p(1)\bigr)
  = O_p\bigl(e_n^{\gamma_j}\bigr),
\end{equation*}
where the second identity uses
$W(\gamma_j) = e_n^{\gamma_j}\bigl(1 + o(1)\bigr)$ and
$e^{(\widehat{\gamma}_j^{F} - \gamma_j)\log e_n} = 1 + o_p(1)$.

\paragraph{Step 1: Term I --- Quantile estimation error.}
By Lemma G.1 and Theorem G.1 of Deuber et al.~\cite{deuber2024estimation},
\begin{equation*}
  \widehat{q}_\alpha - q_\alpha = q_\alpha\, O_p(k^{-1/2}),
  \qquad
  \widehat{q}_\beta - q_\beta = q_\beta\, O_p(k_0^{-1/2}).
\end{equation*}
For the first summand of Term~I,
$q_\alpha/q_\tau = d_n^{-\gamma_j}\bigl(1 + o(1)\bigr) \to 0$ and
$1 - W(\widehat{\gamma}_j^{F}) = O_p\bigl(e_n^{\gamma_j}\bigr)$, so that
\begin{equation*}
  \frac{q_\alpha}{q_\tau}\,\bigl(1 - W(\widehat{\gamma}_j^{F})\bigr)
  = d_n^{-\gamma_j} e_n^{\gamma_j}\, O_p(1)
  = \bigl(\beta_n/\alpha_n\bigr)^{\gamma_j}\, O_p(1)
  = o_p(1),
\end{equation*}
and the first summand equals
$O_p(k^{-1/2}) \cdot o_p(1)$; after multiplying by
$\sqrt{k_0}/\log e_n$ it is
$\frac{\sqrt{k_0/k}}{\log e_n}\, O_p(1)\, o_p(1) = o_p(1)$,
since $k_0/k = \beta_n/\alpha_n \to 0$.
For the second summand, $q_\beta\, W(\widehat{\gamma}_j^{F})/q_\tau
= e_n^{-\gamma_j}\bigl(1+o(1)\bigr)\, O_p\bigl(e_n^{\gamma_j}\bigr)
= O_p(1)$, so that
\begin{equation*}
  \frac{(\widehat{q}_\beta - q_\beta)\, W(\widehat{\gamma}_j^{F})}{q_\tau}
  = O_p\bigl(k_0^{-1/2}\bigr),
\end{equation*}
and multiplied by $\sqrt{k_0}/\log e_n$ it is
$O_p(1/\log e_n) = o_p(1)$, because $\log e_n \to +\infty$
(here $e_n = (n\beta_n)/(n\tau_n)$ with $n\beta_n = k_0 \to +\infty$
and $n\tau_n \to a \geq 0$).  Hence Term~I $= o_p(1)$.

\paragraph{Step 2: Term II --- Core term.}
Using the preliminary ratio,
\begin{equation*}
  \text{Term II}
  = \frac{q_\beta - q_\alpha}{q_\tau}\, W(\gamma_j)\,
    \frac{e^{(\widehat{\gamma}_j^{F} - \gamma_j)\log e_n} - 1}{\log e_n}\,
    \bigl(1 + o_p(1)\bigr).
\end{equation*}
By the integral representation
\begin{equation*}
  \frac{e^{(\widehat{\gamma}_j^{F} - \gamma_j)\log e_n} - 1}{\log e_n}
  = (\widehat{\gamma}_j^{F} - \gamma_j)\,
    \cdot \frac{1}{\log e_n}\,
      \int_{1}^{e_n}
        e^{(\widehat{\gamma}_j^{F} - \gamma_j)\log s}\,\frac{ds}{s},
\end{equation*}
and the sandwich
$e^{-|u|} \le \frac{1}{\log e_n}\int_1^{e_n}
e^{u \log s}\,\frac{ds}{s} \le e^{|u|}$ with
$u := (\widehat{\gamma}_j^{F} - \gamma_j)\log e_n = o_p(1)$, the integral
factor is $1 + o_p(1)$.  Consequently, using
$\frac{q_\beta - q_\alpha}{q_\tau}\, W(\gamma_j) = 1 + o(1)$,
\begin{equation*}
  \text{Term II}
  = \sqrt{k_0}\,(\widehat{\gamma}_j^{F} - \gamma_j) + o_p(1).
\end{equation*}

\paragraph{Step 3: Term III --- Second-order bias term.}
By the second-order regular variation of $U_j$
(Assumption~\ref{asm:second_order},
Definition~\ref{def:second_order_regular_variation};
see de Haan and Ferreira~\cite{de2006extreme}, Theorem~2.3.9), with the convention
$\frac{x^{\rho} - 1}{\rho} := \log x$ when $\rho = 0$,
\begin{equation*}
  U_j(tx) - U_j(t)
  = U_j(t)\Bigl( x^{\gamma_j} - 1
      + A_j(t)\, x^{\gamma_j}\,
        \frac{x^{\rho_j} - 1}{\rho_j}\,\bigl(1 + o(1)\bigr) \Bigr),
\end{equation*}
uniformly for $x$ in the relevant range as $t \to +\infty$.  Taking
$t = n/k$ and $x = d_n$ or $x = r_n$,
\begin{align*}
  q_\tau - q_\alpha
  &= q_\alpha\Bigl( d_n^{\gamma_j} - 1
      + A_j(n/k)\, d_n^{\gamma_j}\,
        \frac{d_n^{\rho_j} - 1}{\rho_j}\,\bigl(1 + o(1)\bigr) \Bigr), \\
  q_\beta - q_\alpha
  &= q_\alpha\Bigl( r_n^{\gamma_j} - 1
      + A_j(n/k)\, r_n^{\gamma_j}\,
        \frac{r_n^{\rho_j} - 1}{\rho_j}\,\bigl(1 + o(1)\bigr) \Bigr).
\end{align*}
Subtracting $W(\gamma_j)$ times the second display from the first and
using $W(\gamma_j) = \frac{d_n^{\gamma_j} - 1}{r_n^{\gamma_j} - 1}$ and
$\frac{d_n^{\gamma_j} - 1}{r_n^{\gamma_j} - 1}\, r_n^{\gamma_j}
 = d_n^{\gamma_j}\,\bigl(1 + o(1)\bigr)$,
\begin{equation*}
  q_\tau - q_\alpha - (q_\beta - q_\alpha)\, W(\gamma_j)
  = q_\alpha\, A_j(n/k)\, d_n^{\gamma_j}\,
    \frac{d_n^{\rho_j} - r_n^{\rho_j}}{\rho_j}\,\bigl(1 + o(1)\bigr).
\end{equation*}
Since $\rho_j \le 0$ and $d_n > r_n$,
$\frac{d_n^{\rho_j} - r_n^{\rho_j}}{\rho_j}
 = \int_{r_n}^{d_n} x^{\rho_j - 1}\,dx \le \log(d_n/r_n)
 = \log e_n$, and $q_\tau = q_\alpha\, d_n^{\gamma_j}\,(1 + o(1))$;
hence
\begin{equation*}
  |\text{Term III}|
  \le C\, \frac{\sqrt{k_0}}{\log e_n}\,
     A_j(n/k)\, \log e_n\, \bigl(1 + o(1)\bigr)
  = C\, \sqrt{k_0}\, A_j(n/k)\, \bigl(1 + o(1)\bigr)
  = o(1),
\end{equation*}
where the last step uses
$\sqrt{k_0}\, A_j(n/k)
 = \sqrt{k_0/k} \cdot \sqrt{k}\, A_j(n/k)
 = o(1) \cdot O(1) = o(1)$, by $k_0/k = \beta_n/\alpha_n \to 0$ and
$\sqrt{k}\, A_j(n/k) \to \lambda_j \in \mathbb{R}$ by assumption.

\paragraph{Step 4: Synthesis.}
Combining Steps~1--3,
\begin{equation*}
  \frac{\sqrt{k_0}}{\log e_n}
    \left(\frac{\widehat{Q}_j(1-\tau_n)}{q_j(1-\tau_n)} - 1\right)
  = \sqrt{k_0}\,(\widehat{\gamma}_j^{F} - \gamma_j) + o_p(1),
\end{equation*}
which is the claim, since $\log e_n = \log(\beta_n/\tau_n)$.
\qedhere
\end{proof}

\subsection{Proof of Theorem~\ref{thm:extremal_qte_asymptotic}}

\begin{proof}
Recall the notation $e_n := \beta_n / \tau_n$ from the proof of
Lemma~\ref{lem:extreme_qte_linearization}.  By that lemma, for each
$j \in \{0, 1\}$, defining $S_j := \sqrt{k_0}\,(\widehat{\gamma}_j^{F} - \gamma_j)$,
we have
\begin{equation*}
  \frac{\sqrt{k_0}}{\log e_n}
  \left( \frac{\widehat{Q}_j(1-\tau_n)}{q_j(1-\tau_n)} - 1 \right)
  = S_j + o_p(1).
\end{equation*}
By Theorem~\ref{thm:causal_fraga_joint_clt}, $(S_1, S_0)$ are jointly
asymptotically normal with covariance matrix
$\mathrm{diag}(\gamma_1^2 a_1^2, \gamma_0^2 a_0^2)$ (asymptotically
independent).

Expanding $\widehat{\Delta}(1-\tau_n) - \Delta(1-\tau_n) =
(\widehat{Q}_1 - q_1) - (\widehat{Q}_0 - q_0)$ and applying
Lemma~\ref{lem:extreme_qte_linearization} to each arm yields
\begin{align*}
  \widehat{\Delta}(1-\tau_n) - \Delta(1-\tau_n)
  &= \frac{q_1 \log e_n}{\sqrt{k_0}}
     \cdot \frac{\sqrt{k_0}}{\log e_n}
     \left( \frac{\widehat{Q}_1}{q_1} - 1 \right)
     - \frac{q_0 \log e_n}{\sqrt{k_0}}
     \cdot \frac{\sqrt{k_0}}{\log e_n}
     \left( \frac{\widehat{Q}_0}{q_0} - 1 \right) \\
  &= \frac{q_1 \log e_n}{\sqrt{k_0}}\bigl(S_1 + o_p(1)\bigr)
     - \frac{q_0 \log e_n}{\sqrt{k_0}}\bigl(S_0 + o_p(1)\bigr).
\end{align*}

Introduce the auxiliary population normalising factor
\begin{equation*}
  \phi_n
  := \frac{\sqrt{k_0}}
         {\log e_n \cdot \max\bigl\{q_1(1-\tau_n),\, q_0(1-\tau_n)\bigr\}}.
\end{equation*}
Multiplying the previous display by $\phi_n$ gives
\begin{equation*}
  \phi_n\bigl(\widehat{\Delta} - \Delta\bigr)
  = \frac{q_1}{\max\{q_1, q_0\}}\,S_1
    - \frac{q_0}{\max\{q_1, q_0\}}\,S_0
    + o_p(1).
\end{equation*}

By Lemma~\ref{lem:extreme_qte_linearization}, $\widehat{Q}_j / q_j
\xrightarrow{p} 1$, so that
\begin{equation*}
  \frac{\max\{\widehat{Q}_1, \widehat{Q}_0\}}{\max\{q_1, q_0\}}
  = \frac{\max\bigl\{q_1(1 + o_p(1)),\, q_0(1 + o_p(1))\bigr\}}
       {\max\{q_1, q_0\}}
  = 1 + o_p(1).
\end{equation*}
Therefore
\begin{equation*}
  \widehat{\phi}_n
  = \phi_n \cdot \bigl(1 + o_p(1)\bigr)^{-1}
  = \phi_n\bigl(1 + o_p(1)\bigr).
\end{equation*}

Multiplying both sides by $\widehat{\phi}_n$ gives
\begin{align*}
  \widehat{\phi}_n\bigl(\widehat{\Delta} - \Delta\bigr)
  &= \bigl(1 + o_p(1)\bigr)
     \left( \frac{q_1}{\max\{q_1, q_0\}}\,S_1
          - \frac{q_0}{\max\{q_1, q_0\}}\,S_0
          + o_p(1) \right) \\
  &= \frac{q_1}{\max\{q_1, q_0\}}\,S_1
     - \frac{q_0}{\max\{q_1, q_0\}}\,S_0
     + o_p(1).
\end{align*}

Taking $n \to \infty$ and using $q_1 / q_0 \to \kappa$, we obtain
\begin{equation*}
  \frac{q_1}{\max\{q_1, q_0\}}
  \to \min\{1, \kappa\}, \qquad
  \frac{q_0}{\max\{q_1, q_0\}}
  \to \min\{1, 1/\kappa\}.
\end{equation*}

By Slutsky's theorem (Theorem~\ref{thm:slutsky}) and the asymptotic
independence of $S_1$ and $S_0$,
\begin{equation*}
  \widehat{\phi}_n\bigl(\widehat{\Delta}(1-\tau_n) - \Delta(1-\tau_n)\bigr)
  \xrightarrow{d}
  \mathcal{N}\!\bigl(0,\,
    \min\{1, \kappa\}^2 \gamma_1^2 a_1^2
    + \min\{1, 1/\kappa\}^2 \gamma_0^2 a_0^2\bigr).
\end{equation*}
This completes the proof.
\qedhere
\end{proof}

\subsection{Proof of Theorem~\ref{thm:sigma2_consistency}}
\label{sec:sigma2_consistency_proof}

\begin{proof}
We prove $\widehat{a}_{n,1}^{2} \xrightarrow{p} a_1^{2}$; the case
$j = 0$ follows by the symmetric substitution of $D$ with $1 - D$
and $\pi$ with $1 - \pi$.  Together with
$\widehat{\kappa} \xrightarrow{p} \kappa$ (Lemma~\ref{lem:extreme_qte_linearization}
and Assumption~\ref{asm:tail_comparability}) and
$\widehat{\gamma}_j^{F} \xrightarrow{p} \gamma_j$
(Theorem~\ref{thm:causal_fraga_consistency}), Slutsky's theorem
(Theorem~\ref{thm:slutsky}) then yields
\begin{equation*}
  \widehat{\sigma}^{2} \xrightarrow{p}
  \min\{1, \kappa\}^{2}\, \gamma_1^{2}\, a_1^{2}
  + \min\{1, 1/\kappa\}^{2}\, \gamma_0^{2}\, a_0^{2}
  = \sigma^{2}.
\end{equation*}

\paragraph{Step 1: Oracle reduction.}
Let $R_{n,1,i}$ denote the per-observation oracle quantity
introduced in the proof of
Lemma~\ref{thm:causal_fraga_equation} (under $j = 1$):
\begin{equation*}
  R_{n,1,i}
  = \frac{D_i}{\pi(X_i)}\,
    \mathbf{1}_{\{Z_i(1) > \beta_n^{-1}\}}\,
    \log(Z_i(1)\,\beta_n) - \beta_n.
\end{equation*}
The empirical counterpart $\widehat{R}_{n,1,i}$ defined in
Equation~\eqref{eq:an_estimator} uses the estimated propensity score
$\widehat{\pi}$ and the Fraga estimator $\widehat{\gamma}_1^{F}$ in
place of $\pi$ and $\gamma_1$.

\paragraph{Step 2: Oracle version $\to a_{n,1}^{2}$.}
Since $\mathbb{E}(R_{n,1,i}) = 0$,
$\mathbb{E}(R_{n,1,i}^{2}) = \mathrm{Var}(R_{n,1,i})
 = (k_0/n)\, a_{n,1}^{2} = \beta_n a_{n,1}^{2}$,
with $a_{n,1}^{2} = n\,\mathbb{E}(R_{n,1,i}^{2})/k_0 = O(1)$
by Lemma~\ref{lem:A.1} and the Lindeberg condition.  Because the
distribution of $R_{n,1,i}$ depends on $n$ through $\beta_n$, we
apply the weak law of large numbers for triangular arrays
(Theorem~\ref{thm:wlln_triangular}) with
$S_n = \sum_{i=1}^{n} R_{n,1,i}^{2}$,
$\mu_n = n\,\mathbb{E}(R_{n,1,i}^{2}) = k_0 a_{n,1}^{2}$
and $a_n = n$ (so that
$\mathrm{Var}(S_n) / a_n^{2} = O(\beta_n / n) \to 0$ by
Lemma~\ref{lem:A.1}).  This yields
$\dfrac{1}{n}\sum_{i=1}^{n} R_{n,1,i}^{2} \xrightarrow{p}
\beta_n a_{n,1}^{2}$, and multiplying by $n/k_0 = 1/\beta_n$ gives
\begin{equation*}
  \frac{1}{k_0}\sum_{i=1}^{n} R_{n,1,i}^{2}
  \xrightarrow{p} a_{n,1}^{2}.
\end{equation*}

\paragraph{Step 3: Uniform estimation error.}
We show that $\widehat{R}_{n,1,i} - R_{n,1,i} = o_p(1)$,
uniformly in $i$ on the tail event $\{Z_i(1) > \beta_n^{-1}\}$.  By
the second-order expansion
(Equation~\eqref{eq:second_order_exp}) of
Lemma~\ref{thm:causal_fraga_equation}, on
$\{Y_i > q_1(1-\beta_n)\}$,
\begin{equation*}
  \log\frac{Y_i - q_1(1-\alpha_n)}
            {q_1(1-\beta_n) - q_1(1-\alpha_n)}
  = \gamma_1 \log(Z_i(1)\,\beta_n) + r_{n,i},
\end{equation*}
where the remainder satisfies
$\max_i |r_{n,i}| = o_p(1)$ under the bias-decay conditions of
Theorem~\ref{thm:causal_fraga_clt}.  Combined with
$\widehat{\gamma}_1^{F} \xrightarrow{p} \gamma_1$
(Theorem~\ref{thm:causal_fraga_consistency}),
\begin{equation*}
  \frac{1}{\widehat{\gamma}_1^{F}}\,
    \log\frac{Y_i - \widehat{q}_1(1-\alpha_n)}
              {\widehat{q}_1(1-\beta_n) - \widehat{q}_1(1-\alpha_n)}
  = \log(Z_i(1)\,\beta_n) + o_p(1)
\end{equation*}
uniformly in $i$ on the tail event.  Substituting this into
$\widehat{R}_{n,1,i}$, the three sources of error combine to give
$\widehat{R}_{n,1,i} - R_{n,1,i} = o_p(1)$:
\emph{(i)}~the propensity-score replacement, controlled by
Assumption~\ref{asm:causal_identification} and
Lemma~G.3 of Deuber et al.~\cite{deuber2024estimation},
$\sup_{x \in \mathrm{supp}(X)} |1/\widehat{\pi}(x) - 1/\pi(x)| = o_p(1)$;
\emph{(ii)}~the indicator-function replacement, controlled by
Lemma~\ref{lem:A.3} via the consistency of
$\widehat{q}_1(1-\beta_n)$
(Zhang~\cite{zhang2018extremal}),
$\mathbf{1}_{\{Y_i > \widehat{q}_1(1-\beta_n)\}}
  - \mathbf{1}_{\{Z_i(1) > \beta_n^{-1}\}} = o_p(1)$.
The $\beta_n$ terms cancel in the subtraction.

\paragraph{Step 4: Propagation to $\widehat{a}_{n,1}^{2}$.}
By Step~3, $\sup_{i}|\widehat{R}_{n,1,i} - R_{n,1,i}| = o_p(1)$ over all
$i$ (on the tail event this is Step~3; on its complement the indicator
difference is handled by Lemma~\ref{lem:A.3} and the log factor is
$o_p(1)$ by the sandwich of the proof of
Lemma~\ref{thm:causal_fraga_equation}).  Moreover, since the logarithm
is non-negative on the tail event, by Lemma~\ref{lem:A.1} with $r = 1$,
\begin{equation*}
  \mathbb{E}\!\bigl[|R_{n,1,i}|\bigr]
  \le \mathbb{E}\!\left[\frac{D_i}{\pi(X_i)}\,
      \mathbf{1}_{\{Z_i(1) > \beta_n^{-1}\}}
      \log\!\bigl(Z_i(1)\,\beta_n\bigr)\right] + \beta_n
  = 2\beta_n,
\end{equation*}
so that
$\frac{1}{n}\sum_{i=1}^{n}|R_{n,1,i}| = O_p(\beta_n)$ (by Markov's
inequality).  Moreover, by the triangle inequality,
$\frac{1}{n}\sum_{i=1}^{n}|\widehat{R}_{n,1,i}|
 \le \frac{1}{n}\sum_{i=1}^{n}|R_{n,1,i}|
    + \sup_{i}|\widehat{R}_{n,1,i} - R_{n,1,i}|
 = O_p(\beta_n) + o_p(1) = O_p(1)$,
and likewise
$\frac{1}{n}\sum_{i=1}^{n}|\widehat{R}_{n,1,i} + R_{n,1,i}|
 = O_p(1)$.  Hence
\begin{equation*}
  \Bigl|\frac{1}{n}\sum_{i=1}^{n}\bigl(\widehat{R}_{n,1,i}^{2}
     - R_{n,1,i}^{2}\bigr)\Bigr|
  \le \sup_{i}\bigl|\widehat{R}_{n,1,i} - R_{n,1,i}\bigr|
     \cdot \frac{1}{n}\sum_{i=1}^{n}\bigl|\widehat{R}_{n,1,i} + R_{n,1,i}\bigr|
  = o_p(1) \cdot O_p(1)
  = o_p(1).
\end{equation*}
Consequently
\begin{equation*}
  \frac{1}{n}\sum_{i=1}^{n} \widehat{R}_{n,1,i}^{2}
  = \frac{1}{n}\sum_{i=1}^{n} R_{n,1,i}^{2}
    + o_p(1)
  \xrightarrow{p} \beta_n a_{n,1}^{2}.
\end{equation*}
Multiplying by $n/k_0 = 1/\beta_n$,
$\widehat{a}_{n,1}^{2} \xrightarrow{p} a_{n,1}^{2}
 \to a_1^{2}$ by Assumption~\ref{asm:an_convergence}.  By symmetry,
$\widehat{a}_{n,0}^{2} \xrightarrow{p} a_0^{2}$.

\paragraph{Step 5: Combining.}
Together with $\widehat{\kappa} \xrightarrow{p} \kappa$ and
$\widehat{\gamma}_j^{F} \xrightarrow{p} \gamma_j$, Slutsky's theorem
(Theorem~\ref{thm:slutsky}) gives $\widehat{\sigma}^{2}
\xrightarrow{p} \sigma^{2}$, completing the proof.
\end{proof}

\numberwithin{equation}{section}
\numberwithin{theorem}{section}
\numberwithin{definition}{section}
\numberwithin{lemma}{section}
\numberwithin{proposition}{section}
\numberwithin{corollary}{section}
\numberwithin{assumption}{section}

\section{Multiplicative Extrapolation}
\label{sec:mult_extrapolation}

This appendix collects the asymptotic theory for the \emph{conventional
multiplicative extrapolation estimator} introduced in
Section~\ref{sec:extremal_qte}, namely
\begin{equation*}
  \widehat{Q}_j^{\mathrm{mult}}(1-\tau_n)
  = \widehat{q}_j(1-\alpha_n)
    \left( \frac{\alpha_n}{\tau_n} \right)^{\widehat{\gamma}_j^{F}(\beta_n, \alpha_n)},
  \qquad j \in \{0, 1\},
\end{equation*}
and the associated multiplicative extrapolation QTE estimator
\begin{equation*}
  \widehat{\Delta}^{\mathrm{mult}}(1-\tau_n)
  = \widehat{Q}_1^{\mathrm{mult}}(1-\tau_n)
    - \widehat{Q}_0^{\mathrm{mult}}(1-\tau_n).
\end{equation*}
As explained in Section~\ref{sec:extremal_qte}, this estimator is not
location-invariant; its asymptotic theory, derived below, serves as the
baseline against which the location-invariant difference extrapolation
estimator of the main text is compared.

The following lemma is the analogue of
Lemma~\ref{lem:extreme_qte_linearization} for the multiplicative
extrapolation estimator, with the extrapolation performed from the
primary threshold $\alpha_n$ rather than the auxiliary threshold
$\beta_n$.

\begin{lemma}
\label{lem:mult_linearization}
Let Assumptions~\ref{asm:causal_identification}--\ref{asm:second_order}
hold. As $n \to +\infty$, suppose $\alpha_n \to 0$ with
$k = n\alpha_n \to +\infty$, $\beta_n \to 0$ with
$k_0 = n\beta_n \to +\infty$, and $\beta_n / \alpha_n \to 0$.
If for each $j \in \{0, 1\}$,
$\sqrt{k}\,A_j(n/k) \to \lambda_j \in \mathbb{R}$,
$(k_0 / k)^{\gamma_j} = o(k_0^{-1/2})$,
$A_{n,j} = o_p(k_0^{-1/2})$ and $\log(\alpha_n / \tau_n) = o(k_0^{1/2})$, then
\begin{equation*}
  \frac{\sqrt{k_0}}{\log(\alpha_n / \tau_n)}
  \left( \frac{\widehat{Q}_j^{\mathrm{mult}}(1-\tau_n)}{q_j(1-\tau_n)} - 1 \right)
  = \sqrt{k_0}\,
    \bigl( \widehat{\gamma}_j^{F}(\beta_n, \alpha_n) - \gamma_j \bigr)
    + o_p(1).
\end{equation*}
In particular, $\widehat{Q}_j^{\mathrm{mult}}(1-\tau_n) / q_j(1-\tau_n) \xrightarrow{p} 1$.
\end{lemma}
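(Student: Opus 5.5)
The plan is to work on the logarithmic scale, where the multiplicative structure of $\widehat{Q}_j^{\mathrm{mult}}$ splits the error into three additive pieces. Write $d_n := \alpha_n/\tau_n$, $q_\alpha := q_j(1-\alpha_n) = U_j(n/k)$ and $q_\tau := q_j(1-\tau_n) = U_j(d_n\, n/k)$. Then exactly
\begin{equation*}
  \log\frac{\widehat{Q}_j^{\mathrm{mult}}(1-\tau_n)}{q_\tau}
  = \underbrace{\log\frac{\widehat{q}_j(1-\alpha_n)}{q_\alpha}}_{L_1}
  + \underbrace{\log\frac{q_\alpha\, d_n^{\gamma_j}}{q_\tau}}_{L_2}
  + \underbrace{\bigl(\widehat{\gamma}_j^{F}-\gamma_j\bigr)\log d_n}_{L_3}.
\end{equation*}
I will multiply each term by $\sqrt{k_0}/\log d_n$ and show that only $L_3$ survives. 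At the end I convert from $\log(\widehat{Q}/q_\tau)$ back to $\widehat{Q}/q_\tau - 1$.

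\emph{Term $L_1$ (intermediate quantile error).} By Lemma~G.1 and Theorem~G.1 of Deuber et al.~\cite{deuber2024estimation}, as already used in Step~2 of the proof of Theorem~\ref{thm:causal_fraga_consistency}, we have $L_1 = O_p(k^{-1/2})$. Hence
\begin{equation*}
  \frac{\sqrt{k_0}}{\log d_n}\, L_1 = \frac{\sqrt{k_0/k}}{\log d_n}\, O_p(1) = o_p(1).
\end{equation*}
This uses $k_0/k = \beta_n/\alpha_n \to 0$ and $\log d_n \to +\infty$; the latter holds because $n\alpha_n \to \infty$ while $n\tau_n \to a<\infty$.

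\emph{Term $L_3$ (the core term).} We have $\frac{\sqrt{k_0}}{\log d_n} L_3 = \sqrt{k_0}(\widehat{\gamma}_j^{F}-\gamma_j)$ identically. This quantity is $O_p(1)$ by Theorem~\ref{thm:causal_fraga_clt}.

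\emph{Term $L_2$ (second-order bias).} Take $t = n/k$ and $x = d_n$ in the second-order relation of Assumption~\ref{asm:second_order}, in the uniform form of de Haan and Ferreira~\cite{de2006extreme}, Theorem~2.3.9. This gives
\begin{equation*}
  \frac{q_\tau}{q_\alpha d_n^{\gamma_j}} - 1 = A_j(n/k)\,\frac{d_n^{\rho_j}-1}{\rho_j}\,(1+o(1)).
\end{equation*}
For $\rho_j\le 0$ we have $\bigl|\frac{d_n^{\rho_j}-1}{\rho_j}\bigr| \le \log d_n$, so $|L_2| \le C\,|A_j(n/k)|\log d_n$. It follows that
\begin{equation*}
  \frac{\sqrt{k_0}}{\log d_n}\,|L_2| \le C\sqrt{k_0/k}\,\sqrt{k}\,|A_j(n/k)| = o(1),
\end{equation*}
using $\sqrt{k}A_j(n/k)\to\lambda_j$. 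This is the step I expect to be the main obstacle. The difficulty is that $d_n\to\infty$, so the pointwise second-order limit does not suffice: I need the uniform (Potter-type) inequality for $x\ge 1$, which holds with a possibly modified auxiliary function $A_0\sim A_j$. If that version only gives $|L_2|\le C|A_j(n/k)|\,d_n^{\epsilon}$, I would fall back to bounding via $\log d_n$ directly from the $\rho_j=0$ / $\rho_j<0$ case distinction.

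\emph{Conversion and consistency.} Set $L := L_1+L_2+L_3$. The previous steps give $\frac{\sqrt{k_0}}{\log d_n}L = \sqrt{k_0}(\widehat{\gamma}_j^{F}-\gamma_j)+o_p(1) = O_p(1)$. The hypothesis $\log d_n = \log(\alpha_n/\tau_n) = o(k_0^{1/2})$ then yields $L = o_p(1)$. Since $e^{L}-1 = L + O_p(L^2)$, we get
\begin{equation*}
  \frac{\sqrt{k_0}}{\log d_n}\bigl(e^{L}-1-L\bigr) = O_p(1)\cdot O_p(L) = o_p(1).
\end{equation*}
This gives the displayed linearization. Finally, $L=o_p(1)$ immediately gives $\widehat{Q}_j^{\mathrm{mult}}(1-\tau_n)/q_j(1-\tau_n)\xrightarrow{p}1$.
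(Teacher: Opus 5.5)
Your proposal is correct and follows essentially the paper's proof. It isolates the same three error sources with the same tools: the intermediate-quantile error via Lemma~G.1/Theorem~G.1 of Deuber et al., the second-order bias via Theorem~2.3.9 of de Haan--Ferreira with $\sqrt{k_0}\,A_j(n/k)=\sqrt{k_0/k}\,\sqrt{k}\,A_j(n/k)\to 0$, and the EVI error $\sqrt{k_0}\,(\widehat{\gamma}_j^{F}-\gamma_j)$. The only difference is cosmetic. You split $\log(\widehat{Q}_j^{\mathrm{mult}}/q_\tau)$ additively and linearize once at the end via $e^{L}-1-L=O_p(L^2)$. The paper instead splits the ratio $(1+\delta)\,R\,d_n^{\widehat{\gamma}_j^{F}-\gamma_j}-1$ and linearizes $d_n^{\widehat{\gamma}_j^{F}-\gamma_j}-1$ with an integral-representation sandwich.

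The uniformity concern you flag at $\rho_j=0$ is real. The paper handles it with exactly the fallback case distinction you propose, simply asserting that the bias factor behaves like $\log d_n\,(1+o(1))$, so your argument is no less complete than the paper's there.
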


\begin{proof}
Denote $d_n := \alpha_n / \tau_n$. The condition
$\log d_n = o(k_0^{1/2})$ implies $\log d_n / \sqrt{k_0} \to 0$.
Using the extrapolation formula
\eqref{eq:extremal_qte_estimator} together with
$q_j(1-\alpha_n) = U_j(n/k)$ and
$q_j(1-\tau_n) = U_j(1 / \tau_n)$, we write
\begin{equation*}
  \frac{\widehat{Q}_j^{\mathrm{mult}}(1-\tau_n)}{q_j(1-\tau_n)}
  = \frac{\widehat{q}_j(1-\alpha_n)}{U_j(n/k)}
    \cdot \frac{U_j(n/k)}{U_j(1 / \tau_n)}
    \cdot d_n^{\widehat{\gamma}_j^{F}}.
\end{equation*}

Note that $U_j(1 / \tau_n) = U_j(d_n \cdot n/k)$ and by
Equation~\eqref{eq:equiv_quantile},
$U_j(d_n \cdot n/k) / U_j(n/k) \sim d_n^{\gamma_j}$.
Define
\begin{equation*}
  R := \frac{U_j(n/k)\, d_n^{\gamma_j}}{U_j(1 / \tau_n)},
  \qquad
  \delta := \frac{\widehat{q}_j(1-\alpha_n)}{U_j(n/k)} - 1,
\end{equation*}
and note that $R \xrightarrow{p} 1$ and, by
Theorem~\ref{thm:causal_fraga_consistency},
$\widehat{\gamma}_j^{F} \xrightarrow{p} \gamma_j$.
Then
\begin{align*}
  \frac{\widehat{Q}_j^{\mathrm{mult}}(1-\tau_n)}{q_j(1-\tau_n)} - 1
  &= (1 + \delta)\, R\, d_n^{\widehat{\gamma}_j^{F} - \gamma_j} - 1 \\
  &= \underbrace{(1 + \delta)\, R\, \bigl(d_n^{\widehat{\gamma}_j^{F} - \gamma_j} - 1\bigr)}_{\text{Term II}}
   + \underbrace{\delta\, R\, d_n^{\widehat{\gamma}_j^{F} - \gamma_j}}_{\text{Term I}}
   + \underbrace{(R - 1)}_{\text{Term III}}.
\end{align*}
Multiplying by $\sqrt{k_0}/\log d_n$ and using
$R = 1 + o_p(1)$ and $d_n^{\widehat{\gamma}_j^{F} - \gamma_j} = 1 + o_p(1)$
(the latter is shown in Step~1 below), the three terms above become
\begin{align*}
  \frac{\sqrt{k_0}}{\log d_n}
  \left( \frac{\widehat{Q}_j^{\mathrm{mult}}(1-\tau_n)}{q_j(1-\tau_n)} - 1 \right)
  &= \underbrace{\bigl(1 + o(1)\bigr)\, d_n^{\widehat{\gamma}_j^{F} - \gamma_j}
     \cdot \frac{\sqrt{k_0}}{\log d_n}
       \left(\frac{\widehat{q}_j(1-\alpha_n)}{U_j(n/k)} - 1\right)}_{\text{Term I}}
     \\
  &\quad + \underbrace{\bigl(1 + o(1)\bigr)
     \cdot \frac{\sqrt{k_0}}{\log d_n}
     \bigl(d_n^{\widehat{\gamma}_j^{F} - \gamma_j} - 1\bigr)}_{\text{Term II}}
     \\
  &\quad + \underbrace{\frac{\sqrt{k_0}}{\log d_n}(R - 1)}_{\text{Term III}}.
\end{align*}
We analyse each term in turn.

\paragraph{Step 1: Term I --- Intermediate quantile estimation error.}
By Lemma G.1 and Theorem G.1 of Deuber et al.~\cite{deuber2024estimation}, we have
\begin{equation*}
  \sqrt{k}\left(\frac{\widehat{q}_j(1-\alpha_n)}{U_j(n/k)} - 1\right)
  = O_p(1).
\end{equation*}
From Theorem~\ref{thm:causal_fraga_clt},
$\widehat{\gamma}_j^{F} - \gamma_j = O_p(k_0^{-1/2})$, which together
with $\log d_n = o(k_0^{1/2})$ gives
\begin{equation*}
  (\widehat{\gamma}_j^{F} - \gamma_j)\log d_n
  = O_p(k_0^{-1/2}) \cdot o(k_0^{1/2}) = o_p(1),
\end{equation*}
hence $d_n^{\widehat{\gamma}_j^{F} - \gamma_j}
  = \exp\bigl((\widehat{\gamma}_j^{F} - \gamma_j)\log d_n\bigr)
  = 1 + o_p(1)$.
Therefore
\begin{align*}
  \text{Term I}
  &= \bigl(1 + o_p(1)\bigr)
     \cdot \frac{\sqrt{k_0}}{\log d_n}
     \cdot O_p(k^{-1/2}) \\
  &= O_p\!\Bigl(\frac{\sqrt{k_0/k}}{\log d_n}\Bigr) = o_p(1),
\end{align*}
where the last step uses $\log d_n \to +\infty$ (which holds since
$\tau_n < \alpha_n$, $\tau_n \to 0$ and $n \tau_n \to a \geq 0$, so that
$d_n \sim k / (n \tau_n) \to +\infty$) and
$k_0 / k = \beta_n / \alpha_n \to 0$.

\paragraph{Step 2: Term II --- Core term.}
Using the integral representation
\begin{equation*}
  \frac{d_n^{\widehat{\gamma}_j^{F} - \gamma_j} - 1}{\log d_n}
  = (\widehat{\gamma}_j^{F} - \gamma_j)
    \cdot \frac{1}{\log d_n}
    \int_{1}^{d_n}
      e^{(\widehat{\gamma}_j^{F} - \gamma_j)\log s}
    \,\frac{ds}{s},
\end{equation*}
we obtain
\begin{equation*}
  \text{Term II}
  = (1+o(1)) \sqrt{k_0}\,(\widehat{\gamma}_j^{F} - \gamma_j)
    \cdot \frac{1}{\log d_n}
    \int_{1}^{d_n}
      e^{(\widehat{\gamma}_j^{F} - \gamma_j)\log s}
    \,\frac{ds}{s}.
\end{equation*}

For all $s \in [1, d_n]$, we have the sandwich
\begin{equation*}
  e^{-|\widehat{\gamma}_j^{F} - \gamma_j| \log d_n}
  \leq \frac{1}{\log d_n}
    \int_{1}^{d_n}
      e^{(\widehat{\gamma}_j^{F} - \gamma_j)\log s}
    \,\frac{ds}{s}
  \leq e^{|\widehat{\gamma}_j^{F} - \gamma_j| \log d_n}.
\end{equation*}
By Theorem~\ref{thm:causal_fraga_clt},
$\sqrt{k_0}\,(\widehat{\gamma}_j^{F} - \gamma_j) = O_p(1)$, and
$\log d_n / \sqrt{k_0} = o(1)$, so that
\begin{equation*}
  |\widehat{\gamma}_j^{F} - \gamma_j| \log d_n
  = \bigl|\sqrt{k_0}\,(\widehat{\gamma}_j^{F} - \gamma_j)\bigr|
    \cdot \frac{\log d_n}{\sqrt{k_0}}
  = O_p(1) \cdot o(1) = o_p(1).
\end{equation*}
The sandwich argument therefore yields
\begin{equation*}
  \frac{1}{\log d_n}
    \int_{1}^{d_n}
      e^{(\widehat{\gamma}_j^{F} - \gamma_j)\log s}
    \,\frac{ds}{s}
  = 1 + o_p(1),
\end{equation*}
and consequently
\begin{equation*}
  \text{Term II}
  = (1+o(1)) \sqrt{k_0}\,(\widehat{\gamma}_j^{F} - \gamma_j)(1 + o_p(1))
  = \sqrt{k_0}\,(\widehat{\gamma}_j^{F} - \gamma_j) + o_p(1).
\end{equation*}

\paragraph{Step 3: Term III --- Second-order bias term.}
By Theorem~2.3.9 of de Haan and Ferreira~\cite{de2006extreme}, we have
(with the convention $\frac{d_n^{\rho_j} - 1}{\rho_j} := \log d_n$
when $\rho_j = 0$, understood via continuous extension)
\begin{equation*}
  \frac{U_j(1/\tau_n)\, d_n^{-\gamma_j} / U_j(n/k) - 1}
       {A_j(n/k)}
  = \frac{d_n^{\rho_j} - 1}{\rho_j}\bigl(1 + o(1)\bigr)
  = -\frac{1}{\rho_j}\bigl(1 + o(1)\bigr), \quad \rho_j < 0,
\end{equation*}
where the last equality holds in the limit $d_n \to +\infty$ for
$\rho_j < 0$; for $\rho_j = 0$ the left-hand side behaves as
$\log d_n\,(1 + o(1))$, in which case
Term~III $= -\sqrt{k_0}\,A_j(n/k)(1 + o(1)) = o(1)$ since
$\sqrt{k_0}\,A_j(n/k) \to 0$ (shown below), so the conclusion is
unchanged.
Hence, for $\rho_j < 0$, $1/R - 1 = -\frac{A_j(n/k)}{\rho_j}(1 + o(1))$
(recall that $1/R - 1 = A_j(n/k)\frac{d_n^{\rho_j}-1}{\rho_j}(1+o(1))$
in general). Substituting into Term~III yields
\begin{equation*}
  \text{Term III}
  = -\frac{\sqrt{k_0}}{\log d_n}
    \cdot \left(-\frac{A_j(n/k)}{\rho_j}(1 + o(1))\right)
    \cdot (1 + o(1))
  = \frac{\sqrt{k_0}\,A_j(n/k)}{\rho_j\,\log d_n}(1 + o(1)).
\end{equation*}
Observe that
\begin{equation*}
  \sqrt{k_0}\,A_j(n/k)
  = \sqrt{k_0/k} \cdot \sqrt{k}\,A_j(n/k)
  = o(1) \cdot O(1) = o(1),
\end{equation*}
since $k_0/k = \beta_n/\alpha_n \to 0$ and
$\sqrt{k}\,A_j(n/k) \to \lambda_j \in \mathbb{R}$ by assumption.
Therefore
\begin{equation*}
  \text{Term III}
  = \frac{o(1)}{\rho_j\,\log d_n}(1 + o(1))
  = o(1) \cdot O(1 / \log d_n)
  = o_p(1).
\end{equation*}

\paragraph{Step 4: Synthesis.}
Combining Steps~1--3,
\begin{equation*}
  \frac{\sqrt{k_0}}{\log d_n}
    \left(\frac{\widehat{Q}_j^{\mathrm{mult}}(1-\tau_n)}{q_j(1-\tau_n)} - 1\right)
  = \sqrt{k_0}\,(\widehat{\gamma}_j^{F} - \gamma_j) + o_p(1),
\end{equation*}
which is the desired statement.
\qedhere
\end{proof}

The following theorem is the analogue of
Theorem~\ref{thm:extremal_qte_asymptotic} for
$\widehat{\Delta}^{\mathrm{mult}}(1-\tau_n)$, with the normalising
factor
\begin{equation*}
  \widehat{\phi}_n^{\mathrm{mult}}
  := \frac{\sqrt{k_0}}
         {\log(\alpha_n/\tau_n) \cdot
          \max\bigl\{\widehat{Q}_1^{\mathrm{mult}}(1-\tau_n),\,
                 \widehat{Q}_0^{\mathrm{mult}}(1-\tau_n)\bigr\}}.
\end{equation*}

\begin{theorem}
\label{thm:mult_qte_asymptotic}
Let Assumptions~\ref{asm:causal_identification}--\ref{asm:tail_comparability}
hold. As $n \to +\infty$, suppose $\alpha_n \to 0$ with
$k = n\alpha_n \to +\infty$, $\beta_n \to 0$ with
$k_0 = n\beta_n \to +\infty$, and $\beta_n / \alpha_n \to 0$.
If for each $j \in \{0, 1\}$,
$\sqrt{k}\,A_j(n/k) \to \lambda_j \in \mathbb{R}$,
$(k_0 / k)^{\gamma_j} = o(k_0^{-1/2})$,
$A_{n,j} = o_p(k_0^{-1/2})$ and $\log(\alpha_n / \tau_n) = o(k_0^{1/2})$, then
\begin{equation*}
  \widehat{\phi}_n^{\mathrm{mult}}
  \bigl(\widehat{\Delta}^{\mathrm{mult}}(1-\tau_n) - \Delta(1-\tau_n)\bigr)
  \xrightarrow{d} \mathcal{N}(0, \sigma^2),
\end{equation*}
where
\begin{equation*}
  \sigma^2
  = \min\{1, \kappa\}^2 \gamma_1^2 a_1^2
    + \min\{1, 1/\kappa\}^2 \gamma_0^2 a_0^2.
\end{equation*}
\end{theorem}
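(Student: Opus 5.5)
The proof will mirror the proof of Theorem~\ref{thm:extremal_qte_asymptotic} almost line by line. The only change is that Lemma~\ref{lem:mult_linearization} replaces Lemma~\ref{lem:extreme_qte_linearization}, and $e_n=\beta_n/\tau_n$ is replaced by $d_n=\alpha_n/\tau_n$ throughout.

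\textbf{Step 1 (linearise each arm).} Set $S_j:=\sqrt{k_0}\,(\widehat{\gamma}_j^{F}-\gamma_j)$ for $j\in\{0,1\}$. Lemma~\ref{lem:mult_linearization} applies because its hypotheses are exactly those of the theorem. It gives
\begin{equation*}
  \widehat{Q}_j^{\mathrm{mult}}(1-\tau_n)-q_j(1-\tau_n)
  =\frac{q_j(1-\tau_n)\log d_n}{\sqrt{k_0}}\bigl(S_j+o_p(1)\bigr),
\end{equation*}
and also $\widehat{Q}_j^{\mathrm{mult}}/q_j\xrightarrow{p}1$. Subtracting the two arms writes $\widehat{\Delta}^{\mathrm{mult}}-\Delta$ as a combination of $S_1$ and $S_0$ with deterministic weights $q_j\log d_n/\sqrt{k_0}$.

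\textbf{Step 2 (population normaliser).} Introduce
\begin{equation*}
  \phi_n^{\mathrm{mult}}:=\frac{\sqrt{k_0}}{\log d_n\cdot\max\{q_1(1-\tau_n),q_0(1-\tau_n)\}}.
\end{equation*}
Multiplying the Step~1 display by it gives
\begin{equation*}
  \phi_n^{\mathrm{mult}}\bigl(\widehat{\Delta}^{\mathrm{mult}}-\Delta\bigr)
  =\frac{q_1}{\max\{q_1,q_0\}}S_1-\frac{q_0}{\max\{q_1,q_0\}}S_0+o_p(1).
\end{equation*}
Since the weights lie in $[0,1]$, each $o_p(1)$ stays $o_p(1)$.

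By Assumption~\ref{asm:tail_comparability}, the weights converge to $\min\{1,\kappa\}$ and $\min\{1,1/\kappa\}$. This also covers the boundary cases $\kappa\in\{0,+\infty\}$.

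\textbf{Step 3 (estimated normaliser).} From $\widehat{Q}_j^{\mathrm{mult}}/q_j\xrightarrow{p}1$ we get
\begin{equation*}
  \frac{\max\{\widehat{Q}_1^{\mathrm{mult}},\widehat{Q}_0^{\mathrm{mult}}\}}{\max\{q_1,q_0\}}=1+o_p(1).
\end{equation*}
Hence $\widehat{\phi}_n^{\mathrm{mult}}=\phi_n^{\mathrm{mult}}(1+o_p(1))$, and the $(1+o_p(1))$ factor can be absorbed because the Step~2 right-hand side is $O_p(1)$.

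\textbf{Step 4 (limit).} Theorem~\ref{thm:causal_fraga_joint_clt} gives joint asymptotic normality of $(S_1,S_0)$ with diagonal covariance. The continuous mapping theorem and Slutsky's theorem (Theorems~\ref{thm:cmt} and~\ref{thm:slutsky}), applied to a linear combination with convergent deterministic coefficients, then yield $\mathcal{N}(0,\sigma^2)$ with $\sigma^2=\min\{1,\kappa\}^2\gamma_1^2a_1^2+\min\{1,1/\kappa\}^2\gamma_0^2a_0^2$.

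The main point needing care is Step~1. One must check that the linearisation remainder from Lemma~\ref{lem:mult_linearization} remains $o_p(1)$ after multiplication by the ratio weights; this holds because the weights are bounded by one. One should also confirm that the rate condition $\log(\alpha_n/\tau_n)=o(k_0^{1/2})$, rather than the $\beta_n$ version, is the one actually needed. Everything else is identical to the difference-extrapolation case.
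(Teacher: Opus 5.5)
Your proposal is correct and follows the paper's proof essentially step for step. Like the paper, you linearise each arm via Lemma~\ref{lem:mult_linearization}, pass through the population normaliser $\phi_n^{\mathrm{mult}}$, replace it by $\widehat{\phi}_n^{\mathrm{mult}}$ using $\widehat{Q}_j^{\mathrm{mult}}/q_j\xrightarrow{p}1$, and conclude with Theorem~\ref{thm:causal_fraga_joint_clt} and Slutsky's theorem. Your remarks on the weights being bounded by one and on the boundary cases $\kappa\in\{0,+\infty\}$ only make explicit what the paper leaves implicit.
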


\begin{proof}
Recall the notation $d_n := \alpha_n / \tau_n$ from the proof of
Lemma~\ref{lem:mult_linearization}.  By that lemma, for each
$j \in \{0, 1\}$, defining
$S_j := \sqrt{k_0}\,(\widehat{\gamma}_j^{F} - \gamma_j)$,
we have
\begin{equation*}
  \frac{\sqrt{k_0}}{\log d_n}
  \left( \frac{\widehat{Q}_j^{\mathrm{mult}}(1-\tau_n)}{q_j(1-\tau_n)} - 1 \right)
  = S_j + o_p(1).
\end{equation*}
By Theorem~\ref{thm:causal_fraga_joint_clt}, $(S_1, S_0)$ are jointly
asymptotically normal with covariance matrix
$\mathrm{diag}(\gamma_1^2 a_1^2, \gamma_0^2 a_0^2)$ (asymptotically
independent).

Expanding $\widehat{\Delta}^{\mathrm{mult}}(1-\tau_n) - \Delta(1-\tau_n) =
(\widehat{Q}_1^{\mathrm{mult}} - q_1) - (\widehat{Q}_0^{\mathrm{mult}} - q_0)$
and applying Lemma~\ref{lem:mult_linearization} to each arm yields
\begin{align*}
  \widehat{\Delta}^{\mathrm{mult}}(1-\tau_n) - \Delta(1-\tau_n)
  &= \frac{q_1 \log d_n}{\sqrt{k_0}}
     \cdot \frac{\sqrt{k_0}}{\log d_n}
     \left( \frac{\widehat{Q}_1^{\mathrm{mult}}}{q_1} - 1 \right)
     - \frac{q_0 \log d_n}{\sqrt{k_0}}
     \cdot \frac{\sqrt{k_0}}{\log d_n}
     \left( \frac{\widehat{Q}_0^{\mathrm{mult}}}{q_0} - 1 \right) \\
  &= \frac{q_1 \log d_n}{\sqrt{k_0}}\bigl(S_1 + o_p(1)\bigr)
     - \frac{q_0 \log d_n}{\sqrt{k_0}}\bigl(S_0 + o_p(1)\bigr).
\end{align*}

Introduce the auxiliary population normalising factor
\begin{equation*}
  \phi_n^{\mathrm{mult}}
  := \frac{\sqrt{k_0}}
         {\log d_n \cdot \max\bigl\{q_1(1-\tau_n),\, q_0(1-\tau_n)\bigr\}}.
\end{equation*}
Multiplying the previous display by $\phi_n^{\mathrm{mult}}$ gives
\begin{equation*}
  \phi_n^{\mathrm{mult}}\bigl(\widehat{\Delta}^{\mathrm{mult}} - \Delta\bigr)
  = \frac{q_1}{\max\{q_1, q_0\}}\,S_1
    - \frac{q_0}{\max\{q_1, q_0\}}\,S_0
    + o_p(1).
\end{equation*}

By Lemma~\ref{lem:mult_linearization}, $\widehat{Q}_j^{\mathrm{mult}} / q_j
\xrightarrow{p} 1$, so that
\begin{equation*}
  \frac{\max\{\widehat{Q}_1^{\mathrm{mult}}, \widehat{Q}_0^{\mathrm{mult}}\}}
       {\max\{q_1, q_0\}}
  = 1 + o_p(1).
\end{equation*}
Therefore
\begin{equation*}
  \widehat{\phi}_n^{\mathrm{mult}}
  = \phi_n^{\mathrm{mult}} \cdot \bigl(1 + o_p(1)\bigr)^{-1}
  = \phi_n^{\mathrm{mult}}\bigl(1 + o_p(1)\bigr).
\end{equation*}

Multiplying both sides by $\widehat{\phi}_n^{\mathrm{mult}}$ gives
\begin{align*}
  \widehat{\phi}_n^{\mathrm{mult}}\bigl(\widehat{\Delta}^{\mathrm{mult}} - \Delta\bigr)
  &= \bigl(1 + o_p(1)\bigr)
     \left( \frac{q_1}{\max\{q_1, q_0\}}\,S_1
          - \frac{q_0}{\max\{q_1, q_0\}}\,S_0
          + o_p(1) \right) \\
  &= \frac{q_1}{\max\{q_1, q_0\}}\,S_1
     - \frac{q_0}{\max\{q_1, q_0\}}\,S_0
     + o_p(1).
\end{align*}

Taking $n \to \infty$ and using $q_1 / q_0 \to \kappa$, we obtain
\begin{equation*}
  \frac{q_1}{\max\{q_1, q_0\}}
  \to \min\{1, \kappa\}, \qquad
  \frac{q_0}{\max\{q_1, q_0\}}
  \to \min\{1, 1/\kappa\}.
\end{equation*}

By Slutsky's theorem (Theorem~\ref{thm:slutsky}) and the asymptotic
independence of $S_1$ and $S_0$,
\begin{equation*}
  \widehat{\phi}_n^{\mathrm{mult}}
  \bigl(\widehat{\Delta}^{\mathrm{mult}}(1-\tau_n) - \Delta(1-\tau_n)\bigr)
  \xrightarrow{d}
  \mathcal{N}\!\bigl(0,\,
    \min\{1, \kappa\}^2 \gamma_1^2 a_1^2
    + \min\{1, 1/\kappa\}^2 \gamma_0^2 a_0^2\bigr).
\end{equation*}
This completes the proof.
\qedhere
\end{proof}

The variance estimator $\widehat{\sigma}^{2}$ defined in
Equation~\eqref{eq:sigma2_estimator} does not depend on the choice of
extrapolation method (it is built from the Fraga estimator
$\widehat{\gamma}_j^{F}$ and the auxiliary thresholds only), so its
consistency, Theorem~\ref{thm:sigma2_consistency}, applies verbatim to
the multiplicative extrapolation QTE estimator.  We therefore refer to
Section~\ref{sec:sigma2_consistency_proof} for the proof, and state here only the resulting
inference statement.

Combining Theorem~\ref{thm:mult_qte_asymptotic} with
Theorem~\ref{thm:sigma2_consistency} yields
\begin{equation*}
  \frac{\widehat{\phi}_n^{\mathrm{mult}}
         \bigl(\widehat{\Delta}^{\mathrm{mult}}(1-\tau_n)
              - \Delta(1-\tau_n)\bigr)}
       {\widehat{\sigma}}
  \xrightarrow{d} \mathcal{N}(0, 1).
\end{equation*}
Hence an asymptotic $(1-\alpha)$-level confidence interval for the
extremal quantile treatment effect $\Delta(1-\tau_n)$ based on the
multiplicative extrapolation is
\begin{equation*}
  \bigl[\,
    \widehat{\Delta}^{\mathrm{mult}}(1-\tau_n)
    - z_{1-\alpha/2}\,\frac{\widehat{\sigma}}{\widehat{\phi}_n^{\mathrm{mult}}},
    \;
    \widehat{\Delta}^{\mathrm{mult}}(1-\tau_n)
    + z_{1-\alpha/2}\,\frac{\widehat{\sigma}}{\widehat{\phi}_n^{\mathrm{mult}}}
  \,\bigr],
\end{equation*}
where $z_{1-\alpha/2}$ denotes the $(1-\alpha/2)$-quantile of the
standard normal distribution.

\numberwithin{equation}{section}
\numberwithin{theorem}{section}
\numberwithin{definition}{section}
\numberwithin{lemma}{section}
\numberwithin{proposition}{section}
\numberwithin{corollary}{section}
\numberwithin{assumption}{section}

\section{Additional Simulation Results}
\label{sec:additional_sim_results}

\subsection{Squared Errors of QTE estimations under the Location Shift}
\label{subsec:sqerr_location_shift}

Figures~\ref{fig:qte_shift_sqerr_H1}--\ref{fig:qte_shift_sqerr_H3}
report the squared errors of the QTE estimations for the three models.
For an extreme value index estimator
with a large variance, such as the causal Fraga estimator, the
difference extrapolation scheme is advantageous compared with the
multiplicative extrapolation scheme.  For an extreme value index
estimator whose variance is inherently small, such as the causal Hill
estimator, the difference extrapolation scheme is slightly inferior to
the multiplicative extrapolation scheme, because of the additional
variance introduced by estimating the intermediate quantile at the level
$\beta_n$.

\begin{figure}[t]
  \centering
  \includegraphics[width=\linewidth]{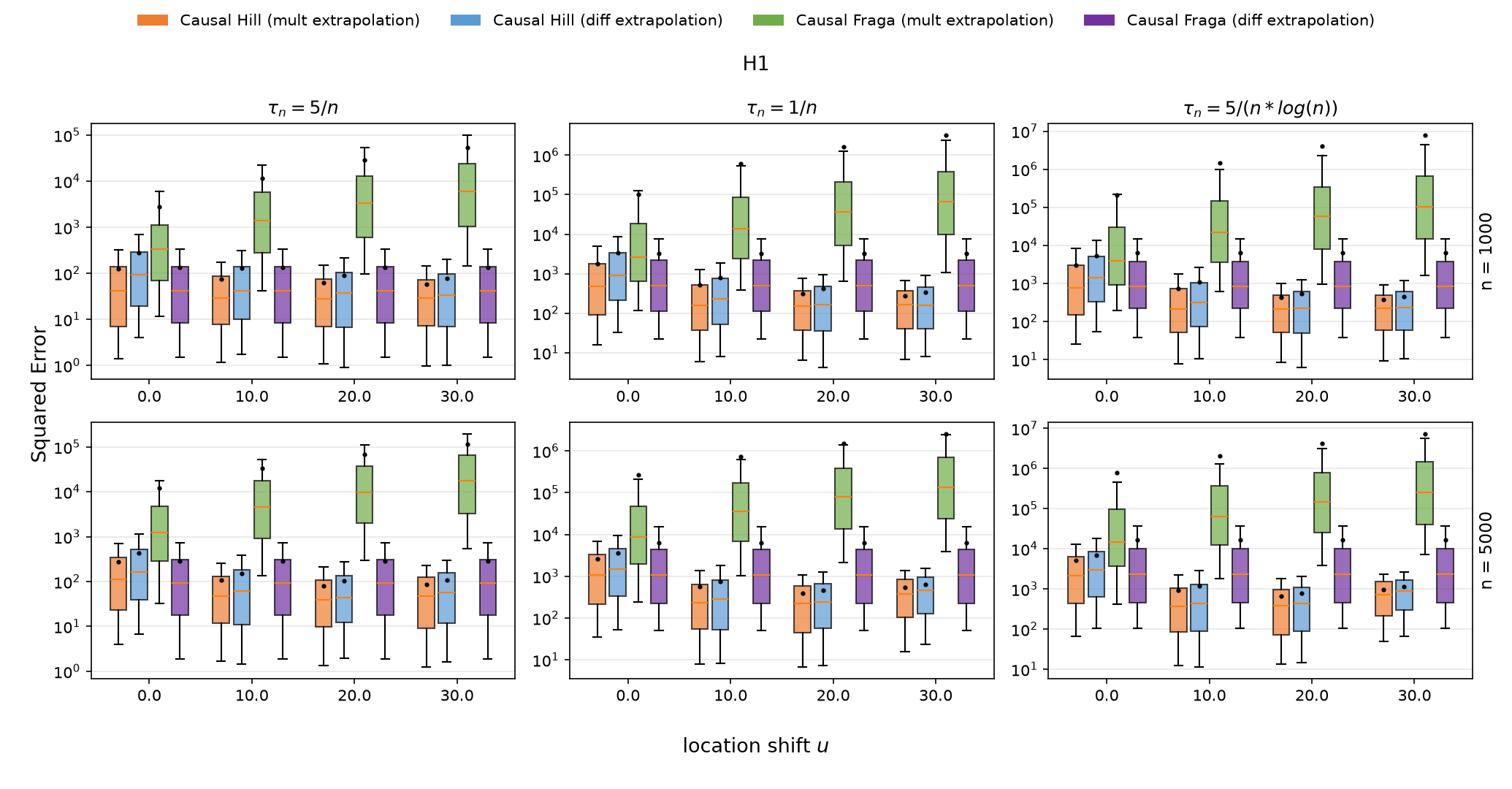}
  \caption{Squared errors of the quantile treatment effect
  estimations under the location shift $u$ for model $H_1$.  The boxplot
  whiskers correspond to the $0.1$ and $0.9$ quantiles, the orange line is
  the median, and the black dots are the means.}
  \label{fig:qte_shift_sqerr_H1}
\end{figure}

\begin{figure}[t]
  \centering
  \includegraphics[width=\linewidth]{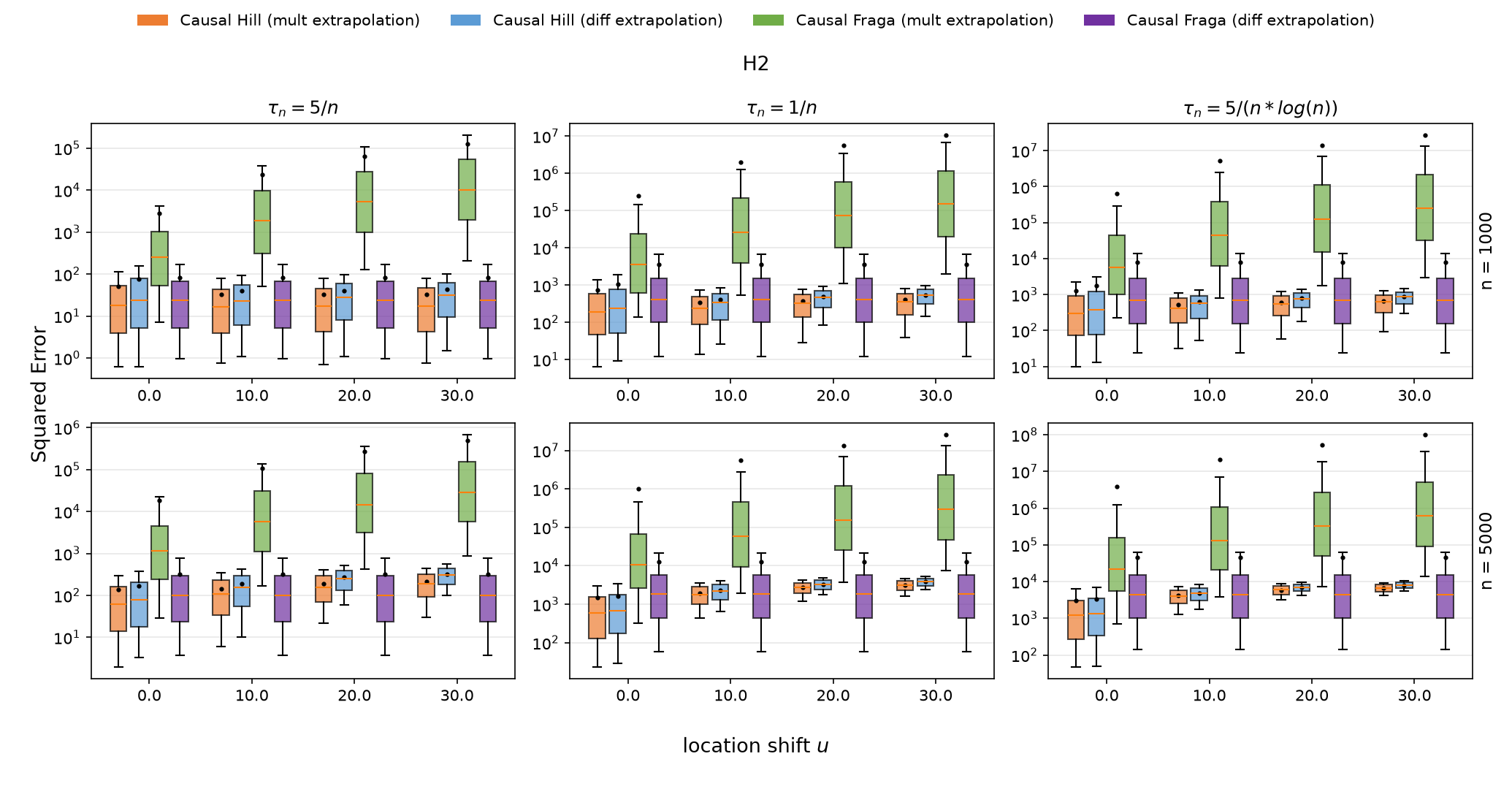}
  \caption{Squared errors of the quantile treatment effect
  estimations under the location shift $u$ for model $H_2$.  The boxplot
  whiskers correspond to the $0.1$ and $0.9$ quantiles, the orange line is
  the median, and the black dots are the means.}
  \label{fig:qte_shift_sqerr_H2}
\end{figure}

\begin{figure}[t]
  \centering
  \includegraphics[width=\linewidth]{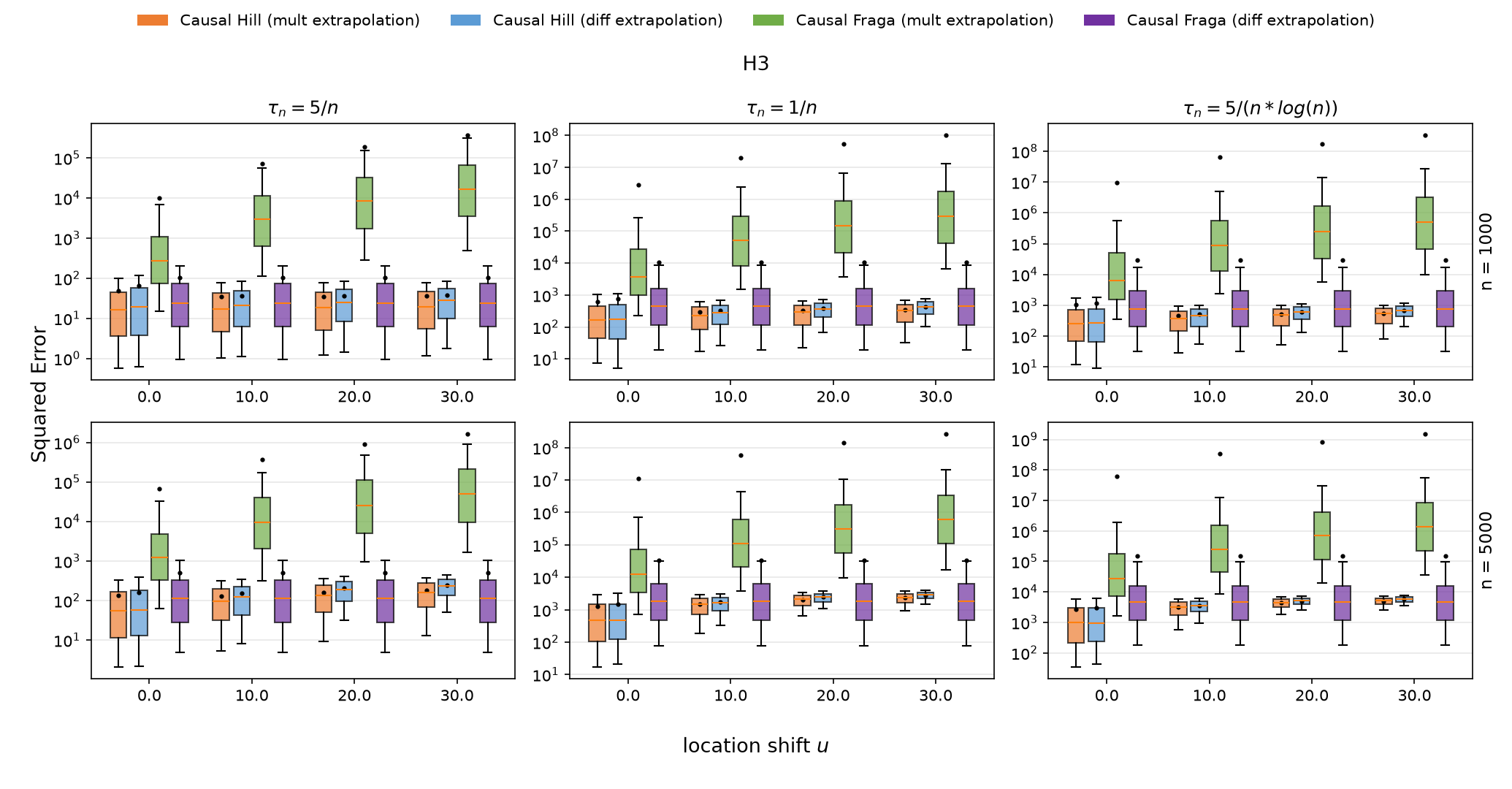}
  \caption{Squared errors of the quantile treatment effect
  estimations under the location shift $u$ for model $H_3$.  The boxplot
  whiskers correspond to the $0.1$ and $0.9$ quantiles, the orange line is
  the median, and the black dots are the means.}
  \label{fig:qte_shift_sqerr_H3}
\end{figure}

\subsection{Mean Squared Errors with respect to $k$}
\label{subsec:mse_k}

\begin{figure}[t]
  \centering
  \includegraphics[width=\linewidth]{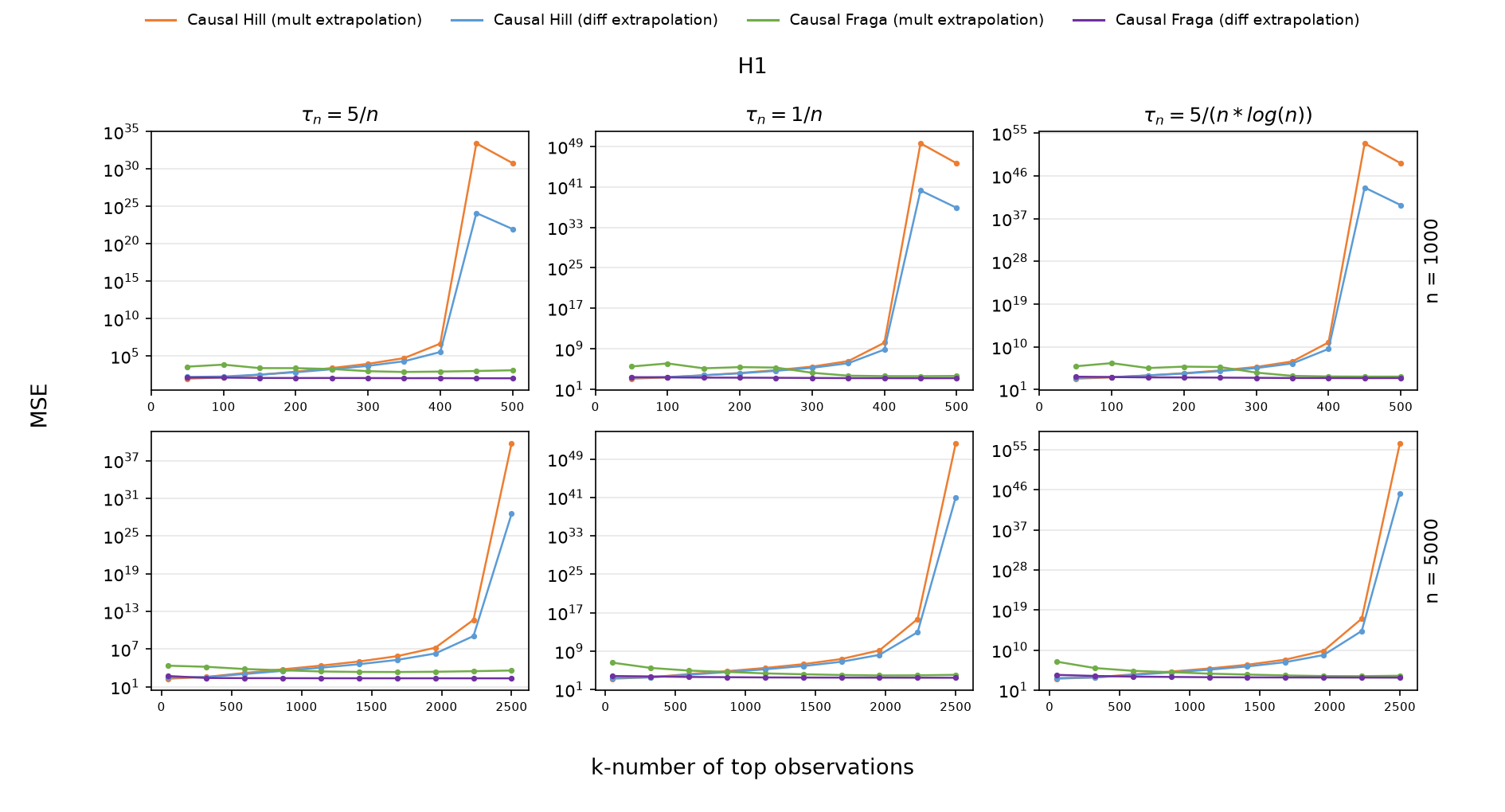}
  \caption{Mean squared errors of the quantile treatment effect
  estimations as a function of the threshold parameter $k$ for model
  $H_1$.}
  \label{fig:qte_k_mse_H1}
\end{figure}

\begin{figure}[t]
  \centering
  \includegraphics[width=\linewidth]{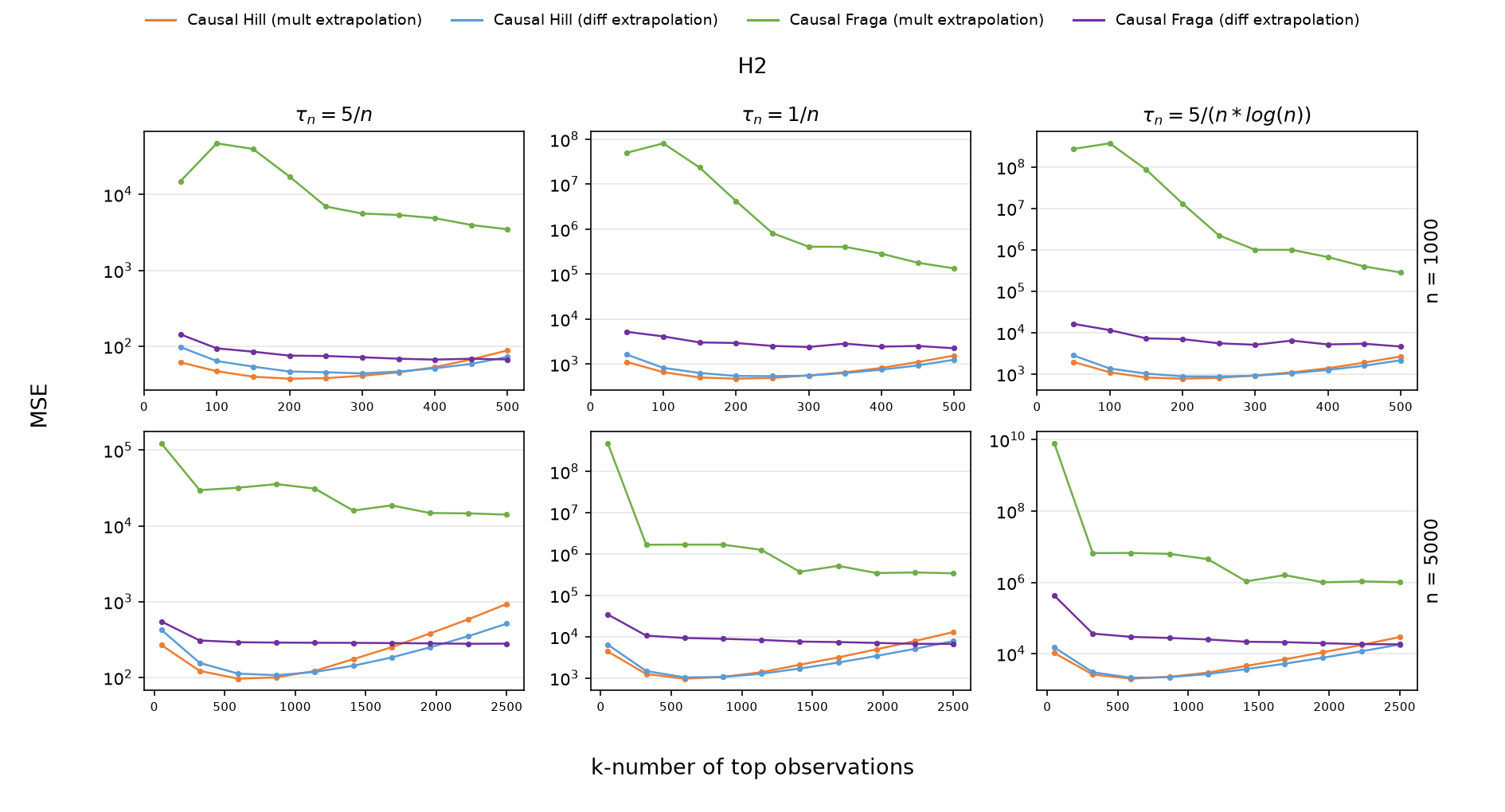}
  \caption{Mean squared errors of the quantile treatment effect
  estimations as a function of the threshold parameter $k$ for model
  $H_2$.}
  \label{fig:qte_k_mse_H2}
\end{figure}

\begin{figure}[t]
  \centering
  \includegraphics[width=\linewidth]{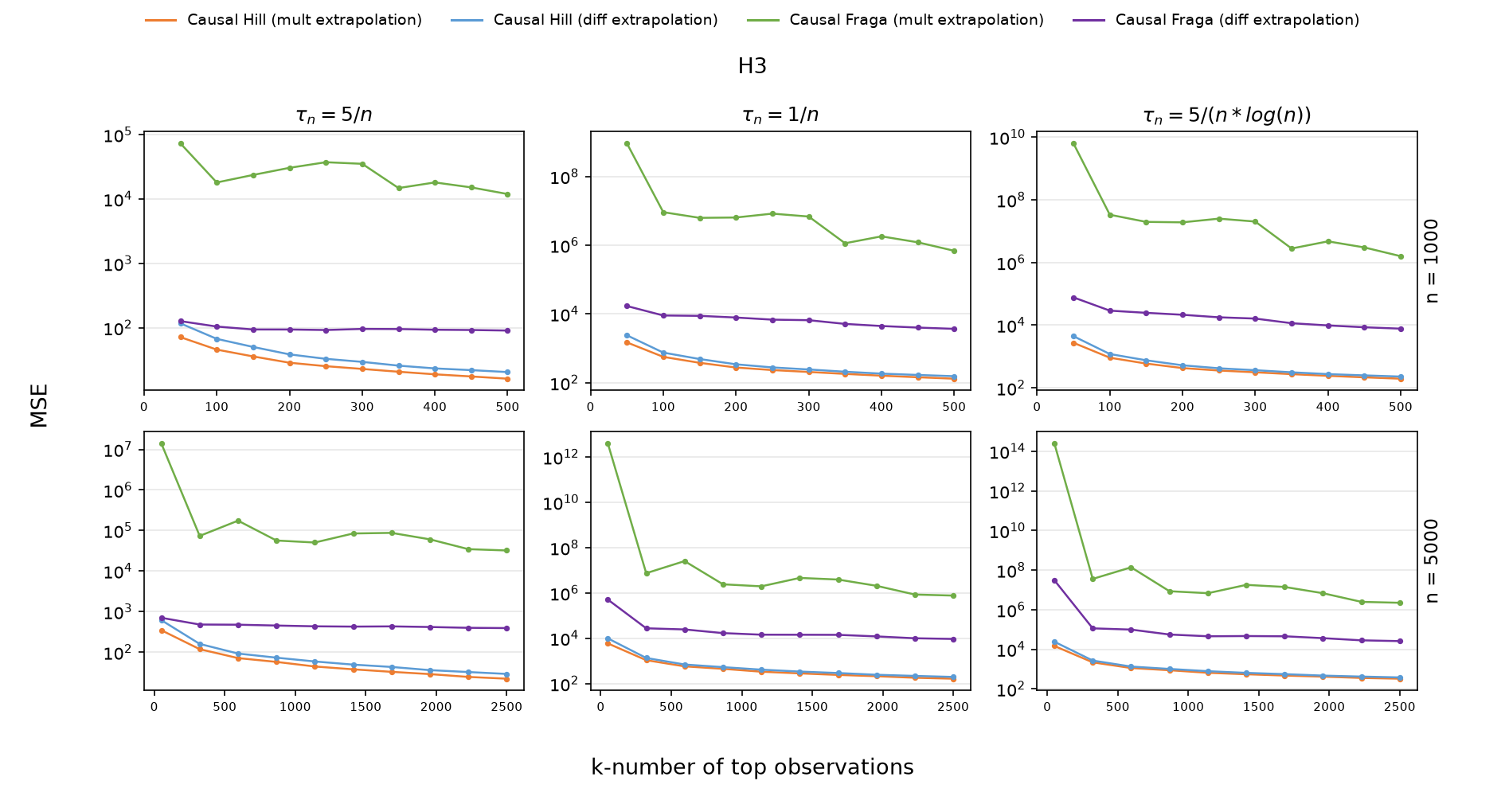}
  \caption{Mean squared errors of the quantile treatment effect
  estimations as a function of the threshold parameter $k$ for model
  $H_3$.}
  \label{fig:qte_k_mse_H3}
\end{figure}

Figures~\ref{fig:qte_k_mse_H1}--\ref{fig:qte_k_mse_H3} confirm the
patterns observed in
Figures~\ref{fig:qte_k_mean_H1}--\ref{fig:qte_k_mean_H3}.  The QTE
estimator that combines difference extrapolation with the causal Fraga
estimator attains a low and stable mean squared
error across the three models and the whole range of $k$.  The two extrapolation variants based
on the causal Hill estimator display large and erratic mean squared
errors under $H_1$ and $H_2$, in line with the instability of the Hill
extreme value index estimator in those settings (see the discussion in
Fraga Alves~\cite{fraga2001location}).  Under multiplicative
extrapolation, the high variance of the causal Fraga extreme value
index estimator, amplified by small values of $k$, inflates both the
variance and the bias component of the mean squared error, which
explains the large mean squared errors observed under $H_2$ and $H_3$.

\end{document}